\useunder{\uline}{\ul}{}
\newtheorem*{definition}{Definition}
\newtheorem{theorem}{Theorem}[section]
\newtheorem{corollary}{Corollary}[theorem]
\newtheorem{lemma}[theorem]{Lemma}
\newcommand{\R}{\mathbb{R}}
\newcommand{\N}{\mathbb{N}}
\newcommand{\cA}{\mathcal{A}}
\newcommand{\cS}{\mathcal{S}}
\newcommand{\cF}{\mathcal{F}}
\newcommand{\cM}{\mathcal{M}}
\newcommand{\cI}{\mathcal{I}}
\newcommand{\cG}{\mathcal{G}}
\newcommand{\cH}{\mathcal{H}}
\newcommand{\cE}{\mathcal{E}}
\newcommand{\cB}{\mathcal{B}}
\newcommand{\cD}{\mathcal{D}}
\newcommand{\cZ}{\mathcal{Z}}
\newcommand{\cL}{\mathcal{L}}
\newcommand{\cC}{\mathcal{C}}
\newcommand{\cW}{\mathcal{W}}
\newcommand{\cX}{\mathcal{X}}
\newcommand{\cR}{\mathcal{R}}
\newcommand{\aargmax}{\text{argmax}}
\newcommand{\bP}{\mathbb{P}}
\newcommand{\bE}{\mathbb{E}}
\newcommand{\ind}{\mathds{1}}
\newcommand{\kl}{\mathrm{kl}}
\newcommand{\K}{\{1,\dots,K\}}
\newcommand{\SP}{\ensuremath{\mathrm{SP}}}
\newcommand{\RBSSDA}{\ensuremath{\mathrm{RB}\text{-}\mathrm{SDA}}}
\newcommand{\SSDA}{\ensuremath{\mathrm{SDA}}}
\newcommand{\SDA}{\ensuremath{\mathrm{SDA}}}
\newcommand{\SPSSDA}{\ensuremath{\mathrm{SP}\text{-}\mathrm{SDA}}}
\newcommand{\WRSSDA}{\ensuremath{\mathrm{WR}\text{-}\mathrm{SDA}}}
\newcommand{\LBSSDA}{\ensuremath{\mathrm{LB}\text{-}\mathrm{SDA}}}
\newcommand{\LDSSSDA}{\ensuremath{\mathrm{LDS}\text{-}\mathrm{SDA}}}
\newcommand{\sampler}{independent sampler}
\newcommand{\RB}{\ensuremath{\mathrm{RB}}}
\newcommand{\WR}{\ensuremath{\mathrm{WR}}}
\let\ln\relax
\DeclareMathOperator{\ln}{log}
\let\epsilon\relax
\DeclareMathOperator{\epsilon}{\varepsilon}
\title{Sub-sampling for Efficient Non-Parametric \\ Bandit Exploration}
\author{
  Dorian Baudry \hspace{2.3cm}  Emilie Kaufmann \hspace{2.1cm} Odalric-Ambrym Maillard 
  \\
  \texttt{dorian.baudry@inria.fr} \quad\texttt{emilie.kaufmann@univ-lille.fr} \quad 
	\texttt{odalric.maillard@inria.fr} \\ \\
 \hspace{-1.2cm} Univ. Lille, CNRS, Inria, Centrale Lille, UMR 9198-CRIStAL, F-59000 Lille, France}
\begin{document}

\maketitle

\begin{abstract}

In this paper we propose the first multi-armed bandit algorithm based on \textit{re-sampling} that achieves asymptotically optimal regret simultaneously for different families of arms (namely Bernoulli, Gaussian and Poisson distributions). Unlike Thompson Sampling which requires to specify a different prior to be optimal in each case, our proposal RB-SDA does not need any distribution-dependent tuning. RB-SDA belongs to the family of Sub-sampling Duelling Algorithms (\SDA) which combines the \emph{sub-sampling} idea first used by the BESA \cite{BESA} and SSMC \cite{SSMC} algorithms with different sub-sampling schemes. In particular, RB-SDA uses \textit{Random Block} sampling. We perform an experimental study assessing the flexibility and robustness of this promising novel approach for exploration in bandit models. 
\end{abstract}

\section{Introduction}

A $K$-armed bandit problem is a sequential decision-making problem in which a learner sequentially samples from $K$ unknown distributions called arms. In each round the learner chooses an arm $A_t \in \{1,\dots,K\}$ and obtains a random reward $X_t$ drawn from the distribution of the chosen arm, that has mean $\mu_{A_t}$. The learner should adjust her sequential sampling strategy $\cA = (A_t)_{t\in\N}$ (or bandit algorithm) in order to maximize the expected sum of rewards obtained after $T$ selections. This is equivalent to minimizing the \emph{regret}, defined as the difference between the expected total reward of an oracle strategy always selecting the arm with largest mean $\mu_\star$ and that of the algorithm:
\[\cR_T(\cA) = \mu_\star T - \bE\left[\sum_{t=1}^{T} X_t\right] = \bE\left[\sum_{t=1}^{T} (\mu_\star - \mu_{A_t})\right]. \]
An algorithm with small regret needs to balance exploration (gain information about arms that have not been sampled a lot) and exploitation (select arms that look promising based on the available information). Many approaches have been proposed to solve this exploration-exploitation dilemma (see \cite{BanditBook} for a survey), the most popular being Upper Confidence Bounds (UCB) algorithms \cite{agrawal95,auer2002finite,KL_UCB} and Thompson Sampling (TS) \cite{TS_1933,TS12AG}. TS is a randomized Bayesian algorithm that selects arms according to their posterior probability of being optimal. These algorithms enjoy logarithmic regret under some assumptions on the arms, and some of them are even \emph{asymptotically optimal} in that they attain the smallest possible asymptotic regret given by the lower bound of Lai \& Robbins \cite{LaiRobbins85}, for some parametric families of distributions. For distributions that are continuously parameterized by their means, this lower bound states that under any uniformly efficient algorithm,  
\begin{equation} \liminf_{T\rightarrow \infty} \frac{\cR_T(\cA)}{\log(T)} \geq \sum_{k : \mu_k < \mu_\star} \frac{(\mu_\star - \mu_k)}{\kl(\mu_k,\mu_\star)}.\label{eq:LRLB},\end{equation} where $\kl(\mu,\mu')$ is the Kullback-Leibler divergence between the distribution of mean $\mu$ and that of mean $\mu'$ in the considered family of distributions. 
For arms that belong to a one-parameter exponential family (e.g. Bernoulli, Gaussian, Poisson arms) kl-UCB using an \emph{appropriate divergence function} \cite{KL_UCB} and Thompson Sampling using an \emph{appropriate prior distribution} \cite{TS_Emilie,AG13Further,korda13} are both \emph{asymptotically optimal} in the sense that their regret matches that prescribed by the lower bound \eqref{eq:LRLB}, for large values of $T$. Yet, a major drawback of theses algorithms is that their optimal tuning requires the knowledge of the families of distributions they operate on. In this paper, we overcome this issue and propose an algorithm that is simultaneously asymptotically optimal for several families of distributions.

In the past years, there has been a surge of interest in the design of non-parametric algorithms that directly use the empirical distribution of the data instead of trying to fit it in an already defined model, and are therefore good candidates to meet our goal. 
In \cite{Giro}, the authors propose the General Randomized Exploration (GRE) framework in which each arm $k$ is assigned an index $\hat{\mu}_{k,t}$ sampled from a distribution $p(\cH_{k,t})$ that depends on the history of past observed rewards for this arm $\cH_{k,t}$, and the arm with largest index is selected. GRE includes Thompson Sampling (for which $p(\cH_{k,t})$ is the posterior distribution given a specified prior) but also allows for more general non-parametric re-sampling schemes. However, the authors of \cite{Giro,PHE} show that setting $p(\cH_{k,t})$ to be the non-parametric Bootstrap \cite{efron_intro_bootstrap} leads to linear regret. They propose variants called GIRO and PHE which perturb the history by augmenting it with fake samples. History perturbation was already suggested by \cite{Osband_BTS} and is also used by Reboot \cite{Reboot}, with a slightly more complicated bootstrapping scheme. Finally, the recently proposed Non Parametric TS \cite{Honda} does not use history perturbation but instead sets $\hat{\mu}_{k,t}$ as a weighted combination of all observations in $\cH_{k,t}$ and the upper bound of the support, where the weights are chosen uniformly at random in the simplex of dimension $|\cH_{k,t}|$. 

Besides Reboot \cite{Reboot}, which has been analyzed only for Gaussian distributions, all other algorithms have been analyzed for distributions with known bounded support, for which they are proved to have logarithmic regret. Among them, Non Parametric TS has strong optimality property as its regret is proved to match the lower bound of Burnetas and Katehakis \cite{burnetas96LB} for (non-parametric) distribution that are bounded in [0,1]. In this paper, we propose the first re-sampling based algorithm that is asymptotically optimal for several classes of possibly un-bounded parametric distributions. We introduce a new family of algorithms called Sub-Sampling Duelling Algorithms, and provide a regret analysis for \RBSSDA{}, an algorithm based on \emph{Random Block} sub-sampling. In Theorem~\ref{th::log_regret_sda}, we show that \RBSSDA{} has logarithmic regret under some general conditions on the arms distributions. These conditions are in particular satisfied for Gaussian, Bernoulli and Poisson distribution, for which we further prove in Corollary~\ref{cor:opt_exp} that \RBSSDA{} is asymptotically optimal.   

The general \SSDA{} framework that we introduce is inspired by two ideas first developed for the BESA algorithm by \cite{BESA} and for the SSMC algorithm by \cite{SSMC}: 1) the arms pulled are chosen according to the outcome of pairwise comparison (\emph{duels}) between arms, instead of choosing the maximum of some index computed for each arm as GRE algorithms do, and 2) the use of \textit{sub-sampling}: the algorithm penalizes arms that have been pulled a lot by making them compete with the other arms with only a fraction of their history. More  precisely, in a duel between two arms $A$ and $B$ selected $n_A$ and $n_B$ times respectively, with $n_A < n_B$, the empirical mean of arm $A$ is compared to the empirical mean of a sub-sample of size  $n_A$ of the history of arm $B$. 
In BESA the duels are organized in a tournament and only the winner is sampled, while SSMC uses rounds of $K\!-\!1$ duels between an arm called \textit{leader} and all other arms. Then the leader is pulled only if it wins all the duels, otherwise all the winning \textit{challengers} are pulled. 
Second difference is that in BESA the sub-sample of the leader's history is obtained with \textit{Sampling Without Replacement}, whereas SSMC selects this sub-sample as the block of consecutive observations with smallest empirical mean. Hence BESA uses randomization while SSMC does not. Finally, SSMC also uses some forced exploration (i.e. selects any arm drawn less than $\sqrt{\log r}$ times in round $r$). In \SSDA{}, we propose to combine the round structure for the duels used by SSMC with the use of a sub-sampling scheme assumed to be independent of the observations in the history (this generalizes the BESA duels), and we get rid of the use of forced exploration.

The rest of the paper is structured as follows. In Section~\ref{sec:SSDA} we introduce the \SSDA{} framework and present different instances that correspond to the choice of different sub-sampling algorithms, in particular \RBSSDA{}. In Section~\ref{sec:regret} we present upper bounds on the regret of \RBSSDA{}, showing in particular that the algorithm is asymptotically optimal for different exponential families. We sketch the proof of Theorem~\ref{th::log_regret_sda} in Section~\ref{sec:sketch}, highlighting two important tools: First, a new concentration lemma for random sub-samples (Lemma~\ref{concentration}). Second, an upper bound on the probability that the optimal arm is under-sampled, which decouples the properties of the sub-sampling algorithm used, and that of the arms' distributions (Lemma~\ref{lemma::decomposition}). Finally, Section~\ref{sec:expes} presents the results of an empirical study comparing several instances of \SSDA{} to asymptotically optimal parametric algorithms and other algorithms based on re-sampling or sub-sampling. These experiments reveal the robustness of the \SSDA{} approaches, which match the performance of Thompson Sampling, without exploiting the knowledge of the distribution.

\section{Sub-sampling Duelling Algorithms}\label{sec:SSDA}

In this section, we introduce the notion of Sub-sampling Duelling Algorithm (\SSDA{}). We first introduce a few notation. For every integer $n$, we let $[n] = \{1,\dots,n\}$. We denote by $(Y_{k,s})_{s\in\N}$ the i.i.d. sequence of successive rewards from arm $k$, that are i.i.d. under a distribution $\nu_k$ with mean $\mu_k$. For every finite subset $\cS$ of $\N$, we denote by $\hat Y_{k,\cS}$ the empirical mean of the observations of arm $k$ indexed by $\cS$: if $|\cS| > 1$, $\hat Y_{k,\cS} := \tfrac{1}{|\cS|}\sum_{i \in \cS}Y_{k,i}.$
We also let $\hat Y_{k,n}$ as a shorthand notation  for $\hat Y_{k,[n]}$.

\paragraph{A round-based algorithm} Unlike index policies, a \SSDA{} algorithm relies on \emph{rounds}, in which several arms can be played (at most once). In each round $r$ the learner selects a subset of arms $\mathcal{A}_r=\{k_1,..., k_{i_r} \} \subseteq \K$, and receives the rewards $\cX_r= \{Y_{k_1, N_{k_1}(r)},..., Y_{k_{i_r}, N_{k_{i_r}}(r)} \}$ associated to the chosen arms, where $N_{k}(r):=\sum_{s=1}^{r} \ind(k \in \mathcal{A}_s)$ denotes the number of times arm $k$ was selected up to round $r$. 
Letting $\hat{r}_T \leq T$ be the (random) number of rounds used by algorithm $\cA$ before the $T$-th arm selection, the regret of a round-based algorithm can be upper bounded as follows:
	\begin{eqnarray}\cR_T(\cA) &=& \bE\left[\sum_{t=1}^{T} (\mu_\star - \mu_{A_t})\right] \leq \bE\left[\sum_{s=1}^{\hat{r}_T} \sum_{k=1}^{K}(\mu_\star - \mu_{k}) \ind(k \in \cA_s)\right] \nonumber\\ 
	&\leq& \bE\left[\sum_{s=1}^{T} \sum_{k=1}^{K}(\mu_\star - \mu_{k}) \ind(k \in \cA_s)\right]  =  \sum_{k = 1}^{K}(\mu_\star - \mu_{k}) \bE\left[N_k(T)\right]\,.\label{pullstoregret}\end{eqnarray}
Hence upper bounding $\bE[N_k(T)]$ for each sub-optimal arm provides a regret upper bound. 

\paragraph{Sub-sampling Duelling Algorithms} A \SSDA{} algorithm takes as input a \emph{sub-sampling algorithm} $\mathrm{SP}(m,n,r)$ that depends on three parameters: two integers $m \geq n$ and a round $r$. A call to $\mathrm{SP}(m,n,r)$ at round $r$ produces a subset of $[m]$ that has size $n$, modeled as a random variable that is further assumed to be independent of the rewards generated from the arms, $(Y_{k,s})_{k \in [K], s \in \N^*}$. 

In the first round, a \SSDA{} algorithm selects $\cA_1 = [K]$ in order to initialize the history of all arms. For $r \geq 1$, at round $r+1$, a \SSDA{} algorithm based on a sampler \SP{}, that we refer to as \SPSSDA{}, first computes the \emph{leader}, defined as the arm being selected the most in the first $r$ round: $\ell(r) = \aargmax_{k} N_k(r)$. Ties are broken in favor of the arm with the largest mean, and if several arms share this mean then the previous leader is kept or one of these arms is chosen randomly. Then the set $\mathcal{A}_{r+1}$ is initialized to the empty set and $K-1$ \emph{duels} are performed. For each "challenger" arm $k\neq\ell(r)$, a subset $\hat\cS^r_k$ of $[N_{\ell(r)}(r)]$ of size $N_k(r)$ is obtained from $\mathrm{SP}(N_{\ell(r)}(r),N_{k}(r),r)$ and arm $k$ wins the duels if its empirical mean is larger than the empirical mean of the sub-sampled history of the leader. That is 
\[\hat{Y}_{k,N_k(r)} > \hat{Y}_{\ell(r),\hat{\cS}_k^r} \ \ \Longrightarrow \ \ \ \cA_{r+1} = \cA_{r+1} \cup \{k\}\,.\]
If the leader wins all the duels, that is if $\cA_{r+1}$ is still empty after the $K-1$ duels, we set $\cA_{r+1} = \{\ell(r)\}$. Arms in $\cA_{r+1}$ are then selected by the learner in a random order and are pulled if the total budget of pulls remains smaller than $T$. The pseudo-code of \SPSSDA{} is given in Algorithm~\ref{alg::algo_1}. 

\begin{algorithm}[t]
	\caption{\SPSSDA{}\label{alg::algo_1}}
	{\small
	\begin{algorithmic}
		\REQUIRE K arms, horizon T, Sampler \SP{}
		\STATE $t\leftarrow K$, $r\leftarrow 1$, $\forall k, N_k\leftarrow 1, \cH_k\leftarrow \{Y_{k, 1}\}$ (Each arm is drawn once)
		\WHILE{$t<T$}
		\STATE $r \leftarrow r+1$, $\cA \leftarrow \{ \}$, $\ell \leftarrow \text{leader}(N, \cH, \ell)$ (Initialize the round)
		\FOR{$k \neq \ell \in 1,...,K$}
		\STATE Draw $\hat S_k^r \sim \text{SP}(N_\ell, N_k,r)$ (Choice of the sub-sample of $\ell$ used for the duel with $k$)
		\IF{$\hat{Y}_{k, N_k} > \hat Y_{\ell, \hat{S}_{k}^{r}}$}
		\STATE $\cA \leftarrow \cA \cup \{k\}$ (Duel outcome)
		\ENDIF
		\ENDFOR
		\IF{$|\cA|=0$}
		\STATE $\cA \leftarrow \{\ell\}$ 
		\ENDIF 
		\IF{$|\cA|>T-t$}
		\STATE $\cA \leftarrow \text{choose}(\cA, T-t)$ (Randomly selects a number of arm that does not exceed the budget)
		\ENDIF 
		\FOR{$a \in \cA$}
		\STATE Pull arm $a$, observe reward $Y_{a, N_a+1}$ 
		\STATE $t \leftarrow t+1$, $N_a \leftarrow N_a+1$, $\cH_a \leftarrow \cH_a \cup \{Y_{a, N_a} \}$ (Update step)
		\ENDFOR
		\ENDWHILE
	\end{algorithmic}}
\end{algorithm}

To properly define the random variable $\hat{S}^r_k$ used in the algorithm, we introduce the following probabilistic modeling: for each round $r$, each arm $k$, we define a family $(S_k^r(m,n))_{m \geq n}$ of independent random variables such that $S_k^r(m,n) \sim \text{SP}(m,n,r)$. In words, $S_k^r(m,n)$ is the subset of the leader history used  should arm $k$ be a challenger drawn $n$ times up to round $r$ dueling against a leader that has been drawn $m$ times. With this notation, for each arm $k \neq \ell(r)$ one has  $\hat{S}_k^r = S_k^r\left(N_{\ell(r)}(r),N_k(r), r\right)$. We recall that in the \SSDA{} framework, it is crucial that those random variables are independent from the reward streams $(Y_{k,s})$ of all arms $k$. We call such sub-sampling algorithms \textit{\sampler}.

\paragraph{Particular instances} We now present a few sub-sampling algorithms that we believe are interesting to use within the \SSDA{} framework. Intuitively, these algorithms should ensure enough \emph{diversity} in the output subsets when called in different rounds, so that the leader cannot always look good, and challengers may win and be explored from time to time. The most intuitive candidates are random samplers like \textit{Sampling Without Replacement} (WR) and \textit{Random Block Sampling} (RB): the first one returns a subset of size $n$ selected uniformly at random in $[m]$, while the second draws an element $n_0$ uniformly at random in $[m-n]$ and returns $\{n_0+1,...,n_0+n\}$. But we also propose two deterministic sub-sampling: \textit{Last Block} (LB) which returns $\{m-n+1,...,m\}$, and \textit{Low Discrepancy Sampling} (LDS) that is similar to RB with the first element $n_0$ of the block at a round $r$ defined as $\lceil u_r (m-n) \rceil$ with $u_r$ a predefined low discrepancy sequence \cite{drmota_discrepancy} (Halton \cite{HALTON1964}, Sobol \cite{SOBOL1967}). We believe that these last two samplers may ensure enough diversity without the need for random sampling. These four variants of \SSDA{} will be compared in Section~\ref{sec:expes} in terms of empirical efficiency and numerical complexity. For \RBSSDA{}, we provide a regret analysis in the next sections, highlighting what parts may or may not be extended to other sampling algorithms.

\paragraph{Links with existing algorithms} The BESA algorithm \cite{BESA} with $K=2$ coincides with \WRSSDA{}. However beyond $K>2$, the authors of \cite{BESA} rather suggest a tournament approach, without giving a regret analysis. \WRSSDA{} can therefore be seen as an alternative generalization of BESA beyond 2 arms, which performs much better than the tournament, as can be seen in Section~\ref{sec:expes}. While the structure of SSDA is close to that of SSMC \cite{SSMC}, SSMC is not a \SPSSDA{} algorithm, as its sub-sampling algorithm heavily relies on the rewards, and is therefore not an \sampler. Indeed, it outputs the set $\cS=\{n_0+1, \dots,n_0+ n\}$ for which $\hat{Y}_{\ell(r),\cS}$ is the smallest. The philosophy of SSMC is a bit different than that of SSDA: while the former tries to disadvantage the leader as much as possible, the latter only tries to make the leader use different parts of its history. Our experiments reveal that the SSMC approach seems to lead to a slightly larger regret, due to a bit more exploration in the beginning. Finally, we emphasize that alternative algorithms based on re-sampling (PHE, Reboot, Non-Parametric TS) are fundamentally different to \SSDA{} as they do not perform \emph{sub-sampling}. 

\paragraph{On the use of forced exploration} In\cite{SSMC}, SSMC additionally requires some \textit{forced exploration}: each arm $k$ such that $N_k(r)$ is smaller than some value $f_r$ is added to $\cA_{r+1}$. SSMC is proved to be asymptotically optimal for exponential families provided that $f_r=o(\log r)$ and $\log \log r = o(f_r)$. In the next section, we show that \RBSSDA{} does not need forced exploration to be asymptotically optimal for Bernoulli, Gaussian and Poisson distributions. However, we show in Appendix~\ref{app::forced_explo} that adding forced exploration to \RBSSDA{} is sufficient to prove its optimality for any exponential family.

\section{Regret Upper Bounds for \RBSSDA}\label{sec:regret}

In this section, we present upper bounds on the expected number of selections of each sub-optimal arm $k$, $\bE\left[N_k(T)\right]$, for the \RBSSDA{} algorithm. They directly yield an upper bound on the regret via~\eqref{pullstoregret}. To ease the presentation, we assume that there is a unique optimal arm\footnote{as can be seen in the analysis of SSMC \cite{SSMC}, treating the general case only requires some additional notation.}, and denote it by $1$. 

In Theorem~\ref{th::log_regret_sda}, we first identify some conditions on the arms distribution under which \RBSSDA{} has a regret that is provably logarithmic in $T$. In order to introduce these conditions, we recall the definition of the following \textit{balance function}, first introduced by \cite{BESA}. $\alpha_k(M,j)$ is equal to the probability that arm $1$ loses a certain amount $M$ of successive duels against $M$ sub-samples from arm $k$ that have non-overlapping support, when arm $1$ has been sampled $j$ times. 

\begin{definition}\label{def:balance}Letting $\nu_{k,j}$ denote the distribution of the sum of $j$ independent variables drawn from $\nu_k$, and $F_{\nu_{k,j}}$ its corresponding CDF, the balance function of arm $k$ is 

\vspace{-0.3cm}

\[\alpha_k(M,j) = \bE_{X \sim \nu_{1, j}}\left(\left(1-F_{\nu_{k,j}}(X)\right)^M \right).\]
\end{definition}

\begin{theorem}[Logarithmic Regret for \RBSSDA]
	\label{th::log_regret_sda}
	If the arms distributions $\nu_{1},\dots,\nu_{k}$ are such that 
	\begin{enumerate}
	 \item the empirical mean of each arm $k$ has exponential concentration given by a certain rate function $I_k(x)$ which is continuous and satisfies $I_k(x) = 0$ if and only if $x=\mu_k$:
	 \begin{equation*}
\forall x > \mu_k, \bP\left(\hat Y_{k,n}\geq x\right) \leq e^{-nI_k(x)} \ \text{and } \ \forall x < \mu_k, \bP\left(\hat Y_{k,n} \leq x\right) \leq e^{-nI_k(x)}\;, 
\end{equation*}
	 \item the balance function of each sub-optimal arm $k$ satisfies \[\forall \beta \in (0, 1), \ \ \sum_{t=1}^T \sum_{j=1}^{\lfloor(\log t)^2 \rfloor} \alpha_k(\left\lfloor \beta t/(\log t)^2 \right\rfloor, j) = o(\log T)\;.\]
	\end{enumerate}
Then, for all sub-optimal arm $k$, for all $\varepsilon > 0$, under \RBSSDA{} 

\vspace{-0.3cm}

\[\bE[N_k(T)] \leq \frac{1+\varepsilon}{I_k(\mu_1)} \log(T) + o(\log T)\;.\]
\end{theorem}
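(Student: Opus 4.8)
The plan is to bound $\bE[N_k(T)]$ for each sub-optimal arm $k$ through the round-based reduction \eqref{pullstoregret}, so that it suffices to control the number of rounds in which $k$ is selected. Since $N_k(r)$ increases by one exactly when $k\in\cA_{r+1}$, each value (``level'') $n$ of $N_k$ is crossed in at most one round. I would therefore fix the deterministic threshold $n_0 = \lceil (1+\varepsilon)\log(T)/I_k(\mu_1)\rceil$ and write
\[
N_k(T) \;\le\; n_0 + \sum_{r\,:\,N_k(r)\ge n_0} \ind\!\big(k\in\cA_{r+1}\big).
\]
The leading term already has the announced form, so everything reduces to showing that the remaining sum has expectation $o(\log T)$.

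To do so I would introduce the event $\cE_r$ that the optimal arm $1$ is the leader and has been drawn at least $(\log r)^2$ times, matching the truncation appearing in Condition~2, and bound $\ind(k\in\cA_{r+1})\le \ind(k\in\cA_{r+1},\cE_r)+\ind(\cE_r^c)$. On $\cE_r$, arm $k$ is a challenger with $N_k(r)=n\ge n_0$ samples, so it is pulled only if $\hat Y_{k,n} > \hat Y_{1,\hat\cS_k^r}$. A random block of consecutive i.i.d.\ rewards of arm $1$ is itself an i.i.d.\ sample of its size $n$, so Lemma~\ref{concentration} controls the lower deviation $\bP(\hat Y_{1,\hat\cS_k^r} < \mu_1-\delta)\le e^{-nI_1(\mu_1-\delta)}$, while Condition~1 controls $\bP(\hat Y_{k,n}>\mu_1-\delta)\le e^{-nI_k(\mu_1-\delta)}$. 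Using that each level $n$ is crossed at most once, the expected $\cE_r$-contribution is at most $\sum_{n\ge n_0}\big(e^{-nI_k(\mu_1-\delta)}+e^{-nI_1(\mu_1-\delta)}\big)$, a convergent geometric tail; choosing $\delta$ small and invoking the continuity of $I_k$ to absorb the gap between $I_k(\mu_1-\delta)$ and $I_k(\mu_1)$ into $\varepsilon$ makes this $o(1)$.

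It remains to handle $\sum_r \ind(\cE_r^c)$, the rounds in which arm $1$ is either not the leader or has been sampled fewer than $(\log r)^2$ times. This is exactly where I would apply Lemma~\ref{lemma::decomposition}, which upper bounds $\bP(\cE_r^c)$ by a sum of balance-function terms $\alpha_{k'}(M,j)$ over the sub-optimal arms $k'$, over the possible counts $j\le(\log r)^2$ of draws of arm $1$, with $M$ of order $r/(\log r)^2$ equal to the number of duels arm $1$ must lose to remain under-sampled; the Random Block scheme guarantees these lost duels can be taken against sub-samples of $k'$ with non-overlapping support, producing the product structure encoded in $\alpha_{k'}$. Summing over $r\le T$ reproduces the left-hand side of Condition~2 and is therefore $o(\log T)$. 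Adding the three contributions yields $\bE[N_k(T)]\le \tfrac{1+\varepsilon}{I_k(\mu_1)}\log(T)+o(\log T)$.

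The main obstacle is precisely the control of $\cE_r^c$ through Lemma~\ref{lemma::decomposition}: relating the rare event $\{N_1(r)\text{ small}\}$ to a long run of lost duels and then decoupling the randomness of the sub-sampling from that of the rewards. The delicate point is that the leader, the challengers and the block positions all evolve in a history-dependent manner, so a naive union bound over rounds would be far too lossy; the Random Block structure must be exploited to extract non-overlapping sub-samples that render the per-duel losing events conditionally independent and expressible through $\alpha_{k'}$. Verifying simultaneously that the slowly growing threshold $(\log r)^2$ keeps the leading-term loss negligible while keeping $M$ large enough for Condition~2 to apply is the technical heart of the argument.
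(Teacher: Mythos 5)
Your overall architecture matches the paper's: a deterministic threshold $n_0\asymp\log T$ plus a split according to whether arm $1$ is a well-sampled leader, with Lemma~\ref{lemma::decomposition}, the diversity property of the \RB{} sampler and Condition~2 disposing of the under-sampling term. However, there are two genuine gaps. First, your treatment of the ``good'' rounds (arm $1$ is leader with $N_1\ge(\log r)^2$) does not go through as written. You place the intermediate threshold at $\xi=\mu_1-\delta$ and bound the leader's sub-sample deviation by $e^{-nI_1(\mu_1-\delta)}$; but $I_1(\mu_1-\delta)\to 0$ as $\delta\to 0$, so this term does not decay. Moreover, the ``each level is crossed at most once'' argument applies only to the challenger's count $N_k$: for a fixed $N_k(r)=n$, the leader's history length and the drawn block both change from round to round, so the sub-sample deviation must be union-bounded over all possible leader history sizes. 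This is exactly what Lemma~\ref{concentration} does, and it is why its second term carries a factor $r$ in front of $\sum_j\bP(\hat Y_{1,j}\le\xi)$. Killing that factor forces $n_0 I_1(\xi)>\log T$ with $\xi$ taken \emph{close to $\mu_k$} (so that $I_1(\xi)\approx I_1(\mu_k)$ by continuity), which is what pins the leading constant to $1/I_1(\mu_k)$ (consistent with Lemma~\ref{lem:dec} and Corollary~\ref{cor:opt_exp}); your choice $n_0=\lceil(1+\varepsilon)\log T/I_k(\mu_1)\rceil$ with $\xi$ near $\mu_1$ cannot be made to work.

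Second, you fold ``arm $1$ is not the leader'' into $\cE_r^c$ and claim Lemma~\ref{lemma::decomposition} bounds $\bP(\cE_r^c)$. It does not: it only bounds $\sum_r\bP(N_1(r)\le(\log r)^2)$. The event that arm $1$ has been drawn many times yet is not the leader is a separate and nontrivial case. The paper handles it (Appendix~\ref{app::boundZ}) by splitting on whether arm $1$ was leader at some round in $[\lfloor r/4\rfloor,r]$: if so, a sub-optimal arm must have taken over the leadership, which is controlled by a concentration argument on the takeover events $\cB^u$; if not, arm $1$ lost a linear number of duels as a challenger, and Markov's inequality on the count $\cL^r$ of lost duels reduces this case to $\bP(N_1\le(\log)^2)$ up to a constant factor (the $32$ in Lemma~\ref{lem:dec}). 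Without this intermediate reduction your argument has no control over the rounds where a sub-optimal arm is leader, and the proof is incomplete.
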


If the distributions belong to the same one-dimensional exponential family (see e.g. \cite{KL_UCB} for a presentation of some of their important properties), the Chernoff inequality tells us that the concentration condition 1. is satisfied with a rate function equal to $I_k(x) = \kl(x,\mu_k)$ where $\kl(x,y)$ is the Kullback-Leibler divergence between the distribution of mean $x$ and the distribution of mean $y$ in that exponential family. In Appendix~\ref{app::balance}, we prove that Gaussian distribution with known variance, Bernoulli and Poisson distribution also satisfy the balance condition 2., which yields the following.

\begin{corollary}\label{cor:opt_exp} Assume that the distribution of all arms belong to the family of Gaussian distributions with a known variance, Bernoulli or Poisson distributions. Then under \RBSSDA{} for all $\epsilon > 0$, for all sub-optimal arm $k$, 

\vspace{-0.4cm}

\[\bE[N_k(T)] \leq \frac{1+\varepsilon}{\kl(\mu_k,\mu_1)} \ln(T) + o_{\varepsilon,\bm\mu}(\log(T)).\]
\end{corollary}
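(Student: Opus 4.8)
The plan is to derive Corollary~\ref{cor:opt_exp} directly from Theorem~\ref{th::log_regret_sda}: for each of the three families it suffices to check the theorem's two hypotheses and then read off the constant. The first hypothesis is essentially free. Gaussian (known variance), Bernoulli and Poisson laws are one-parameter exponential families, so Chernoff's inequality applied to the empirical mean gives the two-sided concentration of condition~1 with rate function $I_k(x)=\kl(x,\mu_k)$, which is continuous and vanishes only at $x=\mu_k$. Feeding these rate functions into the theorem produces a bound whose leading constant is the Kullback--Leibler divergence of \eqref{eq:LRLB}, namely $\kl(\mu_k,\mu_1)$, so that matching the Lai--Robbins lower bound is automatic once the second hypothesis is granted. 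The whole task therefore reduces to verifying the balance condition~2 for each family.

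For condition~2 I would work from the probabilistic reading of the balance function. If $Z^{(1)},\dots,Z^{(M)}$ are independent copies of $\nu_{k,j}$ and $X\sim\nu_{1,j}$ is independent of them, then $1-F_{\nu_{k,j}}(X)=\bP(Z>X\mid X)$ and hence $\alpha_k(M,j)=\bP\!\left(X<\min_{1\le i\le M}Z^{(i)}\right)$. I would bound this by cutting at a level $jc$ with $\mu_k<c<\mu_1$ and invoking the exponential tail bounds of condition~1 for sums of $j$ variables: $\bP(X<jc)\le e^{-j\kl(c,\mu_1)}$ for the optimal arm and $\bP(\min_i Z^{(i)}>jc)=\bP(Z>jc)^M\le e^{-Mj\kl(c,\mu_k)}$ for arm~$k$. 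Choosing $c=c(M,j)$ as the lower $1/M$-quantile of $\nu_{k,j}$, i.e.\ $\kl(c,\mu_k)\approx(\log M)/j$, a second-order expansion of the two divergences around $\mu_k$ should yield an estimate of the form $\alpha_k(M,j)\le \tfrac{C}{M}\,e^{-j\,\kl(\mu_k,\mu_1)}\,e^{-c_0\sqrt{j\log M}}$ for positive constants $C,c_0$ depending only on the gap $\mu_1-\mu_k$. The cross term $\sqrt{j\log M}$ is not cosmetic: it is exactly what makes the forthcoming double sum converge.

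It remains to substitute $M=\lfloor\beta t/(\log t)^2\rfloor$, so that $\log M\sim\log t$ and $1/M\sim(\log t)^2/(\beta t)$, and to evaluate $\sum_{t\le T}\sum_{j\le(\log t)^2}\alpha_k(M,j)$. Summing the geometric factor $e^{-j\kl(\mu_k,\mu_1)}$ over $j$ leaves, for each $t$, a contribution of order $\tfrac{(\log t)^2}{t}\,e^{-c_0\sqrt{\log t}}$. The substitution $u=\log t$ followed by $v=\sqrt u$ turns the resulting series into a constant multiple of $\int_0^\infty v^5 e^{-c_0 v}\,dv<\infty$, so the double sum is in fact $O(1)=o(\log T)$, which is stronger than the theorem requires. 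Conditions~1 and~2 then give the corollary through Theorem~\ref{th::log_regret_sda}.

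The main obstacle is the sharp estimate of $\alpha_k(M,j)$ carrying the $\sqrt{j\log M}$ correction, uniformly over $1\le j\le(\log t)^2$ and the matching range of $M$. The Gaussian case is explicit: the event becomes $\{\eta_i>\xi+\Delta\sqrt j\ \text{for all }i\}$ with $\xi,\eta_i$ standard normal and $\Delta=(\mu_1-\mu_k)/\sigma$, giving $\alpha_k(M,j)\approx\bar\Phi(\sqrt{2\log M}+\Delta\sqrt j)$, from which the correction is immediate. For Bernoulli and Poisson the same shape must instead be extracted from a Bahadur--Rao type large-deviation expansion for lattice sums; there the discreteness, and for Bernoulli the finiteness of the support as $c\to0$, force a separate treatment of the small-$j$ and boundary regimes, and controlling these uniformly in $(j,M)$ is where essentially all the technical effort concentrates (and is presumably the content of Appendix~\ref{app::balance}).
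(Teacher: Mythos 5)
Your overall architecture is exactly the paper's: Corollary~\ref{cor:opt_exp} is obtained by checking the two hypotheses of Theorem~\ref{th::log_regret_sda}, with condition~1 coming for free from Chernoff's inequality in a one-parameter exponential family ($I_k(x)=\kl(x,\mu_k)$), so that everything reduces to the balance condition~2, proved family by family in Appendix~\ref{app::balance}. Your probabilistic reading $\alpha_k(M,j)=\bP(X<\min_i Z^{(i)})$ and your target estimate $\alpha_k(M,j)\lesssim \tfrac{C}{M}e^{-j\kl(\mu_k,\mu_1)}e^{-c_0\sqrt{j\log M}}$ are also the right shape --- this is essentially what the paper obtains in the Gaussian case --- and your summation step (geometric in $j$, then a convergent Bertrand-type series in $t$ after $M=\lfloor\beta t/(\log t)^2\rfloor$) is sound.

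The gap is in the mechanism you propose for proving that estimate. Cutting at a level $jc$ with $\mu_k<c<\mu_1$ and applying the two Chernoff bounds of condition~1 gives $\alpha_k(M,j)\le e^{-j\kl(c,\mu_1)}+e^{-Mj\kl(c,\mu_k)}$. Since $c>\mu_k$, the first term is \emph{at least} $e^{-j\kl(\mu_k,\mu_1)}$, carries no factor $1/M$, and a second-order expansion with $\kl(c,\mu_k)\approx(\log M)/j$ gives $j\kl(c,\mu_1)\approx j\kl(\mu_k,\mu_1)-C\sqrt{j\log M}$, i.e.\ a correction of the \emph{wrong sign}. Summing $e^{-j\kl(\mu_k,\mu_1)}$ over $j\ge1$ and then over $t\le T$ yields $\Theta(T)$, not $o(\log T)$, so this route cannot close. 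To recover the $1/M$ factor one must place the cut \emph{below} $\mu_k$, at a point where $F_{\nu_{k,j}}(jc)\gtrsim(\log M)/M$, so that $(1-F_{\nu_{k,j}}(jc))^M\le M^{-O(1)}$; but that step requires a \emph{lower} bound on the lower tail of $\nu_{k,j}$, which the one-sided Chernoff inequalities of condition~1 do not provide. This anti-concentration input is precisely where the paper works: for Bernoulli and Poisson it writes $\bE[(1-F_{k,j}(X))^M]\le\bE[e^{-MF_{k,j}(X)}]\le \frac{C_\lambda}{M^\lambda}\sum_x f_{1,j}(x)/F_{k,j}(x)^\lambda$, lower-bounds $F_{k,j}$ by the PMF $f_{k,j}$, and controls the likelihood ratio $f_{1,j}/f_{k,j}^\lambda$ directly, obtaining $C_\lambda M^{-\lambda}e^{-jd_\lambda}$ with $\lambda>1$ and $d_\lambda>0$ (a different but equally summable shape from yours); for the Gaussian it computes with the explicit density and quantile function. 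You do flag that a Bahadur--Rao expansion is needed for the lattice cases, so you are aware the estimate is unproven, but the cut-and-Chernoff scheme as written would not produce it even for the Gaussian, and the statement ``conditions 1 and 2 then give the corollary'' rests on it.
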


Corollary~\ref{cor:opt_exp} permits to prove that $\limsup_{T\rightarrow} \tfrac{\cR_T(\RBSSDA)}{\ln(T)} \leq \sum_{k=2}^{K} \frac{(\mu_1-\mu_k)}{\kl(\mu_k,\mu_1)}$, which is matching the Lai \& Robbins lower bound \eqref{eq:LRLB} in each of these exponential families. In particular, \RBSSDA{} is simultaneously asymptotically optimal for different examples of bounded (Bernoulli) and un-bounded (Poisson, Gaussian) distributions. In contrast, Non-Parametric TS is asymptotically optimal for any bounded distributions, but cannot be used for Gaussian or Poisson distributions. Note that the guarantees of Corollary~\ref{cor:opt_exp} also hold for the SSMC algorithm \cite{SSMC}, but we prove that \RBSSDA{} can be asymptotically optimal \emph{without forced exploration} for some distributions. Moreover, as will be seen in Section~\ref{sec:expes}, algorithms based on randomized history-independent sub-sampling such as \RBSSDA{} tend to perform better than deterministic algorithms such as SSMC. 

Theorem~\ref{th::log_regret_sda} also shows that \RBSSDA{} may have logarithmic regret for a wider range of distributions. For example, we conjecture that a truncated Gaussian distribution also satisfy the balance condition 2.. On the other hand, condition 2. does not hold for Exponential distribution, as discussed in Appendix~\ref{app:balance_fe_exp_dist}. But we show in Appendix~\ref{app::balance_fe} that any distribution that belongs to a one-dimensional exponential family satisfies a slightly modified version of this condition, which permits to establish the asymptotic optimality of a variant of RB-SDA using forced exploration.

Finally, we note that it is possible to build on RB-SDA to propose a bandit algorithm that has logarithmic regret for any distribution that is bounded in $[0,1]$. To do so, we can use the binarization trick already proposed by \cite{AG13Further} for Thompson Sampling, and run RB-SDA on top of a binarized history $\cH_k'$ for each arm $k$ in which a reward $Y_{k,s}$ is replaced by a binary pseudo-reward is $Y_{k,s}'$ generated from a Bernoulli distribution with mean $Y_{k,s}$. The resulting algorithm inherits the regret guarantees of RB-SDA applied to Bernoulli distributions.

Characterizing the set of distributions for which the vanilla RB-SDA algorithm has logarithmic regret (without forced exploration or a binarization trick) is left as an interesting future work.

\section{Sketch of Proof}\label{sec:sketch}

In this section, we provide elements of proof for Theorem~\ref{th::log_regret_sda}, postponing the proof of some lemmas to the appendix. The first step is given by the following lemma, which is proved in Appendix~\ref{app::lem_dec}. 

\begin{lemma}\label{lem:dec} Under condition 1., for any SP-SSDA algorithm (using an \sampler), for every $\varepsilon > 0$, there exists a constant $C_{k}(\bm\nu,\epsilon)$ with $\bm\nu = (\nu_1,\dots,\nu_k)$ such that
\[\bE[N_k(T)] \leq \frac{1+\epsilon}{I_1(\mu_k)}\log(T) + 32\sum_{r=1}^{T}\bP\left(N_1(r) \leq (\log(r))^2\right) + C_{k}(\bm\nu,\epsilon)\;.\]
\end{lemma}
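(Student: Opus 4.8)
The plan is to bound $\bE[N_k(T)]$ for a sub-optimal arm $k$ by splitting the rounds according to two bad events: either the sub-optimal arm $k$ is over-sampled (it wins duels against a well-estimated leader even though its mean is smaller), or the optimal arm $1$ is under-sampled (so that it is rarely the leader and rarely a challenger with enough data). The key observation is that arm $k$ can only be pulled at round $r+1$ if it is a challenger that wins its duel, or if it is itself the leader. Whenever $N_1(r)$ is large, arm $1$'s empirical mean concentrates near $\mu_1$ by condition~1, so the probability that the sub-optimal arm $k$ beats the leader in a duel becomes exponentially small in $N_k(r)$. The whole difficulty is to show that the contribution of rounds where arm $1$ is under-sampled is separated out into the explicit sum $32\sum_r \bP(N_1(r)\le (\log r)^2)$, while the remaining rounds contribute only the leading $\tfrac{1+\epsilon}{I_1(\mu_k)}\log T$ term plus a constant.

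First I would write, following the decomposition used for BESA and SSMC, an event decomposition of $\{k \in \cA_{r+1}\}$. The plan is to introduce a threshold $j_0 = \tfrac{1+\epsilon}{I_1(\mu_k)}\log T$ on $N_k(r)$, and distinguish between the regime $N_k(r) \le j_0$ and $N_k(r) > j_0$. In the first regime the contribution is at most $j_0 = \tfrac{1+\epsilon}{I_1(\mu_k)}\log T$ pulls by a counting argument (arm $k$ cannot be pulled more than $j_0$ times while its count stays below $j_0$). For the second regime, $N_k(r) > j_0$ is large, so I would bound the probability that arm $k$ wins a duel against the leader: conditioning on $N_1(r)$ being large, the leader's sub-sampled empirical mean exceeds, say, $(\mu_1+\mu_k)/2$ with probability controlled by $e^{-N_1(r) I_1((\mu_1+\mu_k)/2)}$ via condition~1 and the independence of the sampler from the rewards, while $\hat Y_{k,N_k(r)}$ stays below that midpoint with probability controlled by $e^{-N_k(r) I_k((\mu_1+\mu_k)/2)}$. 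Summing these exponential tails over $r$ gives a convergent series contributing only an $O(1)$ constant, absorbed into $C_k(\bm\nu,\epsilon)$.

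The crucial remaining step is to handle the rounds where $N_1(r)$ is \emph{not} large, i.e. $N_1(r) \le (\log r)^2$. Here concentration of the leader's mean cannot be invoked, and arm $k$ may be pulled. The plan is to show that the total number of such pulls is controlled by $\sum_{r} \bP(N_1(r) \le (\log r)^2)$ up to the universal multiplicative factor $32$. I would argue that whenever arm $k$ is pulled, either arm $1$ was under-sampled at that round, or one of the favourable concentration events failed; bounding the number of pulls in each round by the number of arms and carefully accounting for the rounds where arm $1$ is the leader versus a challenger yields the factor $32$. This constant comes from a union-type argument over the at most two relevant comparisons ($k$ versus leader, and $1$ versus leader) combined with the fact that a round produces at most one pull of arm $k$; I expect the precise bookkeeping that produces exactly $32$ (rather than some other absolute constant) to be the main technical obstacle, since it requires simultaneously tracking the leader identity, the challenger counts, and the event that arm $1$ is under-represented.

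Finally I would collect the three contributions: the $j_0 = \tfrac{1+\epsilon}{I_1(\mu_k)}\log T$ term from the counting argument, the $32\sum_r \bP(N_1(r) \le (\log r)^2)$ term from the under-sampling rounds, and the convergent exponential sum absorbed into $C_k(\bm\nu,\epsilon)$. Summing over all rounds $r \le T$ and noting $\hat r_T \le T$ gives exactly the claimed bound. The merit of this decomposition is that it cleanly isolates the sub-optimal arm's concentration (which only needs condition~1) from the much more delicate control of the optimal arm's under-sampling probability, which is precisely what Lemma~\ref{lemma::decomposition} and the sampler-specific analysis of \RBSSDA{} will address separately.
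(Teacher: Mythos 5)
Your high-level skeleton (threshold on $N_k$, concentration in the well-sampled regime, isolate the under-sampling of arm $1$) matches the paper's intent, but two of your key steps do not hold as stated, and the second one is the heart of the lemma.

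First, the concentration step. The leader's sub-sampled mean is an average of $N_k(r)$ observations, not $N_1(r)$, so its deviation probability is of order $e^{-N_k(r) I_1(\xi)}$ rather than $e^{-N_1(r) I_1(\xi)}$; moreover, since the drawn subset and the counts $N_1(r),N_k(r)$ are random and correlated with the rewards, one cannot simply ``condition on $N_1(r)$ being large''. The paper needs Lemma~\ref{concentration}, whose proof exploits the independence of the sampler from the reward streams and pays a union bound over the possible history sizes, producing an extra factor $r$ (then $T$) in front of the arm-$1$ deviation term. It is precisely this factor that forces the threshold to satisfy $(1+\epsilon)\xi_k I_1(\omega_k)>1$ and hence yields the constant $\xi_k=1/I_1(\mu_k)$, with $\omega_k$ taken close to $\mu_k$; your midpoint $(\mu_1+\mu_k)/2$ would only give $\tfrac{1+\epsilon}{I_1((\mu_1+\mu_k)/2)}\log T$, which is strictly worse, and with your exponent $N_1(r)$ the leading term would instead be governed by the challenger's rate $I_k$ and you would not recover the stated constant at all.

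Second, and more seriously, your claim that ``whenever arm $k$ is pulled, either arm $1$ was under-sampled at that round, or one of the favourable concentration events failed'' is false: arm $k$ can be pulled at a round where $N_1(r)>(\log r)^2$ and every empirical mean is close to its true value, simply because the leader $\ell(r)$ is some other sub-optimal arm that $k$ legitimately beats. Your decomposition never controls the event $\cZ^r=\{\ell(r)\neq 1\}$, and this is where most of the work in Appendix~\ref{app::lem_dec} lies. The paper bounds $N_k(T)$ by the pulls occurring while arm $1$ is leader (handled as above) plus $\sum_r\ind_{\cZ^r}$, and then bounds $\sum_r\bP(\cZ^r)$ by splitting on whether arm $1$ has been leader at least once in $[\lfloor r/4\rfloor,r]$: if yes, a leadership takeover must have occurred (summable by concentration); if no, arm $1$ has been a challenger throughout and must have lost at least $r/4$ duels, and Markov's inequality reduces this to $\tfrac{4}{r}\sum_{u=\lfloor r/4\rfloor}^{r}\bP\left(N_1(u)\le(\log u)^2\right)$ plus concentration terms. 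The constant $32$ is then $2\times 16$: the $2$ because $\sum_r\ind_{\cZ^r}$ appears in two pieces of the decomposition of $N_k(T)$, and the $16$ from counting how many times each term $\bP\left(N_1(u)\le(\log u)^2\right)$ occurs in the resulting double sum --- not from a union bound over two comparisons. Without this leader-control argument the proof does not go through.
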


The proof of this result follows essentially the same decomposition as the one proposed by \cite{SSMC} for the analysis of SSMC. However, it departs from this analysis in two significant ways. First, instead of using properties of forced exploration (that is absent in \RBSSDA), we distinguish whether or not arm 1 has been selected a lot, which yields the middle term in the upper bound. Then, the argument relies on a new concentration result for sub-samples averages, that we state below. Lemma~\ref{concentration}, proved in Appendix~\ref{app::lemma_concentration}, crucially exploits the fact that a \SPSSDA{} algorithm is based on an \sampler. Using condition 1. allows to further upper bound the right-hand side of the two inequalities in Lemma~\ref{concentration} by terms that decay exponentially and contribute to the constant $C_{k}(\bm\nu,\epsilon)$.

\begin{lemma}[concentration of a sub-sample]
	\label{concentration}
	For all $(a, b)$ such that $\mu_a<\mu_b \text{, for all } \xi \in (\mu_{a} , \mu_b) \text{ and }  n_0 \in \mathbb{N}$, under any instance of \SPSSDA{} using an \sampler, it holds that {\small
		\begin{align*}
		&\sum_{s=1}^r \bP\! \left(\hat Y_{{a},N_{a}(s)}\! \geq \! \hat Y_{b, \hat \cS_b^s(N_{b}(s),N_{a}(s))}, N_b(s)\!\geq \! N_{a}(s), N_{a}(s)\!\geq\! n_0 \right)  \! \leq \! \sum_{j=n_0}^r \!\! \bP \! \left(\hat Y_{{a},j} \geq \xi \right) + r \!\!\sum_{j=n_0}^r\!\! \bP \! \left(Y_{b, j} \leq \xi
		\right), \\ 
		&\sum_{s=1}^r \bP\! \left(\hat Y_{{b},N_{b}(s)} \!\leq\! \hat Y_{a, \hat \cS_a^s(N_{a}(s),N_{b}(s))}, N_a(s)\!\geq\! N_{b}(s), N_{b}(s)\!\geq\! n_0 \right) \! \leq\! \sum_{j=n_0}^r\!\!  \bP\! \left(\hat Y_{{b},j} \leq \xi \right) + r \!\!\sum_{j=n_0}^r\!\! \bP \!\left(Y_{a, j} \geq \xi
		\right).
		\end{align*}}
\end{lemma}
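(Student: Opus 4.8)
The two inequalities are mirror images under the exchange $a\leftrightarrow b$ (with every inequality reversed), so the plan is to prove the first and deduce the second by symmetry. Throughout, the decisive structural fact is that the sub-sample $\hat\cS_b^s = S_b^s(N_b(s),N_a(s))$ is produced by an \sampler{}: the round-$s$ sampler is independent both of the reward streams and of the pull counts $N_a(s),N_b(s)$, the latter being measurable with respect to the rewards and the samplers of rounds $<s$. I would first insert the level $\xi$ into the duel outcome. On $\{\hat Y_{a,N_a(s)}\ge\hat Y_{b,\hat\cS_b^s}\}$ at least one side must fall on the wrong side of $\xi$, since $\hat Y_{a,N_a(s)}<\xi\le\hat Y_{b,\hat\cS_b^s}$ would contradict the inequality. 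Hence every summand splits into an \emph{upward deviation of the challenger}, $\{\hat Y_{a,N_a(s)}\ge\xi\}$, and a \emph{downward deviation of the leader's sub-sample}, $\{\hat Y_{b,\hat\cS_b^s}<\xi\}$, which generate the two pieces of the right-hand side.

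For the sub-sample piece — the part that crucially uses the \sampler{} — I would condition on the realised counts $\{N_a(s)=n,\,N_b(s)=m\}$, so that the set drawn is exactly $S_b^s(m,n)$. Because this sampler is independent of the rewards and of the counts, conditioning on the drawn index set $S_b^s(m,n)=\sigma$ leaves the law of the rewards unchanged, and the event $\{\hat Y_{b,\hat\cS_b^s}<\xi\}$ becomes, on a fixed $\sigma\subseteq[m]$, a statement about the genuine i.i.d. rewards indexed by $\sigma$. A union bound over the (at most $N_b(s)\le r$) indices contained in the sub-sample turns a low sub-sample mean into one of the single-reward events $\{Y_{b,i}\le\xi\}$, and the corresponding probabilities can be pulled out of the sampler's expectation precisely because the sampler is independent of the rewards. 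Summing over the rounds $s\le r$ then yields the factor $r$ and the sum $\sum_{j=n_0}^r\bP(Y_{b,j}\le\xi)$.

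For the piece $\{\hat Y_{a,N_a(s)}\ge\xi\}$ I would exploit that $N_a(\cdot)$ is non-decreasing and that $\hat Y_{a,N_a(s)}$ is frozen between two successive pulls of $a$, so that each count value $j\ge n_0$ is ideally charged only once, giving the clean sum $\sum_{j=n_0}^r\bP(\hat Y_{a,j}\ge\xi)$ with no extra factor $r$. The crux, and the step I expect to be hardest, is to make this ``charge once'' accounting rigorous from the \emph{abstract} event of the statement, which only assumes $N_b(s)\ge N_a(s)$ and not that $b$ is the current leader: a won duel advances $N_a$ only when the sub-sampled arm is genuinely the leader, so I must control the rounds in which the count stalls while the event recurs, all while keeping the sampler decoupled from the reward-driven counts. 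This is exactly where the generous factor-$r$ estimate of the leader term is needed to cover the mis-counted rounds. The second inequality then follows by the identical argument after exchanging the roles of $a$ and $b$ and reversing every inequality.
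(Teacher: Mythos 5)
Your skeleton matches the paper's: insert the threshold $\xi$ so that the duel event splits into $\{\hat Y_{a,N_a(s)}\geq\xi\}\cup\{\hat Y_{b,\hat \cS_b^s}<\xi\}$, charge the challenger term once per count value, and exploit the independence of the sampler from the rewards for the sub-sample term. But your treatment of the sub-sample term contains a genuine error. You propose to bound $\{\hat Y_{b,\hat\cS_b^s}<\xi\}$ by a union bound over the individual indices of the sub-sample, reducing it to single-reward events $\{Y_{b,i}\leq\xi\}$. The probability that one reward falls below $\xi<\mu_b$ is a fixed positive constant that does not decay with the sub-sample size, so your bound is of order $r^2\,\bP(Y_{b,1}\leq\xi)$ and is vacuous; in particular it cannot be combined with condition 1.\ of Theorem~\ref{th::log_regret_sda}, which controls \emph{empirical means} via $\bP(\hat Y_{b,j}\leq\xi)\leq e^{-jI_b(\xi)}$. (The symbol $Y_{b,j}$ on the right-hand side of the lemma denotes the empirical mean of $j$ samples, as the applications in Appendix~\ref{app::lem_dec} make clear.) The correct step is to keep the sub-sample mean intact: conditionally on the reward filtration, the sampler output is an independent index set of fixed size $n=N_a(s)$, and since the rewards are i.i.d.\ the average over any fixed set of $n$ indices has the same law as $\hat Y_{b,n}$; hence $\sum_{\cS}\bP\left(S^s_b(m,n)=\cS\right)\bP\left(\hat Y_{b,\cS}<\xi\right)=\bP\left(\hat Y_{b,n}<\xi\right)$, which decays exponentially in $n\geq n_0$. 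The factor $r$ then arises from the residual sum over the possible history sizes, not from a union bound over indices inside one sub-sample.

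A second, smaller issue is one you flag yourself but do not resolve: the once-per-count accounting for the challenger term. The paper secures it by working throughout with the event $\{\ell(s)=b,\ a\in\cA_{s+1}\}$ — on the event of interest the challenger wins the duel against the leader and is therefore pulled, so the pair $(N_a(s)=j,\ a\in\cA_{s+1})$ occurs for at most one round $s$; this selection event is present in every application of the lemma even though it is suppressed in the statement. Your suggestion that the ``generous factor-$r$ estimate of the leader term'' covers the mis-counted rounds does not work: that factor sits on the other term and cannot absorb repetitions of $\{\hat Y_{a,j}\geq\xi\}$ across rounds where $N_a$ stalls. As written, the proposal therefore does not establish the inequality in the form needed downstream.
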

So far, we note that the analysis has \emph{not} been specific to \RBSSDA{} but applies to any instance of \SSDA{}. Then, we provide in Lemma~\ref{lemma::decomposition} an upper bound on $\sum_{t=1}^T \bP\left(N_1(t)\leq (\log t)^2 \right)$ which is specific to \RBSSDA{}. This sampler is randomized and independent of $r$, hence we use the notation $\RB(m,n)=\RB(m,n,r)$. The strength of this upper bound is that it decouples the properties of the sub-sampling  algorithm and that of the arm distributions (through the balance function $\alpha_k$). 

\begin{lemma}Let $X_{m,H,j}$ be a random variable giving the   
		number of non-overlapping sub-samples of size $j$  obtained in $m$  i.i.d. samples from $\RB(H,j)$ and define $c_r=\lfloor \tfrac{r/(\log r)^2-1}{2K}\rfloor-1$. There exists $\gamma \in (0,1)$ and a constant $r_K$ such that with $\beta_{r,j} = \left\lfloor \gamma {r}/{j(\log r)^2}\right\rfloor$,		
	{\footnotesize\label{lemma::decomposition}
	$$
	\sum_{r=1}^T \bP(N_1(r)\leq \left( \log r\right)^2)  \leq r_K + \sum_{r=r_K}^T \sum_{j=1}^{\left\lfloor\log r^2\right\rfloor}\left[(K-1) \bP \left(X_{c_r, c_r, j} < \beta_{r,j} \right) + \sum_{k=2}^K \alpha_k\left(\beta_{r,j}, j\right)\right]\;. 
	$$}
\end{lemma}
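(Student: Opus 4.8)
I need to bound $\sum_{r=1}^T \bP(N_1(r) \leq (\log r)^2)$ for RB-SDA. This is the probability that the optimal arm (arm 1) is under-sampled. The lemma wants to show this is bounded by $r_K$ plus a double sum involving two terms: one about the sub-sampling scheme ($X_{c_r,c_r,j}$ counting non-overlapping sub-samples) and one about the balance function $\alpha_k$.

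**Intuition about the structure.** If arm 1 is under-sampled (pulled at most $(\log r)^2$ times after $r$ rounds), then it lost many duels. Specifically, in the $r$ rounds, arm 1 was pulled few times, so it was a challenger that kept losing, OR some other arm was the leader. The term $c_r = \lfloor \frac{r/(\log r)^2 - 1}{2K}\rfloor - 1$ suggests a counting argument: if arm 1 is pulled at most $(\log r)^2$ times over $r$ rounds, then the total number of rounds is $r$, and if arm 1 keeps being a challenger losing to leaders... Let me think about the pigeonhole. There are $K$ arms, total pulls roughly $r$ (since in each round at least one arm pulled). If arm 1 pulled $\leq (\log r)^2$ times, the other $K-1$ arms share the remaining pulls. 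So some leader arm gets pulled many times.

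**The balance function connection.** The key insight: arm 1 (optimal, largest mean) losing repeatedly to a sub-optimal challenger is a rare event governed by $\alpha_k$. When arm 1 is the leader but under-sampled — this can't happen much since leader is the most-pulled. So the scenario must be: arm 1 is a challenger, and a sub-optimal arm $k$ is the leader with a large history. The event "arm 1 loses" means its empirical mean is below a sub-sample of the leader $k$'s history. Across many rounds with non-overlapping sub-samples of $k$'s history, the probability arm 1 loses all of them is exactly what $\alpha_k(M,j)$ captures (arm 1 with $j$ samples losing $M$ duels against non-overlapping blocks of $k$). The variable $X_{m,H,j}$ counts how many non-overlapping sub-samples RB produces — this controls how many "independent" chances arm 1 has to beat the leader. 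If $X_{c_r,c_r,j} \geq \beta_{r,j}$ (enough non-overlapping blocks), then arm 1 failing all duels has probability $\leq \alpha_k(\beta_{r,j}, j)$. The complementary event $X_{c_r,c_r,j} < \beta_{r,j}$ (not enough non-overlapping blocks) is the sampling-scheme failure term.

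---

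The plan is to translate the event $\{N_1(r) \leq (\log r)^2\}$ into a statement about arm $1$ repeatedly losing duels as a challenger against some sub-optimal leader. First I would fix a round $r$ and condition on $N_1(r) = j$ for some $j \leq \lfloor(\log r)^2\rfloor$; this explains the inner sum over $j$. On this event, since arm $1$ is pulled few times while the total number of pulls over $r$ rounds is at least $r$, a pigeonhole argument over the $K$ arms shows that some sub-optimal arm $k$ must have been the leader for a large number of rounds, during which arm $1$ was present as a challenger and lost. The quantity $c_r$ is precisely the threshold capturing ``the leader accumulated at least $c_r$ samples while arm $1$ stayed at $j \leq (\log r)^2$ pulls''; I would verify the arithmetic defining $c_r = \lfloor\frac{r/(\log r)^2 - 1}{2K}\rfloor - 1$ gives enough losing rounds for arm $1$, distributing the factor $1/(2K)$ across the $K-1$ possible sub-optimal leaders and the union bound that produces the leading $(K-1)$ factor.

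Next I would exploit the \emph{history-independence} of RB sampling, which is the crucial structural feature distinguishing this step from the earlier generic lemmas. Over the rounds in which arm $k$ leads and arm $1$ challenges, RB produces a sequence of sub-samples (blocks) of arm $k$'s history. The random variable $X_{c_r,c_r,j}$ counts how many of these blocks are mutually non-overlapping. I would split on whether this count reaches the threshold $\beta_{r,j} = \lfloor \gamma r/(j(\log r)^2)\rfloor$. When $X_{c_r,c_r,j} \geq \beta_{r,j}$, arm $1$ has at least $\beta_{r,j}$ duels against blocks with \emph{disjoint} supports; by independence of these blocks and of the sampler from the rewards, the probability that arm $1$ (with its $j$ fixed samples) loses all of them is bounded exactly by the balance function $\alpha_k(\beta_{r,j}, j)$ — this is the definition in Definition~\ref{def:balance}, where non-overlapping support makes the $M=\beta_{r,j}$ blocks yield the product form $(1 - F_{\nu_{k,j}}(X))^M$ after conditioning on arm $1$'s empirical sum. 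This furnishes the $\sum_{k=2}^K \alpha_k(\beta_{r,j}, j)$ contribution. The complementary event $\{X_{c_r,c_r,j} < \beta_{r,j}\}$, that RB simply fails to generate enough disjoint blocks, is a pure statement about the sampler and yields the $(K-1)\,\bP(X_{c_r,c_r,j} < \beta_{r,j})$ term, thereby achieving the advertised \emph{decoupling} of sampler and distribution.

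The main obstacle I anticipate is making the reduction to ``non-overlapping blocks'' rigorous, because the blocks RB draws across rounds are drawn from histories of growing size $N_k(s)$, not a fixed size, and they are correlated through the shared underlying reward stream. I would handle this by a monotone coupling: bounding the true history-dependent process below by the idealized process $\RB(c_r, j)$ repeated $c_r$ times, which is justified since more leader pulls only help arm $1$ find disjoint blocks, and then extracting a maximal non-overlapping sub-collection whose size is the combinatorial variable $X_{c_r,c_r,j}$. Care is needed to ensure that restricting to this non-overlapping sub-collection only increases the losing probability (each retained duel is genuinely informative) and that the independence used to invoke $\alpha_k$ survives the conditioning on arm $1$'s count $N_1(r) = j$. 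A secondary, more routine, difficulty is pinning down the constant $\gamma \in (0,1)$ and $r_K$: $\gamma$ must be chosen so that $\beta_{r,j}$ is simultaneously large enough that $\alpha_k(\beta_{r,j}, j)$ is summable in the final regret bound yet small enough that $X_{c_r,c_r,j}$ exceeds it with high probability, and $r_K$ absorbs the finitely many small rounds where $c_r$ or the logarithms misbehave; I would defer these calibrations to the detailed proof rather than grind through them here.
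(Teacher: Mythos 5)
Your overall architecture matches the paper's proof in Appendix~\ref{app::lem_decomposition}: pigeonhole over the $K-1$ sub-optimal arms to find a frequent leader with history at least $c_r$, a split on whether the RB sampler produces $\beta_{r,j}$ pairwise non-overlapping blocks, the balance function $\alpha_k$ for the all-losses probability on disjoint blocks, and a stochastic-domination reduction from growing leader histories to the fixed-size variable $X_{c_r,c_r,j}$ (which the paper itself only justifies informally). However, there is one genuine gap in how you launch the argument. You propose to ``condition on $N_1(r)=j$'' and let that account for the inner sum over $j$. Taken literally, this step fails when you later invoke $\alpha_k(\beta_{r,j},j)$: the balance function bounds the probability that a \emph{single} empirical mean $\hat Y_{1,j}$, built from one fixed set of $j$ samples, loses all $M$ duels (that is why Definition~\ref{def:balance} has one $X\sim\nu_{1,j}$ raised against $(1-F_{\nu_{k,j}}(X))^M$). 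If you only know that arm $1$ ends round $r$ with $j$ pulls, the duels it lost along the way were fought with empirical means over prefixes of sizes $1,2,\dots,j$, and the product structure underlying $\alpha_k$ does not apply.

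The missing idea is the paper's decomposition into inter-pull gaps: letting $r_j$ be the round of the $j$-th pull of arm $1$ and $\tau_j=r_{j+1}-r_j$, the event $\{N_1(r)\leq(\log r)^2\}$ forces $\tau_j\geq r/(\log r)^2-1$ for \emph{some} $j\leq(\log r)^2$, and it is the union bound over which gap is long that produces the sum over $j$. During that single long gap arm $1$'s history is frozen at exactly $j$ samples, so every duel it loses in that stretch is lost by the \emph{same} $\hat Y_{1,j}$, which is precisely what $\alpha_k(\cdot,j)$ requires. This gap decomposition is also what makes the losses \emph{consecutive}, which you need before restricting to the second half of the window (so the leader has accumulated at least $c_r$ samples and arm $1$ cannot itself be the leader for $r\geq r_K$). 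Everything downstream of this correction in your sketch is sound and coincides with the paper's route.
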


To prove Lemma~\ref{lemma::decomposition} (see Appendix~\ref{app::lem_decomposition}), we extend the proof technique introduced by \cite{BESA} for the analysis of BESA to handle more than 2 arms. The rationale is that if $N_1(r) \leq \left(\log r \right)^2$ then arm $1$ is not the leader and has lost "many" duels, more precisely \textit{at least} a number of \textit{successive duels} proportional to $r/\left( \log r \right)^2$. A fraction of these duels necessarily involves sub-samples of the leader history that have non-overlapping support. Exploiting the independence of these sub-samples brings in the balance function $\alpha_k$.

In order to conclude the proof, it remains to upper bound the right hand side of Lemma~\ref{lemma::decomposition}. Using condition 2. of balanced distributions the terms depending on $\alpha_k$ sum in $o(\log T)$ and negligibly contribute to the regret. Upper bounding the term featuring $X_{m,H,j}$ amounts to establishing the following diversity property of the random block sampler.

\begin{definition}[Diversity Property]\label{def::diversity} Let $X_{m,H,j}$ be the random variable defined in Lemma~\ref{lemma::decomposition} for a randomized sampler $\SP$. $\SP$ satisfies the Diversity Property for a sequence $N_r$ of integers if

\vspace{-0.5cm}

	$$\sum_{r=1}^T \sum_{j=1}^{(\log r)^2} \bP\left(X_{N_r, N_r, j}<\gamma r/(\log r)^2 \right) = o(\log T)\,.
	$$ 
\end{definition}

We prove in Appendix~\ref{app::diversity_rbsda} that the \RB{} sampler satisfies the diversity property for the sequence $c_r$, which leads to $\sum_{t=1}^T \bP\left(N_1(t)\leq (\log t)^2 \right) = o(\ln(T))$ and concludes the proof of Theorem~\ref{th::log_regret_sda}.  

We believe that the \WR{} sampler also satisfies the diversity property (as conjectured by \cite{BESA}). While Lemma~\ref{lemma::decomposition} should apply to \WRSSDA{} as well, a different path has to be found for analyzing the \LDSSSDA{} and \LBSSDA{} algorithms, that are based on deterministic samplers and also perform well in practice. This is left for future work.

\section{Experiments} \label{sec:expes}

In this section, we perform experiments on simulated data in order to illustrate the good performance of the four instances of \SDA{} algorithms introduced in Section~\ref{sec:SSDA} for various distributions. The Python code used to perform these experiments is available on \href{https://github.com/DBaudry/Sub-Sampling-Dueling-Algorithms-Neurips20}{Github}.
 
 \paragraph{Exponential families}
 First, in order to illustrate Corollary~\ref{cor:opt_exp}, we investigate the performance of \RBSSDA{} for both Bernoulli and Gaussian distributions (with known variance 1). Our first objective is to check that for a finite horizon the regret of \RBSSDA{} is comparable with the regret of Thompson Sampling (with respectively a beta and improper uniform prior), which efficiently uses the knowledge of the distribution. Our second objective is to empirically compare different variants of \SSDA{} to other non-parametric approaches based on sub-sampling (BESA, SSMC) or on re-sampling. For Bernoulli and Gaussian distribution, Non-Parameteric TS coincides with Thompson Sampling, so we focus our study on algorithms based on history perturbation. We experiment with PHE \cite{PHE} for Bernoulli bandits and ReBoot \cite{Reboot} for Gaussian bandits, as those two algorithms are guaranteed to have logarithmic regret in each of these settings. As advised by the authors, we use a parameter $a=1.1$ for PHE and $\sigma=1.5$ for ReBoot. 

We ran experiments on 4 different Bernoulli bandit models: 1) $K=2$, $\mu=[0.8, 0.9]$, 2) $K=2$, $\mu=[0.5, 0.6]$, 3) $K=10$, $\mu_1=0.1, \mu_{2,3,4}=0.01, \mu_{5,6,7}=0.03, \mu_{8,9,10}=0.05$, 4) $K=8$ $\mu=[0.9, 0.85,\dots, 0.85]$ and 3 different bandits models with $\mathcal{N}(\mu_k, 1)$ arms with means: 1) $K=2$ $\mu=[0.5, 0]$, 2) $K=4$, $\mu=[0.5,0,0,0]$, 3) $K=4$, $\mu=[1.5, 1, 0.5, 0]$. For each experiment, Table~\ref{tab:Bernoulli} and Table~\ref{tab:Gaussian} report an estimate of the regret at time $T=20000$ based on $5000$ independent runs (extended tables with standard deviations can be found in Appendix~\ref{app::add_xp_BGTG}). The best performing algorithms are highlighted in bold. In Figure~\ref{fig:Bernoulli} and Figure~\ref{fig:Gaussian} we plot the regret of several algorithms as a function of time (in log scale) for $t \in [15000 ; 20000]$ for one Bernoulli and one Gaussian experiment respectively. We also add the Lai and Robbins lower bound $t \mapsto \left[\sum_k(\mu^\star - \mu_k)/\kl(\mu_k,\mu_\star)\right]\ln(t)$.

\vspace{-0.2cm}

\begin{table}[H]
	
	\begin{minipage}[c]{0.5\linewidth}
				\caption{Regret at $T=20000$ for Bernoulli arms \label{tab:Bernoulli}}
		\label{bernoulli-xp}
		\centering
		\addtolength{\tabcolsep}{-4pt}
		\begin{tabular}{l|l|l|l|l|l|l|l|l}
			\toprule
			& \multicolumn{4}{c|}{Benchmark} & \multicolumn{4}{c}{SDA} \\
			\cmidrule(r){2-9}
			xp      & TS & PHE & BESA & SSMC & RB& WR & LB & LDS \\
			\midrule
			1      & \textbf{11.2}  & 25.9 & \textbf{11.7}& \textbf{12.3}& \textbf{11.5} & \textbf{11.6} & \textbf{12.2}& \textbf{11.4} \\
			2      & \textbf{22.9} & \textbf{24.0} &\textbf{22.1} & \textbf{24.3} & \textbf{22.0} & \textbf{21.5} & \textbf{24.0} & \textbf{21.8} \\
			3      & \textbf{94.2} &  248.1 & \textbf{88.1} & 100.1 & \textbf{89.0} & \textbf{86.9} & 100.7 & \textbf{89.2} \\
			4      & \textbf{108.1} & 216.5 & 147.5 & 119.9 & \textbf{105.1} & \textbf{106.9} & 119.6 & \textbf{106.8}\\
			\bottomrule
		\end{tabular}
		\label{tab::bern_table}
		
	\end{minipage}
	\hfill
	\begin{minipage}[c]{0.4\linewidth}
		\captionof{figure}{Regret as a function of time for Bernoulli experiment 3\label{fig:Bernoulli}}
		\centering
		\includegraphics[width=6cm, height=3.2cm]{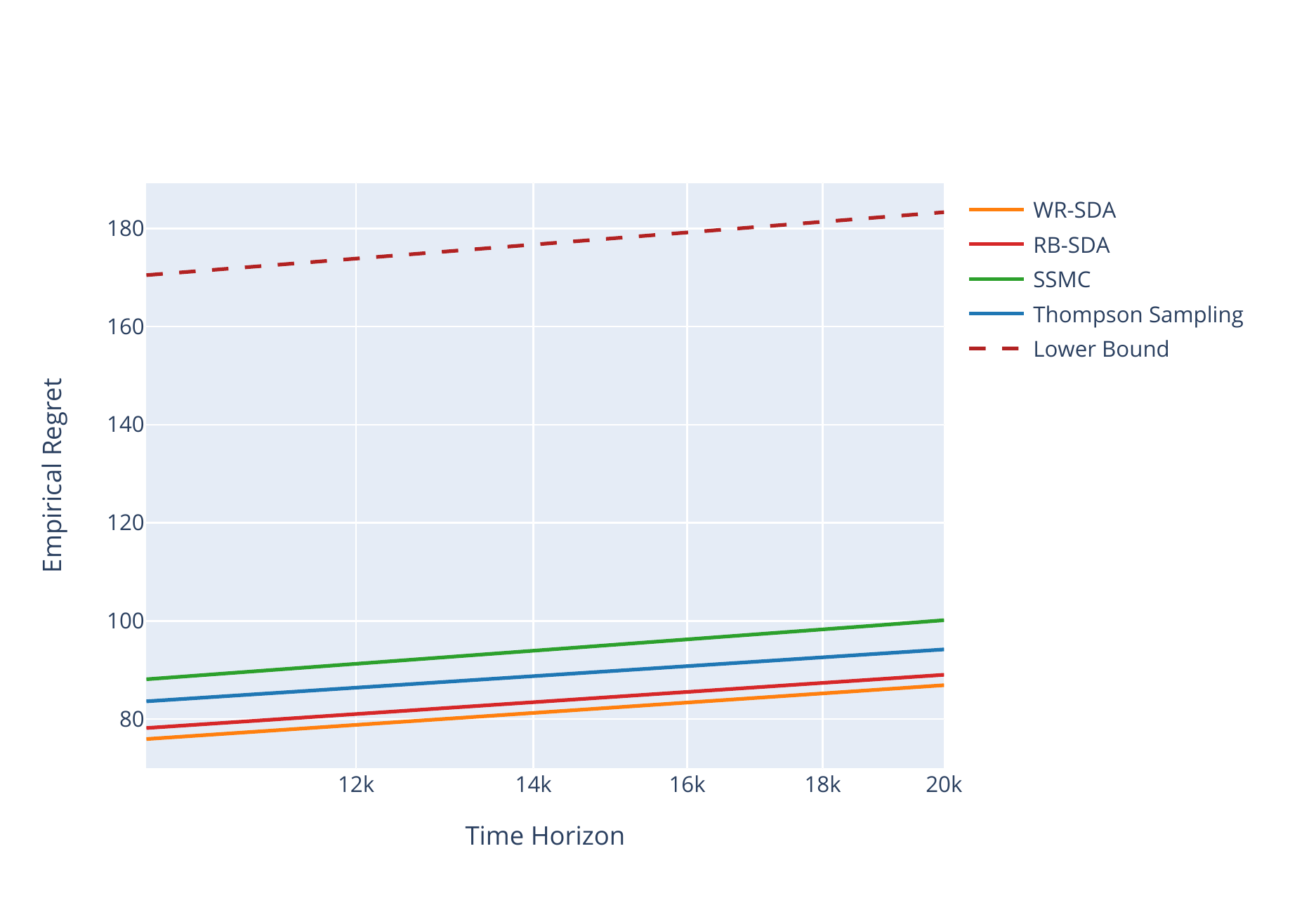}     
	\end{minipage}
	\label{fig:ma_fig}

	\vspace{-0.3cm}
\end{table}

\begin{table}[H]
	
	\begin{minipage}[c]{0.5\linewidth}
		\caption{Regret at $T=20000$ for Gaussian arms  \label{tab:Gaussian}}
		\label{gaussian-xp}
		\centering
		\addtolength{\tabcolsep}{-4pt}
		\begin{tabular}{l|l|l|l|l|l|l|l|l}
			\toprule
			& \multicolumn{4}{c|}{Benchmark} & \multicolumn{4}{c}{SDA} \\
			\cmidrule(r){2-9}
			xp      & TS     & ReBoot & BESA & SSMC & RB& WR & LB & LDS \\
			\midrule
			1      & \textbf{24.4} &  92.2& \textbf{25.3} & \textbf{26.9} & \textbf{25.6} & \textbf{24.7} & \textbf{25.1} & \textbf{26.5} \\
			2& \textbf{73.5} & 277.1&122.5 & \textbf{74.8} & \textbf{71.0} & \textbf{71.1} & \textbf{74.6} & \textbf{69.0} \\
			3 & \textbf{49.7} & 190.9&72.1 & \textbf{51.3} & \textbf{50.4} & \textbf{50.0} & \textbf{51.2} & \textbf{48.6} \\
			\bottomrule
		\end{tabular}
	\end{minipage}
	\hfill
	\begin{minipage}[c]{0.4\linewidth}
		\captionof{figure}{Regret as a function of time for Gaussian experiment 2 \label{fig:Gaussian}}
		\centering
		\includegraphics[width=6cm, height=3.2cm]{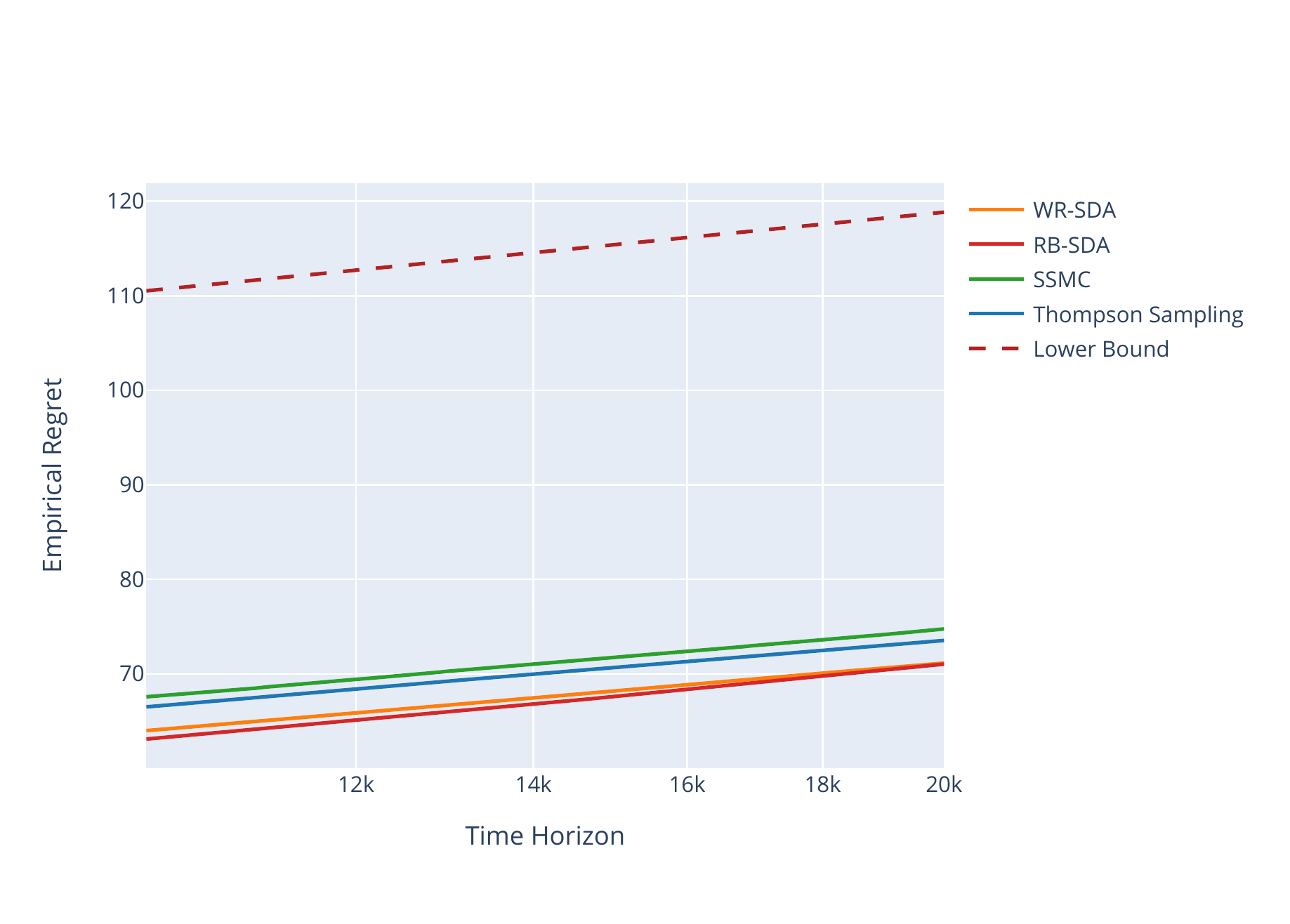}     
	\end{minipage}
	\label{fig:ma_fig}
	
\end{table}

\vspace{-0.2cm}

In all of these experiments, we notice that \SSDA{} algorithms are indeed strong competitors to Thompson Sampling (with appropriate prior) for both Bernoulli and Gaussian bandits. Figures~\ref{fig:Bernoulli} and \ref{fig:Gaussian} further show that \RBSSDA{} is empirically matching the Lai and Robbins' lower bound on two instances, just like SSMC and Thompson Sampling, which can be seen from the parallel straight lines with the $x$ axis in log scale. The fact that the lower bound is above shows that it is really asymptotic and only captures the right first order term. The same observation was made for all experiments, but is not reported due to space limitation. Even if we only established the asymptotic optimality of \RBSSDA{}, these results suggest that the other \SDA{} algorithms considered in this paper may also be asymptotically optimal. Compared to \SDA{}, re-sampling algorithms based on history perturbation seem to be much less robust. Indeed, in the Bernoulli case, PHE performs very well for experiment 2, but is significantly worse than Thompson Sampling on the three other instances. In the Gaussian case, ReBoot always performs significantly worse than other algorithms. This lack of robustness is also corroborated by additional experiments reported below in which we average the performance of these algorithms over a large number of randomly chosen instances. 

Turning our attention to algorithms based on sub-sampling, we first notice that \WRSSDA{} seems to be a better generalization of BESA with $2$ arms than the tournament approach currently proposed, as in experiments with $K>2$, \WRSSDA{} often performs significantly better than BESA. Then we observe that SSMC and SDA algorithms have similar performance. Looking a bit closer, we see that the performance of SSMC is very close to that of \LBSSDA{}, whereas SDA algorithms based on ``randomized'' (or pseudo-randomized for \LDSSSDA{}) samplers tend to perform slightly better. 

\paragraph{Truncated Gaussian} Theorem~\ref{th::log_regret_sda} suggests that \RBSSDA{} may attain logarithmic regret beyond exponential families. As an illustration, we present the results of experiments performed with Truncated Gaussian distributions (in which the distribution of arm $k$ is that of $Y_k = 0 \vee (X_k \wedge 1)$ where $X_k \sim \mathcal{N}(\mu_k,\sigma^2)$). We report in Table~\ref{gaussian-xp} the regret at time $T=20000$ (estimated over $5000$ runs) of various algorithms on four different problem instances: 1) $\mu=[0.5, 0.6]$, $\sigma=0.1$, 2) $\mu=[0, 0.2]$, $\sigma=0.3$, 3) $\mu=[1.5, 2]$, $\sigma=1$ 4) $\mu=[0.4, 0.5, 0.6, 0.7]$, $\sigma=1$. We include Non-Parametric TS which is known to be asymptotically optimal in this setting (while TS which uses a Beta prior and a binarization trick is not), PHE, and all algorithms based on sub-sampling. We again observe the good performance of SSMC and \SDA{} algorithms across all experiments. They even outperform NP-TS in some experiments, which suggests \SDA{} algorithms may be asymptotically optimal for a wider class of parametric distributions.    

\begin{table}[H]
		\caption{Regret at $T=20000$ for Truncated Gaussian arms \label{gaussian-xp}}
		\centering
		\addtolength{\tabcolsep}{-4pt}
		\begin{tabular}{l|l|l|l|l|l|l|l|l|l}
			\toprule
			& \multicolumn{5}{c|}{Benchmark} & \multicolumn{4}{c}{SDA} \\
			\cmidrule(r){2-9}
			xp      &TS& NP-TS & PHE& BESA& SSMC & RB& WR & LB & LDS \\
			\midrule
			1  &21.9& 4.2 & 22.3 & \textbf{1.4}&\textbf{1.5} & \textbf{1.4}& \textbf{1.4}& \textbf{1.5}& \textbf{1.4} \\
		2 & 13.3&8 &19.5 & \textbf{4.6}& \textbf{4.7} & \textbf{4.4} & \textbf{4.5}& \textbf{4.6}& \textbf{4.3} \\
			3 & 9.7 &\textbf{7.8} & 48.5 &\textbf{7.8} & \textbf{7.6} & \textbf{7.1} & \textbf{7.7}& 8.2& \textbf{7.1}\\
			4&  86.6&\textbf{70} & 86& 76.5& \textbf{69.5}& \textbf{64.9}& \textbf{64.8}& \textbf{68.7}& \textbf{63.2}\\
			\bottomrule
		\end{tabular}
\end{table}

\paragraph{Bayesian Experiments}

So far we tried our algorithms on specific instances of the distributions we considered. It is also interesting to  check the robustness of the algorithms when the means of the arms are drawn at random according to some distribution. In this section we consider two examples: Bernoulli bandits where the arms are drawn uniformly at random in $[0,1]$, and Gaussian distributions with the mean parameter of each arm itself drawn from a gaussian distribution $\mu_k \sim \mathcal{N}(0, 1)$. In both cases we draw $10000$ random problems with $K=10$ arms and run the algorithms for a time horizon $T=20000$. We experiment with TS, SSMC, \RBSSDA{} and \WRSSDA{} and also add the IMED algorithm (\cite{IMED}) which is an asymptotically optimal algorithm that uses the knowledge of the distribution. We do not add \LDSSSDA{} and \LBSSDA{} as they are similar to \RBSSDA{} and SSMC, respectively. In the Bernoulli case, we also run the PHE algorithm, which  fails to compete with the other algorithms. This is not in contradiction with the results of \cite{PHE} as in the Bayesian experiments of this paper, arms are drawn uniformly in $[0.25, 0.75]$ instead of $[0,1]$. Actually, we noticed that PHE with parameter $a=1.1$ has some difficulties when several arms are close to $1$.  

\begin{figure}[H]
	\begin{minipage}{0.45\textwidth}
		\begin{table}[H]
			\caption{Average Regret on $10000$ random experiments with Bernoulli Arms}
			\label{exp:bayes_xp_ber}
			\centering
			\small\addtolength{\tabcolsep}{-2pt}
			\begin{tabular}{l|l|l|l|l|l|l}
				\toprule
				T& TS& IMED& PHE& SSMC& RB& WR \\
				\midrule
				100&	13.8&	15.1&	16.7&	16.5&	14.8&	14.3\\
				1000&	27.8&	31.9&	39.5&	34.2&	31.8&	30.9\\
				10000&	45.8&	51.2&	72.3&	55.0&	51.1&	50.6\\
				20000&	52.2&	57.6&	85.6&	61.9&	57.7&	57.3\\
				\bottomrule
			\end{tabular}
		\end{table}
	\end{minipage}
	\hfill
	\begin{minipage}{0.45 \textwidth}
		\begin{table}[H]
			\caption{Average Regret on $10000$ random experiments with Gaussian Arms}
			\label{exp:bayes_xp_gauss}
			\centering
			\small\addtolength{\tabcolsep}{-2pt}
			\begin{tabular}{l|l|l|l|l|l}
				\toprule
				T& TS& IMED& WR& RB& SSMC \\
				\midrule
				100&	41.2&	45.1&	38.3&	38.1&	40.6\\
				1000&	76.4&	82.1&	72.7&	70.4&	76.2\\
				10000&	118.5&	124.0&	115.8&	111.8&	120.1\\
				20000&	132.6&	138.1&	130.2&	125.7&	135.1\\
				\bottomrule
			\end{tabular}
		\end{table}
	\end{minipage}
\end{figure}

Results reported in Tables~\ref{exp:bayes_xp_ber} and ~\ref{exp:bayes_xp_gauss} show that \RBSSDA{} and \WRSSDA{} are strong competitors to TS and IMED for both Bernoulli and Gaussian bandits. Recall that these algorithm operate without the need for a specific tuning for each distribution, unlike TS and IMED. Moreover, observe that in the Bernoulli case, TS further uses the same prior as that from which the means are drawn. 

\paragraph{Computational aspects} To choose a sub-sampling based algorithm, numerical consideration can be taken into account. First, compared to Thompson Sampling, all sub-sampling based algorithm require to store the history of the observation. But then, the cost of sub-sampling varies across algorithms: in the general case \RBSSDA{} is more efficient than \WRSSDA{} as the latter requires to draw a random subset while the former only needs to draw the random integer starting the block. However, for distributions with finite supports \WRSSDA{} can be made as efficient as TS using multivariate geometric distributions, just like PHE does. If one does not want to use randomization then \LDSSSDA{} could be preferred to \RBSSDA{} as it uses a deterministic sequence. Finally, \LBSSDA{} has the smallest computational cost in the \SDA{} family and while its performance is very close to that of SSMC, it can avoid the cost of scanning all the sub-sample means in this algorithm. The computational cost of these two algorithms is difficult to evaluate precisely. Indeed, they can be made very efficient when the leader does not change, but each change of leader is costly, in particular for SSMC. The expected number of such changes is proved to be finite, but for experiments with a finite time horizon the resulting constant can be big. Finally, Non-Parametric TS has a good performance for Truncated Gaussian, but the cost of drawing a random probability vector over a large history is very high.

\paragraph{More experiments} 
In Appendix~\ref{app::complement_xp} we enhance this empirical study: we show some limitations of \SSDA{} for exponential distributions and propose a fix using forced exploration as in SSMC. 

\section{Conclusion}

We introduced the \SSDA{} framework for exploration in bandits models. We proved that one particular instance, \RBSSDA{}, combines both optimal theoretical guarantees and good empirical performance for several distributions, possibly with unbounded support. Moreover, \SSDA{} can be associated with other samplers that seem to achieve similar performance, with their own specificity in terms of computation time. The empirical study presented in the paper also shows the robustness of \textit{sub-sampling} approach over other types of \textit{re-sampling} algorithms.
This new approach to exploration may be generalized in many directions, for example to contextual bandits or reinforcement learning, where UCB and Thompson Sampling are still the dominant approaches. It is also particularly promising to develop new algorithm for non-stationary bandit, as such algorithms already store the full history of rewards. 

\newpage

\begin{ack}
The PhD of Dorian Baudry is funded by a CNRS80 grant. The authors acknowledge the funding of the French National Research Agency under projects BADASS (ANR-16-CE40-0002) and BOLD (ANR-19-CE23-0026-04).

Experiments presented in this paper were carried out using the Grid'5000 testbed, supported by a scientific interest group hosted by Inria and including CNRS, RENATER and several Universities as well as other organizations (see \hyperlink{https://www.grid5000.fr}{https://www.grid5000.fr}).
\end{ack}

\bibliographystyle{unsrt}
\bibliography{biblio}

\newpage
\appendix

\section{Complement of Experiments}\label{app::complement_xp}

\subsection{Additional Figures for Bernoulli, Gaussian and Truncated Gaussian arms}\label{app::add_xp_BGTG}

We enhance the tables of Section~\ref{sec:expes} with the standard deviation (reported in parenthesis) of the regret at time $T=20000$ on the 5000 trajectories.

\begin{table}[H]
		\caption{Regret and at $T=20000$ for Bernoulli arms, with standard deviation}
		\label{bernoulli-xp-big}
		\centering
		\addtolength{\tabcolsep}{-4pt}
		\begin{tabular}{l|l|l|l|l|l|l|l|l}
			\toprule
			& \multicolumn{4}{c|}{Benchmark} & \multicolumn{4}{c}{SSDA} \\
			\cmidrule(r){2-9}
			xp      & TS & PHE & BESA & SSMC & RB& WR & LB & LDS \\
			\midrule
			1      & \textbf{11.2}  & 25.9 & \textbf{11.7}& \textbf{12.3}& \textbf{11.5} & \textbf{11.6} & \textbf{12.2}& \textbf{11.4} \\
			& (10.)& (87.9)& (12.1)& (7.3)&(10.1)&(10.2)&(7.4)& (9.0) \\
			\midrule
			2      & \textbf{22.9} & \textbf{24.0} &\textbf{22.1} & \textbf{24.3} & \textbf{22.0} & \textbf{21.5} & \textbf{24.0} & \textbf{21.8} \\
			&(29.2)&(22.0)&(25.2)&(38.2)&(34.5)&(17.3)&(24.6)&(24.5) \\
			\midrule
			3      & \textbf{94.2} &  248.1 & \textbf{88.1} & 100.1 & \textbf{89.0} & \textbf{86.9} & 100.7 & \textbf{89.2} \\
			& (15.8)&	(25.5)&	(89.2)&	(20.0)&	(19.8)&	(21.7)&	(21.3)&	(21.8)\\
			\midrule
			4      & \textbf{108.1} & 216.5 & 147.5 & 119.9 & \textbf{105.1} & \textbf{106.9} & 119.6 & \textbf{106.8}\\
			& (45.1)&(89.8)	&(209.8)&(40.8)&(41.1)&(42.1)&(42.7)&(47.7)\\
			\bottomrule
		\end{tabular}
\end{table}

\begin{table}[H]
	\caption{Regret and at $T=20000$ for Gaussian arms, with standard deviation}
		\label{gaussian-xp}
		\centering
	 \addtolength{\tabcolsep}{-4pt}
		\begin{tabular}{l|l|l|l|l|l|l|l|l}
			\toprule
			& \multicolumn{4}{c|}{Benchmark} & \multicolumn{4}{c}{SDA} \\
			\cmidrule(r){2-9}
			xp      & TS     & ReBoot & BESA & SSMC & RB& WR & LB & LDS \\
			\midrule
			1      & \textbf{24.4} &  92.2& \textbf{25.3} & \textbf{26.9} & \textbf{25.6} & \textbf{24.7} & \textbf{25.1} & \textbf{26.5} \\
			&(17.1)&(23.4)&	(27.1)&	(52.8)&	(62.8)&	(20.6)&	(17.9)&	(140.2)\\
			\midrule
			2& \textbf{73.5} & 277.1&122.5 & \textbf{74.8} & \textbf{71.0} & \textbf{71.1} & \textbf{74.6} & \textbf{69.0} \\
			& (107.8)&	(41.3)&	(585.5)& (34.7)& (152.2)& (50.2)&(35.1)& (50.4)\\
			\midrule
			3 & \textbf{49.7} & 190.9&72.1 & \textbf{51.3} & \textbf{50.4} & \textbf{50.0} & \textbf{51.2} & \textbf{48.6} \\
			& (26.9)&	(29.6)&	(410.3)&(23.7)&	(156.5)&(33.3)&	(22.4)&	(41.6)\\
			\bottomrule
		\end{tabular}	
\end{table}

\begin{table}[H]
	\caption{Regret at $T=20000$ for Truncated Gaussian arms \label{gaussian-xp}}
	\centering
	\addtolength{\tabcolsep}{-4pt}
	\begin{tabular}{l|l|l|l|l|l|l|l|l|l}
		\toprule
		& \multicolumn{5}{c|}{Benchmark} & \multicolumn{4}{c}{SDA} \\
		\cmidrule(r){2-9}
		xp      &TS& NP-TS & PHE& BESA& SSMC & RB& WR & LB & LDS \\
		\midrule
		1  &21.9& 4.2 & 22.3 & \textbf{1.4}&\textbf{1.5} & \textbf{1.4}& \textbf{1.4}& \textbf{1.5}& \textbf{1.4} \\
					& (20.4)&(0.6)& (2.6)&(1.7)&(0.7)&(1.1)&(0.8)&(0.7)&(0.8) \\
					\midrule
		2 & 13.3&8 &19.5 & \textbf{4.6}& \textbf{4.7} & \textbf{4.4} & \textbf{4.5}& \textbf{4.6}& \textbf{4.3} \\
					& (7)&(1.8)& (3.8)&(3.3)&(2.3)&(4.6)&(3.1)&(2.4)&(2.9) \\			\midrule
		3 & 9.7 &\textbf{7.8} & 48.5 &\textbf{7.8} & \textbf{7.6} & \textbf{7.1} & \textbf{7.7}& 8.2& \textbf{7.1}\\
		& (10.1)&(4.5)&(217.8)&(9.4)&(5)&(10)&(13.4)&(27.5)&(5.8)\\
					\midrule
		4&  86.6&\textbf{70} & 86& 76.5& \textbf{69.5}& \textbf{64.9}& \textbf{64.8}& \textbf{68.7}& \textbf{63.2}\\
		& (57.8)& (39.4)&(53.7)&(113.9)&(40.9)&(60.5)&(43.9)&(39.1)&(51.1)\\
		\bottomrule
	\end{tabular}
\end{table}

For Bernoulli arms and Truncated Gaussian, the standard deviations of \SSDA{} are very similar to that of Thompson Sampling, while the trajectories of PHE and BESA have much more variance in experiment 1 and 4, and on experiments 3 and 4 respectively. For Gaussian arms we remark the low variability of ReBoot, but at the cost of a non-competitive regret. \SSDA{} are less homogeneous in this case: some algorithms have large variance for some instances (\LDSSSDA{} on experiment 1, \RBSSDA{} on experiments 2 and 3). Note that TS also has a high variability in experiment 2. 

We believe that this is due to the nature of the Gaussian distribution, and in particular to its balance function:  in Appendix~\ref{app::balance} we prove that $\alpha_k(M,j)$ does satisfy Assumption 2. of Theorem~\ref{th::log_regret_sda}, however the upper bound derived for $\alpha_k(M,j)$ is much larger than the one for Bernoulli distribution, which justifies that ``bad runs'' in which a good arm looses many duels are more likely to happen in that case, and can explain the larger variance. If one wants to reduce the variance of the regret of \SSDA{} we recommend the use of some asymptotically negligible forced exploration, as presented for exponential distribution in Appendix~\ref{app::xp_exp_arms}, and for which we prove that the algorithm remains asymptotically optimal in Appendix~\ref{app::forced_explo}. 

Finally, as in Section~\ref{sec:expes}, we plot the regret of several algorithms as a function of time (in log scale) for $t \in [10000, 20000]$, this time for the Truncated Gaussian distributions. These plots illustrate the fact that some \SSDA{} algorithms may achieve asymptotic optimality for this distribution too, even if it does not belong to a one-parameter exponential family. Indeed, the rate of the regret of all \SSDA{} seem too match both the rate of the regret of Non-Parametric TS, which is optimal for this family, and the Burnetas and Katehakis lower bound whose expression is $\left(\sum_{k \neq k^\star} \frac{\bE_{X\sim \nu_{k^*}}[X] - \bE_{X\sim \nu_{k}}[X]}{\mathrm{KL}(\nu_k,\nu_{k^*})}\right)\log(T)$ in this particular case, with $\mathrm{KL}(\nu_k,\nu_{k^*}) = p_{0, k} \log\left(\frac{p_{0, k}}{p_{0, *}}\right)+(1-p_{1, k}) \log\left(\frac{1-p_{1, k}}{1-p_{1, *}}\right) + \int_{0}^1 f_k(x) \log\left(\frac{f_k(x)}{f_*(x)}\right) dx$. $p_{x, k}$ is the value of the CDF of the underlying Gaussian random variable associated with $\nu_k$ in $x$, and $f_k(x)$ the density of this variable in $x$.

\begin{figure}[H]
	\begin{minipage}[c]{0.5\linewidth}
	\caption{\SSDA{} vs NP-TS on TG expe 2}
	\centering
	\includegraphics[scale=0.4]{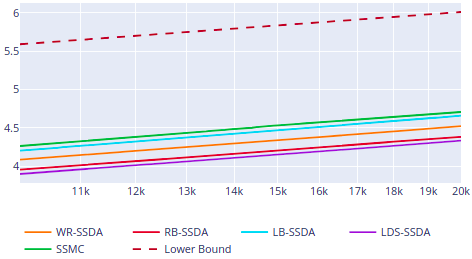}
\end{minipage}
\begin{minipage}[c]{0.5\linewidth}
	\caption{\SSDA{} vs NP-TS on TG expe 4}
	\centering\includegraphics[scale=0.35]{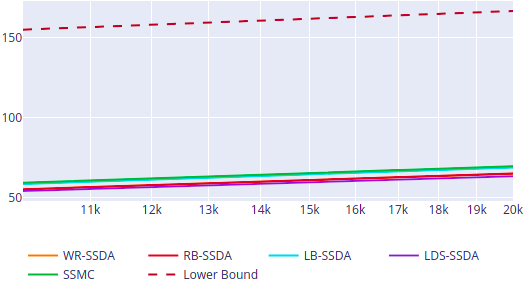}
\end{minipage}
\end{figure}	

\subsection{Experiments with Exponential Arms}\label{app::xp_exp_arms}

In Appendix~\ref{app::balance}, we prove that exponential distributions are \emph{not} balanced (i.e. do not satisfy Assumption 2. of Theorem~\ref{th::log_regret_sda}), so our theoretical results on the regret of \RBSSDA{} do not apply. However, it is still interesting to test our algorithms for these distributions in order to see if it still achieves a good performance. We performed 6 experiments, with the following mean parameters: 1) $\mu=[1.5, 1]$, 2) $\mu=[0.2, 0.1]$, 3) $\mu=[11, 10]$, 4) $\mu=[4, 3, 2, 1]$, 5) $\mu=[0.4, 0.3, 0.2, 0.1]$, 6) $\mu=[5,4,4,4]$. It is interesting to remark that the standard deviation of an exponential distribution is equal to its mean, so with similar gaps problems are harder when the means are high.

\begin{table}[H]
	\caption{Average Regret with Exponential Arms (with std)}
	\label{exp-xp}
	\centering
	\small\addtolength{\tabcolsep}{-2pt}
	\begin{tabular}{l|l|l|l|l|l|l|l|l}
		\toprule
		xp      & TS & IMED & BESA& SSMC & RB& WR & LB & LDS \\
		\midrule
		1 & 48.2 & \textbf{40.0}& 45.7& \textbf{41.9}& 44.8& 45.4& 46.6& 45.5\\
		&(191.8)&(78.4)&(114.1)	&(84.2)&(121.4)&(134.4)&(176.8)&(109.7)\\
		\midrule
		2 & 3.8&  \textbf{3.4}& 4.2& \textbf{3.6}& 4.1& 3.9& 3.9& 5.4\\
		& (9.9)&(3.6)&(25.1)&(41.9)&(14.3)&(13.4)&(8.7)&(49.5)\\
		\midrule
		3 & 832.8& \textbf{779.9}& 820.5& 856.9& 848.4& \textbf{778.4}& 846.7& 877.7\\
		& (1065.1)&(896.9)&(1304.6)&(1111.0)&(1533.3)&(1118.7)&(1150.1)&(1708.7)\\
		\midrule
		4& 258.3& \textbf{234.6}& 525.4& \textbf{251.3}& 272.6& 262.1& 263.8& 258.4\\
		&(519.6)&(126.6)&(2115.1)&(328.3)&(692.2)&(524.4)&(477.9)&(599.0)\\
		\midrule
		5& \textbf{25.6}& \textbf{24.0}& 55.7& \textbf{25.6}& \textbf{25.5}& \textbf{25.0}& 26.5& \textbf{24.7}\\
		&(51.2)&(33.6)&(219.9)&(23.6)&(46.7)&(24.0)&(36.8)&(37.6)\\
		\midrule
		6& \textbf{618.7}& \textbf{603.6}& 1184.2& \textbf{627.9}& \textbf{595.7}& \textbf{616.0}& 652.6& \textbf{605.9}\\
		& (672.3)&(576.8)&(3096.4)&(755.6)&(790.7)&(780.2)&(685.3)&(871.4)\\
		\bottomrule
	\end{tabular}
\end{table}

First, we notice that the performance of the \SDA{} in terms of the average regret are reasonable, although less impressive than with the other distributions we tested. IMED is almost always the best algorithm in these experiments, and  SSMC performs pretty well on many examples (which is not surprising as SSMC is proved to be asymptotically optimal for exponential distributions). 
We remark that there is much more variability in the results of \RBSSDA{}, \WRSSDA{} and \LDSSSDA{} than before, where they performed quite similarly. For instance, we notice that on example 3, \LDSSSDA{} and \RBSSDA{} are much worse than \WRSSDA{}. A look at the quantile table for this experiment, which displays the empirical quantiles of $R_T$ estimated over $5000$ runs, shows that this is due to a small number of "bad" trajectories for these algorithms: 

\begin{table}[H]
	\caption{Quantile Table for Experiment 3 with Exponential Arms}
	\label{exp-xp}
	\centering
	\small\addtolength{\tabcolsep}{-2pt}
	\begin{tabular}{c|l|l|l|l|l|l|l}
		\toprule
		\% of runs      & TS & IMED & SSMC & RB& WR & LB & LDS \\
		\midrule
		20\% & 319.8& 336.0&	335.0	&261.0&	290.0&	326.0&	261.8\\
		50\%& 626.0&	650.0&	661.0	&532.0&	568.5	&642.0&	536.0 \\
		80\%& 1122.0&	1080.0	&1142.0&	1006.0&	1019.0&	1143.2&	1020.2\\
		95\%& 1924.1&	1704.0	&1846.0&	\textbf{2199.0}	&1817.2&	1869.1&	\textbf{2134.1}\\
		99\%& 4209.4&	2632.9&	3536.8&	\textbf{6813.1}&	4146.0&	3762.3&	\textbf{7396.7}\\
		\bottomrule
	\end{tabular}
\end{table}

We see that up to the $80\%$ quantile, \RBSSDA{} and \LDSSSDA{} are even significantly better than IMED. This is very different when we look at the $95\%$ and $99\%$ quantiles, which are much greater for our 2 algorithms (even $2.5$ times greater for the $99\%$ quantile).

We believe that this very high variability prevents \RBSSDA{} to have a logarithmic regret for exponential arms. Still, the regret is not as bad as being linear, as using the fact that the balance function $\alpha_k(M,j)$ is of order ${\exp(-j C)}/{M}$ permits to prove that $\sum_{r=1}^{T} \bP(N_1(r) < \log^2(r)) = \mathcal{O}(\log^2(T))$ (which requires to choose a different $\beta_{r,j}$ in Lemma~\ref{lemma::decomposition}). But we also found a solution to obtain (asymptotically optimal) linear regret, which consists in adding an asymptotically negligible amount of \textit{forced exploration} as the SSMC algorithm does. This exploration in $o(\log T)$ avoids trajectories where the optimal arm has a very bad first observation and is not drawn for a very long time. In Appendix~\ref{app::forced_explo}, we prove the asymptotic optimality of \RBSSDA{} with forced exploration $f_r = \sqrt{\log r}$ for any one-dimensional exponential family. In practice, adding this amount of forced exploration to \SDA{} algorithms leads to the following results:

\begin{table}[H]
	\caption{Average Regret with Exponential Arms: \SDA{} with forced exploration}
	\label{exp-xp-explo}
	\centering
	\small\addtolength{\tabcolsep}{-2pt}
	\begin{tabular}{l|l|l|l|l}
		\toprule
		xp      & RB& WR & LB & LDS \\
		\midrule
		1& 44.9& \textbf{42.5}& \textbf{42.4}& 45.0\\
		&(167.3)&(107.4)&(60.5)&(176.0)\\
		\midrule
		2& \textbf{3.6}& \textbf{3.4}& 4.0& \textbf{3.6}\\
		&(9.2)&(2.2)&(27.9)&(11.2)\\
		\midrule
		3& 837.5& \textbf{788.5}& 827.7& 832.3\\
		& (1466.1)&(1222.1)&(1055.3)&(1514.6)\\
		\midrule
		4& \textbf{244.8}& \textbf{238.9}& 251.7& \textbf{246.0}\\
		&(403.3)&(250.8)&(248.5)&(323.4)\\
		\midrule
		5& \textbf{23.6}& \textbf{25.1}& \textbf{25.4}& \textbf{24.9}\\
		&(33.4)&(41.0)&(23.4)&(42.2)\\
		\midrule
		6& \textbf{578.9} & \textbf{595.1}& 631.2& \textbf{577.8}\\
		& (651.9)&	(561.3)&(446.4)&(652.7)\\
		\bottomrule
	\end{tabular}
\end{table}

Hence, adding forced exploration results in a noticeable improvement for \SDA{} algorithms, with \RBSSDA{}, \WRSSDA{} and \LDSSSDA{} becoming competitive with IMED  (or even slightly better) on most examples. Observe that \LBSSDA{} has again comparable performance with SSMC with this new feature. This is not surprising as we implemented the SSMC algorithm with the same amount of forced exploration $f_r = \sqrt{\log r}$ .

\newpage \section{Notation for the Proof}\label{app::notations}

General notations:
\begin{itemize}
	\item $K$ number of arms
	\item $\nu_k$ distribution of the arm $k$, with mean $\mu_k$
	\item we assume that $\mu_1 = \max_{k \in [K]} \mu_k$ so we call the (unique) optimal arm "arm 1"
	\item $I_k(x)$ some rate function of the arm $k$, evaluated in $x$. For 1-parameter exponential families this function will always be the KL-divergence between $\nu_k$ and the distribution from the same family with mean $x$.
	\item $N_k(r)$ number of pull of arm $k$ up to (and including) round $r$.
	\item $Y_{k, i}$ reward obtained at the i-th pull of arm $k$.
	\item $\hat Y_{k, i}$ mean of the i-th first reward of arm $k$, $\hat Y_{k, \cS}$ mean of the rewards of $k$ on a subset of indices $\cS \subset [N_k(r)] $:  $\hat Y_{k, \cS}= \frac{1}{|\cS|} \sum_{s \in \cS} Y_{k, s}$. If $|\cS|=i$, then $Y_{k, i}$ and $Y_{k, \cS}$ have the same distribution.
	\item $\ell(r)$ leader at round $r+1$, $\ell(r)=\text{argmax}_{k \in [K]} N_k(r)$.
	\item $\text{SP}(m, n, r)$ sub-sampling algorithm, or Sampler, which returns a sequence of $n$ unique elements out of $[m]$.
	\item $(S_k^r(m, n))_{m \geq n}$ a family of independent random variables such that $S_k^r(m, n) \sim \text{SP}(m, n, r)$. 
	\item $\cA_r$ set of arms pulled at a round $r$.
	\item $\mathcal{R}_r$ regret at the \textit{end} of round $r$.
\end{itemize}

Notations for the regret analysis, part relying on concentration:
\begin{itemize}
	\item $\cG_k^r = \cup_{s=1}^{r-1} \{\ell(s)=1\} \cap  \{k \in \mathcal{A}_{s+1}\} \cap \{N_k(s) \geq (1+\epsilon) \xi_k \log r\}$
	\item $\cH_k^r = \cup_{s=1}^{r-1} \{\ell(s)=1\} \cap  \{k \in \mathcal{A}_{s+1}\} \cap \{N_k(s) \geq J_k \log r\}$ 
	
	\item $\cZ^r = \{\ell(r) \neq 1\}$, the leader used for the duels in round  $r+1$ is sub-optimal
	\item $\cD^r = \{\exists u \in  [\lfloor r/4 \rfloor, r] \text{ such that } \ell(u-1) = 1 \}$, the leader has been optimal at least once between $\lfloor r/4 \rfloor$ and $r$
	\item $\cB^{u} = \{\ell(u)=1, k \in \mathcal{A}_{u+1}, N_k(u)=N_1(u)-1 \text{ for some arm } k \}$, the optimal arm is leader in $u$ but loses its duel again some arm $k$, that have been pulled enough to possibly take over the leadership at next round
	\item $\cC^u = \{\exists k \neq 1, N_k(u)\geq N_1(u), \hat Y_{k,S_1^u(N_k(u),N_{1}(u))} \geq \hat Y_{1,N_1(u)}\}$, the optimal arm is not the leader and has lost its duel against the sub-optimal leader.
	\item $\cL^r= \sum_{u=\lfloor r/4 \rfloor}^r \ind_{\cC^u}$
\end{itemize}

Notations for the regret analysis, control of the number of pulls of the optimal arm:
\begin{itemize}
	\item $r_j$ round of the j-th play of the optimal arm
	\item $\tau_j = r_{j+1}-r_j$
	\item $\cE_j^r := \{\tau_j \geq r/\log r^2 -1\}$
	\item $\cM_{j, r}^1 = \left[r_j+1, r_j+ \left\lfloor \frac{r/\log r^2 -1}{2}\right\rfloor\right]$
	\item $\cM_{j, r}^2 = \left[t_j+ \left\lceil \frac{r/\log r^2 -1}{2}\right\rceil, r_j+\left\lfloor r/\log r^2 \right\rfloor -1\right]$
	\item $\cI_{j, r}^k = \{s \in \cM_{j, r}^2 : \ell(s-1)=k \}$
	\item $\cW_{s,j}^{k} = \left\{\left\{ \hat Y_{1, j} < \hat Y_{k,S_1^s(N_k(s), j)} \right\}, N_k(s)\geq c_{r,K}, N_1(s)=j\right\}$
	\item $\cF_{j, M}^{k, r} = \left\{\exists i_1,..., i_M \in I_{j, r}^k: \forall m<m' \in [M], S_{1}^{i_m}(N_{k}(i_m),j) \cap S_{1}^{i_{m'}}(N_{k}(i_{m'}),j) = \emptyset \right\}$
	\item CDF: Cumulative Distribution Function, PDF: Probability Density Function and PMF: Probability Mass Function.
\end{itemize}

\section{Concentration Result: Proof of Lemma~\ref{concentration}}\label{app::lemma_concentration}

We first recall the probabilistic model introduced in Section~\ref{sec:SSDA}: for each round $r$, each arm $k$, we define a family $(S_k^r(n,m))_{n > m}$ of independent random variables such that $S_k^r(n,m) \sim \text{SP}(n,m, r)$. Those random variables are also independent from the reward streams $(Y_{k,s})_{s \geq 0}$ of all arms $k$. 

$S_k^r(n,m)$ is the subset of the leader history that is used should arm $k$ be a challenger drawn $m$ times up to round $r$ duelling against a leader that has been drawn $n$ times. With this notation, letting $\ell(r)$ be the leader after $r$ rounds, at round $r+1$, for all $k\neq \ell(r)$,
\[(k \in \cA_{r+1}) \ \Leftrightarrow \ \left(\hat{Y}_{k,N_k(r)} > \hat{Y}_{\ell(r),S^r_k\left(N_{\ell(r)}(r),N_k(r)\right)}\right)\;.\]
 
Let $k$ be an arm such that $\mu_k < \mu_1$. We denote by $[n_1,n_k]$ the set of subset of $\{1,\dots,n_1\}$ of size $n_k$. We define an event \[\mathcal{Q}_k^s = \{N_k(s)\geq n_0, \ell(s)=1, \hat{Y}_{k,N_k(s)} > \hat{Y}_{\ell(s),S^{s}_k\left(N_1(s),N_k(s)\right)}\}.\] Noting that $\{\hat{Y}_{k,N_k(s)} > \hat{Y}_{\ell(s),S^{s}_k\left(N_1(s),N_k(s)\right)}\} \subset \{\hat{Y}_{k,N_k(s)} \geq \xi\} \cup \{ \hat{Y}_{\ell(s),S^{s}_k\left(N_1(s),N_k(s)\right)}\leq \xi\}$ for all $\xi \in \R$, we can write $\mathcal{Q}_k^s \subset \mathcal{Q}_{k}^{s, 1} \cup \mathcal{Q}_{k}^{s, 2}$ where 
\begin{eqnarray*}\mathcal{Q}_{k}^{s, 1} &= &\{N_k(s)\geq n_0, \ell(s)=1, \hat{Y}_{k,N_k(s)} > \xi\} \\ \text{and } \ \mathcal{Q}_{k}^{s, 2} &=&\{N_k(s)\geq n_0, \ell(s)=1, \hat{Y}_{\ell(s),S^{s}_k\left(N_1(s),N_k(s)\right)} \leq \xi \}.\end{eqnarray*} 
This yields $\sum_{s=1}^{r}\bP(\mathcal{Q}_k^s) \leq \sum_{s=1}^{r}\bP(\mathcal{Q}_k^{r,1})+ \sum_{s=1}^{r}\bP(\mathcal{Q}_k^{r,2})$, which will later provide the two terms in the bound of the lemma. The first one does not involve sub-sampling and can be upper bounded as:
\begin{align*}
& \sum_{s=1}^{r}\bP(\mathcal{Q}_k^{s,1}) \leq \bE \sum_{s = 1}^{r} \ind(N_k(s)\geq n_0) \ind(N_1(s)>N_k(s)) \ind\left(\hat Y_{k,N_k(s)} \geq \xi \right)\ind\left(k \in \cA_{s+1}\right)  \\
& \leq \bE \sum_{s = n_0}^{r} \sum_{n_k =n_0}^r \ind\left(N_k(s)=n_k, k \in \cA_{s+1}\right) \ind\left(\hat Y_{k,n_k} \geq \xi \right)\\
& \leq \bE \sum_{n_k =n_0}^r \ind\left(\hat Y_{k,n_k} \geq \xi \right)  \underbrace{\sum_{s = n_0}^{r} \ind\left(N_k(s)=n_k, k \in \cA_{s+1}\right)}_{\leq 1}\\
& \leq \sum_{n_k =n_0}^r \bP\left(\hat Y_{k,n_k} \geq \xi \right),
\end{align*}
where in the last inequality we use that the event $(N_k(s) = n) \cap (k \in \cA_{s+1})$ can happen at most once for $s \in \{n_0,\dots,r\}$ (a similar trick was used for example in the analysis of $\kl$-UCB \cite{AOKLUCB}). 

Upper bounding the second term $B_r = \sum_{s=1}^{r}\bP(\mathcal{Q}_k^{r,2})$ is more intricate as it involves both $N_k(s)$ and $N_1(s)$. With a similar method we get:
\begin{align*}
& B_r\leq \bE \sum_{s = n_0}^{r} \sum_{n_k = n_0}^r\sum_{n_1 =n_k}^r \sum_{\cS \in [n_1,n_k]} \ind\left(N_k(s) = n_k, k \in \cA_{s+1}\right)\ind\left(N_1(s) = n_1\right)\ind\left(S_k^s(n_1,n_k) = \cS\right)\ind\left(\hat Y_{\ell, \cS} \leq \xi\right) \\
& \leq \bE  \sum_{n_k = n_0}^r\sum_{n_1 = n_k}^r \sum_{\cS \in [n_1,n_k]}\ind\left(\hat Y_{\ell, \cS}\leq \xi\right) \sum_{s = n_0}^{r}\ind\left(N_k(s) = n_k, k \in \cA_{s+1}\right)\ind\left(S_k^s(n_1,n_k) = \cS\right)\\
& = \bE  \sum_{n_k = n_0}^r\sum_{n_1 = n_k}^r \sum_{\cS \in [n_1,n_k]}\ind\left(\hat Y_{\ell, \cS}\leq \xi\right) \sum_{s = n_0}^{r}\bE\left[\ind\left(N_k(s) = n_k, k \in \cA_{s+1}\right)\ind\left(S_k^s(n_1,n_k) = \cS\right) | \cF\right],\\
\end{align*}
where $\cF$ is the filtration generated by the reward streams. $N_k(s)$ may have a complicated distribution with respect to this filtration but this is not a problem here. Indeed, $S_k^s(n_1,n_k)$ is by design independent of this filtration, and one can write 

\begin{align*}
& B_r \leq \bE  \sum_{n_k = n_0}^r\sum_{n_1 = n_k}^r \sum_{\cS \in [n_1,n_k]}\ind\left(\hat Y_{1, \cS}\leq \xi\right) \sum_{s = n_0}^{r} \bP\left(S_k^s(n_1,n_k) = \cS\right)\bE\left[\ind\left(N_k(s) = n_k, k \in \cA_{s+1}\right) | \cF\right]\\
& = \bE  \sum_{n_k = n_0}^r\sum_{n_1 = n_k}^r \sum_{\cS \in [n_1,n_k]}\ind\left(\hat Y_{1, \cS}\leq \xi \right) \sum_{s = n_0}^{r} \bP\left(S_k^s(n_1,n_k) = \cS\right)\ind\left(N_k(s) = n_k, k \in \cA_{s+1}\right) \\
& = \sum_{n_k = n_0}^r\sum_{n_1 = n_k}^r \sum_{\cS \in [n_1,n_k]}\bP\left(\hat Y_{1, \cS}\leq \xi \right) \sum_{s = n_0}^{r} \bP\left(S_k^s(n_1,n_k) = \cS\right) \bE \left(\ind\left(N_k(s) = n_k, k \in \cA_{s+1}\right) \right) \\
& = \sum_{n_k = n_0}^r\sum_{n_1 = n_k}^r \bP\left(\hat Y_{1, n_1}\leq \xi \right) \sum_{s = n_0}^{r} \left(\sum_{\cS \in [n_1,n_k]}\bP\left(S_k^s(n_1,n_k) = \cS\right)\right) \bE \left(\ind\left(N_k(s) = n_k, k \in \cA_{s+1}\right) \right)\\
& = \sum_{n_k = n_0}^r\sum_{n_1 = n_k}^r \bP\left(\hat Y_{1, n_1}\leq \xi \right) \bE \underbrace{\sum_{s = n_0}^{r}\ind\left(N_k(s) = n_k, k \in \cA_{s+1} \right)}_{\leq 1}\\
& \leq \sum_{n_k = n_0}^r\sum_{n_1 = n_k}^r \bP\left(\hat Y_{1, n_1}\leq \xi \right)\\
& \leq r \sum_{n_1 = n_k}^r \bP\left(\hat Y_{1, n_1}\leq \xi \right)\;.
\end{align*}
Here we have used the independence of the $S_k^s(m,n)$ from the reward streams and the fact that for every subset $\cS$ of size $n_1$, $\hat{Y}_{k,n_1}$ and $\hat{Y}_{k,\cS}$ have the same distribution. We can conclude as follows, proving the lemma: 

\begin{align*}
 \sum_{s=1}^{r}\bP(\mathcal{Q}_k^r) &\leq \sum_{s=1}^{r}\bP(\mathcal{Q}_k^{r, 1}) + \sum_{s=1}^{r}\bP(\mathcal{Q}_k^{r, 2}) \\ & \leq \sum_{n_k=n_0}^r \bP\left(\hat Y_{k, n_k}\geq \xi \right) + r \sum_{n_1=n_0}^r \bP\left(\hat Y_{1, n_1}\leq \xi \right)\;.
\end{align*}

\section{Regret Decomposition: Proof of Lemma~\ref{lem:dec}}\label{app::lem_dec}

We recall that we assume that arm $1$ is the only optimal arm: $\mu_1=\max_{k \in [K]} \mu_k$. The proof in this section follows the path of the proof in \cite{SSMC} for SSMC, but hinges on the new concentration result of Lemma~\ref{concentration}. Moreover, some parts need to be adapted to handle the properties of an \sampler{} instead of the duelling rule used in SSMC. As in \cite{SSMC}, we introduce the following events: 
\begin{itemize}
	\item $\cG_k^T = \cup_{r=1}^{T-1} \{\ell(r)=1\} \cap  \{k \in \mathcal{A}_{r+1}\} \cap \{N_k(r) \geq (1+\epsilon) \xi_k \log T\}$ 
	
	\item $\cH_k^T = \cup_{r=1}^{T-1} \{\ell(r)=1\} \cap  \{k \in \mathcal{A}_{r+1}\} \cap \{N_k(r) \geq J_k \log T\}$ 
	
	\item $\cZ^r = \{\ell(r) \neq 1\}$, the leader used at round  $r+1$ is sub-optimal.
\end{itemize}

These events directly provide an upper bound of the number of pulls of a sub-optimal arm $k$:
\begin{align}
\mathbb{E}[N_k(T)]& = \mathbb{E}[N_k(T) \ind_{\cH_k^T}] + \mathbb{E}[N_k(T) \ind_{\cG_k^T} \ind_{\bar{\cH}_k^T}] +\mathbb{E}[N_k(T) \ind_{\bar{\cG}_k^T}]   \nonumber\\
& \leq T \mathbb{P}(\cH_k^T) + (1+J_k \log T)  \mathbb{P}(\cG_k^T) + 1 + (1+\epsilon)\xi_k \log T  + 2 \sum_{r=1}^{T-1} \mathbb{P}(\cZ^r)\label{eq:dec}
\end{align}

Indeed, due to the definition of each event we have:
\begin{align*} N_k(T) \ind_{\bar{\cG}_k^T} 
	& \leq  1 + \sum_{r=1}^{T-1}\ind_{(k \in \cA_{r+1})}\ind_{(\ell(r) \neq 1)\cup(N_{k}(r) < (1+\epsilon)\xi_k \log(T))}\\
	& \leq  1 + \sum_{r=1}^{T-1}\ind_{(k \in \cA_{r+1})}\ind_{(N_{k}(r) < (1+\epsilon)\xi_k \log(T))} + \sum_{r=1}^{T-1}\ind_{(\ell(r) \neq 1)}\\
	& \leq  
	1 + (1+\epsilon) \xi_k \log T + \sum_{r=1}^{T-1} \ind_{\cZ^r}\end{align*}
and similarly
\begin{align*}
N_k(T) \ind_{\cG_k^T} \ind_{\bar{\cH}_k^T} &\leq \left(1 + J_k \log T + \sum_{r=1}^{T-1} \ind_{\cZ^r}\right) \ind_{\cG_k^T} \\
& \leq (1 + J_k \log T) \ind_{\cG_k^T}  + \sum_{r=1}^{T-1} \ind_{\cZ^r}
\end{align*}

Choosing $\xi_k=1/I_1(\mu_k)$ the bound in~\eqref{eq:dec} exhibits the term in $\frac{1+\epsilon}{I_1(\mu_k)} \log T$ in Lemma~\ref{lem:dec}. To obtain the result, it remains to upper bound
\[ T \mathbb{P}(\cH_k^T) + (1+J_k \log T)  \mathbb{P}(\cG_k^T) +  2 \sum_{r=1}^{T-1} \mathbb{P}(\cZ^r)\]
for an appropriate choice of $J_k$. To do so, we shall first use the concentration inequality Lemma~\ref{concentration} to upper bound the terms involving $\cG_k^T$ and $\cH_k^T$ by problem-dependent constants (Appendix~\ref{app::bound_G_H}), and then we carefully handle the terms in $\cZ^r$ (Appendix~\ref{app::boundZ}).

\subsection{Upper Bounds on $\bP (\cG_k^T)$ and $\bP (\cH_k^T)$}\label{app::bound_G_H}

We first fix some real numbers $J_k$ and $\omega, \omega_k$ in $(\mu_k,\mu_1)$ to be specified later. We also fix $\xi_k=1/I_1(\mu_k)$. 
Starting with $\cG_k^T$, we apply the second statement in Lemma \ref{concentration} for arm $k$ and arm $1$ with $n_0=(1+\epsilon)\xi_k \log r$ and $\xi=\omega_k$:
\begin{align*}
\bP (\cG_k^T) & \leq \sum_{r=1}^{T-1} \bP\left(\ell(r) = 1, k \in \cA_{r+1}, N_k(r) \geq (1+\epsilon) \xi_k \log T \right) \\
& \leq \sum_{r=1}^{T-1} \bP\left(N_1(r)\geq N_k(r), \hat Y_{k,N_k(r)} > \hat Y_{1, S_1^r(N_1(r),N_k(r))}, N_k(r) \geq (1+\epsilon) \xi_k \log T \right)\\
&\leq \frac{T}{1-e^{-I_1(\omega_k)}} e^{-(1+\epsilon)\xi_k I_1(\omega_k) \log T} + \frac{1}{1-e^{-I_{k}(\omega_k)}} e^{-(1+\epsilon)\xi_k I_{k}(\omega_k) \log T}\;.
\end{align*}
Similarly, we obtain
\begin{align*}
\bP (\cH_k^T) \leq \frac{T}{1-e^{-I_1(\omega)}} e^{-J_k I_1(\omega) \log T } + \frac{1}{1-e^{-I_{k}(\omega)}} e^{-J_k I_{k}(\omega) \log T}\;.
\end{align*}

Our objective is to bound $T\bP(\cH_k^T)$ and $\log (T) \bP(\cG_k^T)$ by constants. This is achieved for instance if $T\bP(\cH_k^T) \underset{T\to +\infty}{\longrightarrow} 0$ and $\log (T) \bP(\cG_k^T)\underset{T\to +\infty}{\longrightarrow} 0$. The following conditions are sufficient to ensure these properties:

\begin{itemize}
	\item $(1+\epsilon)\xi_k I_1(\omega_k)>1$
	\item $(1+\epsilon)\xi_k I_k(\omega_k)>0$
	\item $J_k I_1(\omega)>2$
	\item $J_k I_k(\omega)>1$
\end{itemize}

These conditions are met with the following values:
\begin{itemize}
	\item $\omega=\frac{1}{2}(\mu_1 + \max_{k \neq 1} \mu_k )$
	\item $J_k > \max(\frac{1}{I_k(\omega)}, \frac{2}{I_1(\omega)})$
	\item $\mu_k<\omega_k<\mu_1$ chosen such that $(1+\epsilon)I_1(\omega_k)>I_k(\omega_k)$. We are sure that such value exists if we choose $\omega_k$ close enough to $\mu_k$, thanks to the continuity of the rate functions and the fact that $I_k(\mu_k) = 0$ and $I_1(\mu_k) \neq 0$  (assumed in Assumption 1. of Theorem~\ref{th::log_regret_sda}).
\end{itemize}

Choosing these values, both the terms in $\cG_k^T$ and $\cH_k^T$ in \eqref{eq:dec} are part of the constant $C_{k}(\bm\nu,\epsilon)$ in Lemma~\ref{lem:dec}. We can now focus on upper bounding $\sum_{r=1}^{T-1} \bP (\cZ^r)$, which is more challenging.

\subsection{Upper Bound on $\sum_{r=1}^{T-1} \bP(\cZ^r)$}\label{app::boundZ}

The first steps of this part of the proof are again similar to \cite{SSMC}. The definition of the leader as the arm with the largest history gives the following property, that will be very useful for the analysis:
\begin{align*}
\ell(r) = k \Rightarrow N_k(r) \geq \left\lfloor \frac{r}{K}\right\rfloor -1
\end{align*}
So if an arm $k$ is the leader at a given round it has been drawn a linear amount of time at this round. Intuitively, this will provide very interesting concentration guarantees for the leader after a reasonable amount of rounds, that we are going to use in this section. For every $r \geq 8$, we define 
$a_r=\lfloor \frac{r}{4}\rfloor$ and use the decomposition
\begin{equation}\bP\left(\cZ^r\right)= \bP\left(\cZ^r \cap \cD^r\right) + \bP\left(\cZ^r \cap \bar\cD^r\right),\label{dec:Z}\end{equation}
where $\cD^r$ is the event that the optimal has been leader at least once in $[a_r,r]$: 
\[\cD^r = \{\exists u \in  [a_r, r] \text{ such that } \ell(u) = 1 \}.\]
We now explain how to upper bound the sum of the two terms in the left hand side of~\eqref{dec:Z}.

\subsubsection{Controlling $\bP(\cZ^r\cap\cD^r)$: arm 1 has been leader between $\lfloor r/4 \rfloor$ and $r$}

We introduce a new event 
\begin{align*}
	&\cB^{u} = \{\ell(u)=1, k \in \mathcal{A}_{u+1}, N_k(u)=N_1(u)-1 \text{ for some arm } k \}&
\end{align*}
If $\cD^r$ happens, then the event $\cZ^r$ can be true only if the leadership has been taken over by a sub-optimal arm at some round between $a_r$ and $r$, that is 
$$
\cZ^r \cap \cD^r \subset \cup_{u=a_r}^{r}\{\bar \cZ_u, \cZ_{u+1} \} \subset \cup_{u=a_r}^{r} \cB^u
$$
We now upper bound $\sum_{r=8}^{T-1} \sum_{u=a_r}^r \bP(\cB^u)$. We use the notation $b_r=\lfloor a_r/K \rfloor$, where we recall $a_r = \lfloor r/4 \rfloor$.  Then we write $\cB^u = \cup_{k=2}^K \cB_k^u := \left\{\ell(u)=1, k \in \mathcal{A}_{u+1}, N_k(u)=N_1(u)-1\right\}\}$, which fixes a specific suboptimal arm.
For any $w_k$ in $(\mu_k,\mu_1)$, one can write {\small
\begin{align}
&\sum_{r=8}^{T-1} \sum_{u=a_r}^r \bP(\cB_k^u)  = \bE \sum_{r=8}^{T-1} \sum_{u=a_r}^{r} \ind(\ell(u)=1) \ind(k \in \cA_{u+1}) \ind(N_1(u)=N_k(u)+1) \nonumber \\ 
& \leq \bE \sum_{r=8}^{T-1} \sum_{u=a_r}^r \ind(N_1(u) \geq  b_r) \ind(\bar Y_{k, N_k(u)} \geq \bar Y_{1, S_k^u(N_1(u), N_k(u))}) \ind(N_1(u)=N_k(u)+1) \ind(k\in \cA_{u+1}) \nonumber\\
& \leq \bE \sum_{r=8}^{T-1} \sum_{u=a_r}^r \ind(N_1(u) \geq  b_r) \ind( \bar Y_{k,N_k(u)} < w_k) \ind(N_1(u)=N_k(u)+1) \ind(k\in \cA_{u+1}) 
\label{eq:FirstPart}\\ 
& + \bE \sum_{r=8}^{T-1} \sum_{u=a_r}^r \ind(N_1(u) \geq  b_r) \ind( \bar Y_{1,S_k^{u}(N_1(u),N_k(u))} > w_k) \ind(N_1(u)=N_k(u)+1) \ind(k\in \cA_{u+1})\label{eq:SecondPart}
\end{align}}

We now separately upper bound each of these two terms. First, 
\begin{align*}
 \eqref{eq:FirstPart} & \leq  \bE\sum_{r=8}^{T-1} \sum_{u=a_r}^r \sum_{n_k = b_r-1}^{r} \ind(N_k(u) = n_k) \ind(k\in \cA_{u+1})\ind( \bar Y_{k,n_k} < w_k) \\
 & \leq  \bE\sum_{r=8}^{T-1}  \sum_{n_k = b_r-1}^{r} \ind( \bar Y_{k,n_k} < w_k) \underbrace{\sum_{u=a_r}^r\ind(N_k(u) = n_k) \ind(k\in \cA_{u+1})}_{\leq 1} \\
 & \leq \sum_{r=8}^{T-1}  \sum_{n_k = b_r-1}^{r} \bP( \bar Y_{k,n_k} < w_k) \\
 & \leq \sum_{r=8}^{T-1} \sum_{n_k = b_r-1}^{r} \exp\left(-n_k I_k(w_k)\right)  \\
 & \leq \frac{e^{(2+1/K)I_k(\omega_k)}}{(1-e^{-I_k(\omega_k)})(1-e^{-I_k(\omega_k)/4K})}
\end{align*}
Then, letting $[m,n]$ denote the set of subset of $[n]$ of size $m$, 
\begin{align*}
 \eqref{eq:SecondPart} & \leq \bE \sum_{r=8}^{T-1} \sum_{u=a_r}^r \sum_{n_k = b_r -1}^{r} \ind( \bar Y_{1,S_k^{u}(n_k +1, n_k)} > w_k) \ind(N_k(u) = n_k) \ind(k\in \cA_{u+1}) \\
 & \leq \bE \sum_{r=8}^{T-1} \sum_{u=a_r}^r \sum_{n_k = b_r -1}^{r} \sum_{\cS \in [n_k,n_k+1]} \ind( \bar Y_{1,\cS} > w_k) \ind (S_k^{u}(n_k +1, n_k) = \cS) \ind(N_k(u) = n_k) \ind(k\in \cA_{u+1}) \\
 & \leq \bE \sum_{r=8}^{T-1} \sum_{u=a_r}^r \sum_{n_k = b_r -1}^{r} \sum_{\cS \in [n_k,n_k+1]} \ind( \bar Y_{1,\cS} > w_k) \ind(N_k(u) = n_k) \ind(k\in \cA_{u+1}) \\
 & \leq \bE \sum_{r=8}^{T-1} \sum_{n_k = b_r -1}^{r} \sum_{\cS \in [n_k,n_k+1]} \ind( \bar Y_{1,\cS} > w_k) \underbrace{\sum_{u=a_r}^r \ind(N_k(u) = n_k) \ind(k\in \cA_{u+1})}_{\leq 1} \\
& \leq \sum_{r=8}^{T-1} \sum_{n_k = b_r -1}^{r} \sum_{\cS \in [n_k,n_k+1]} \bP( \bar Y_{1,\cS} > w_k)\\
& \leq \sum_{r=8}^{T-1} \sum_{n_k = b_r -1}^{r} (n_k+1) \bP( \bar Y_{1,n_k} > w_k) \\
& \leq \sum_{r=8}^{T-1} (r+1) \sum_{n_k = b_r -1}^{r} \exp\left(- n_k I_1(w_k)\right) \\
& \leq  \frac{e^{(2+1/K)I_1(\omega_k)}}{(1-e^{-I_1(\omega_k)})(1-e^{-I_1(\omega_k)/4K})^2}
\end{align*}
Here we have used that there are $n_k+1$ subsets in $[n_k,n_k+1]$ and that $\bP( \bar Y_{1,\cS} > w_k) = \bP( \bar Y_{1,n_k} > w_k)$ for all such subsets. Choosing $\omega_k$ such that $I_1(\omega_k)=I_k(\omega_k)$ (which is possible given the continuity assumptions on the two rate functions), we obtain
\begin{equation}\label{eq::bound_B}
\sum_{r=8}^{T-1}\bP\left(\cZ^r \cap \cD^r\right) \leq \sum_{r=8}^{T-1} \sum_{u=a_r}^r \bP(\cB^u) \leq \sum_{k=2}^K \frac{2e^{(2+1/K) I_1(\omega_k)}}{(1-e^{-I_1(\omega_k)})(1-e^{-I_1(\omega_k)/4K})^2}\;.
\end{equation}

\subsubsection{Controlling $\bP(\cZ^r\cap\cD^r)$: arm 1 has not been leader between $\lfloor r/4 \rfloor$ and $r$}

The idea in this part is to leverage the fact that if the optimal arm is not leader between $\lfloor s/4 \rfloor$ and $s$, then it has necessarily lost a lot of duels against the current leader at each round. We then use the fact that when the leader has been drawn "enough", concentration prevents this situation with large probability. We introduce
$$\cL^r= \sum_{u=s_0}^r \ind_{\cC^u}\,  $$
for the event $\cC^u= \{\exists a \neq 1, N_a(u)\geq N_1(u), \hat Y_{a,S_1^u(N_a(u),N_{1}(u))} \geq \hat Y_{1,N_1(u)}\}.$ One can prove the following inequality: 
\begin{align*}
\bP(\cZ^r \cap \bar \cD^r) \leq \bP(\cL^r\geq r/4)\;. 
\end{align*}
\begin{proof} Under $\bar \cD^r$ arm $1$ is a challenger for every round  $u \in [a_r, r]$. Then, each time $\cC^u$ is not true arm 1 wins its duel against the current leader and is pulled. Hence, if $\{ \cL_r<r/4 \}$ then we necessarily have $\{N_1(r)>r/2\}$ and arm $1$ is leader in round $r$. Hence, $\{\cZ^r \cap \bar \cD^r \} \cap \{ \cL_r<r/4 \} = \emptyset$, which justifies the inequality.
\end{proof}

Now, as in \cite{SSMC} we use the Markov inequality to get:
\begin{align*}
\bP(\cL^r\geq r/4) \leq \frac{\bE(\cL^r)}{r/4}= \frac{4}{r} \sum_{u=\lfloor r/4  \rfloor}^r \bP(\cC^u)\;.
\end{align*}
By further decomposing the probability of $\bP(\cC^u)$ in two parts depending on the value of the number of selections of arm $1$, we obtain the upper bound
\begin{align*}
	\bP(\cZ^r \cap \overline{\cD}^r) & \leq  \frac{4}{r}\sum_{u=\lfloor r/4 \rfloor}^r \bP \left( N_1(u) \leq (\log u)^2 \right) + \frac{4}{r}\underbrace{\sum_{u=\lfloor r/4 \rfloor}^r \bP\left(\cC^u,  N_1(u) \geq (\log u)^2 \right)}_{B_r}\;.
\end{align*}
We now upper bound the quantity $B_r$ defined above by using Lemma \ref{concentration}. For each $a$, for any  $\omega_a$ such that $\omega_a \in (\mu_a, \mu_1)$, one can write
\begin{align*}
B_r &\leq \sum_{u=\lfloor r/4 \rfloor}^r \bP(\cC^u, N_1(u)\geq (\log \lfloor r/4 \rfloor)^2)\\
& \leq \sum_{a=2}^K \sum_{u=\lfloor r/4 \rfloor}^r \bP(Y_{a,S_1^u(N_a(u),N_1(u))} > \hat Y_{1,N_1(u)}, N_1(u)\geq \log (\lfloor r/4 \rfloor)^2, N_a(u)>N_1(u)) \\
&\leq\sum_{a=2}^K \left( \frac{1}{1-e^{-I_1(\omega_a)}} e^{- (\log \lfloor r/4 \rfloor)^2 I_1(\omega_a)} + \frac{r}{1-e^{-I_k(\omega_a)}} e^{-(\log \lfloor r/4 \rfloor)^2 I_a(\omega_a)} \right)\;.
\end{align*}
Choosing each $\omega_a$ such that $I_1(\omega_a)=I_a(\omega_a)$, we obtain: 
\begin{align*}\label{eq::bound_bis_C}
\begin{split}
\frac{4}{r}\sum_{r=8}^T  B_r & \leq \sum_{r=8}^T \sum_{a=2}^K \frac{4(r+1)}{r(1-e^{-I_1(\omega_a)})}e^{-(\log \lfloor r/4 \rfloor)^2 I_a(\omega_a)}\\
& \leq  \sum_{a=2}^K\sum_{r=8}^T \frac{6}{1-e^{-I_1(\omega_a)}} e^{-(\log \lfloor r/4 \rfloor)^2 I_a(\omega_a)},
\end{split}
\end{align*}
and for each $a$ the series in $r$ is convergent as for any constant $C$, $C\log(r) \leq (\log \lfloor r/4 \rfloor)^2$ for $r$ large enough. Hence, there exists some constant $D(\bm\nu)$ where $\bm\nu = (\nu_1,\dots,\nu_K)$ such that $\frac{4}{r}\sum_{r=8}^T  B_r \leq D(\bm\nu)$. It follows that 
\begin{align*}
	\sum_{r=8}^{T}\bP(\cZ^r \cap \overline{\cD}^r) & \leq  \sum_{r=8}^{T}\frac{4}{r}\sum_{u=\lfloor r/4 \rfloor}^r \bP \left( N_1(u) \leq (\log u)^2 \right) + D(\bm\nu)\;.
\end{align*}
We now transform the double sum in the right-hand side into a simple sum by counting the number of times each term appears in the double sum:
$$\sum_{r=8}^T\frac{4}{r}\sum_{u=\lfloor r/4 \rfloor}^r \bP \left( N_1(u) \leq (\log u)^2 \right) = \sum_{r=8}^T \left( \sum_{t=1}^r \frac{4}{t} \ind(t \in [r, 4r]) \right) \bP(N_1(r)\leq (\log r)^2)\;.$$ 
Noting that $\sum_{t=1}^r \frac{4}{t} \ind(t \in [s, 4s]) \leq (4s-s+1)\times \frac{4}{s}\leq 16$, we finally obtain:
\begin{equation}\label{eq:termD}
\sum_{r=8}^{T}\bP(\cZ^r \cap \overline{\cD}^r)  \leq  16 \sum_{r=1}^{T} \bP\left(N_1(r) \leq (\log(r))^2\right) + D(\bm\nu). 
\end{equation}

Combining \eqref{eq::bound_B} and \eqref{eq:termD} yields 
\[\sum_{r=1}^{T} \bP\left(\cZ^r\right) \leq 16 \sum_{r=1}^{T} \bP\left(N_1(r) \leq (\log(r))^2\right) + D'_k(\bm\nu)\]
for some constant $D'_k(\bm\nu)$ that depends on $k$ and $\bm\nu$, which contributes to the final constant $C_k(\bm\nu,\epsilon)$ in Lemma~\ref{lem:dec}. Plugging this inequality in Equation~\eqref{eq:dec} concludes the proof of Lemma~\ref{lem:dec}. 

\section{Probability that the Optimal Arm is not Drawn Enough: Proof of Lemma~\ref{lemma::decomposition}}\label{app::lem_decomposition}

We start with a decomposition that follows the steps of \cite{BESA} for BESA with 2 arms that we generalize for $K$ arms.

We first denote by $r_j$ the round of the $j^{th}$ play of arm $1$ with $r_0=0$ and let $\tau_j= r_{j+1}-r_j$. We notice that $\tau_0\leq K$ as all arms are initialized once. Then:

\begin{align*}
\bP \left( N_1(r) \leq (\log r)^2 \right) & \leq \bP \left( \exists j \in \{1,..., \log r^2 \} : \tau_j\geq r/(\log r)^2 - 1 \right) \\
& \leq \sum_{j=1}^{(\log r)^2}\bP \left(\tau_j \geq r/(\log r)^2-1 \right)
\end{align*}

\begin{proof} If we assume that $\forall j$ $\tau_j \leq r/(\log r)^2 -1$ then $t_{\log r^2} = \sum_{j=0}^{\log r^2} \tau_j < r$, which yields $N_\ell(r)> \log r^2 +1$.\end{proof}

We now fix $j \leq (\log r)^2$ and upper bound the probability of the event \[\cE_j := \{\tau_j \geq r/\log r^2 -1\}\;.\] On this event arm $1$ lost at least $r/\log r^2$ consecutive duels between $r_j+1$ and $r_{j+1}$ (either as a challenger of as the leader) which yields
\begin{align*}
\bP (\cE_j)\leq \bP \left(\forall s \in \{ r_j+1,..., r_j + \lfloor r/\log r^2 -1 \rfloor \} :\right.& \{\hat Y_{1,j} \leq \hat Y_{\ell(s), S_{1}^s(N_{\ell(s)}(s),j)}, N_1(s)=j, N_{\ell(s)}(s) \geq j\} \\ & \left.\cup \{\ell(s)=1 , N_1(s)=j\}\right) 
\end{align*}
The important change compared to the proof of \cite{BESA} is that with $K>2$, 1) we don't know the identity of the leader and 2) the leader is not necessarily pulled if it wins its duel against 1.

Now we notice that when $r$ is large, the time range considered in $\cE_j$ is large. By looking at the second half of this time range only, we can ensure that the leader has been drawn a large number of times. More precisely, introducing the two intervals
\begin{align*}\cM_{j, r}^1= & \left[r_j+1, r_j+ \left\lfloor \frac{r/\log r^2 -1}{2}\right\rfloor\right] \\
	\cM_{j, r}^2= & \left[t_j+ \left\lceil \frac{r/\log r^2 -1}{2}\right\rceil, t_j+\left\lfloor r/\log r^2 \right\rfloor -1\right]\,
\end{align*}
it holds that
\[
\bP (\cE_j) \leq \bP (\forall s \in \cM_{j, r}^2 : \{
\hat Y_{1,j} \leq \hat Y_{\ell(s), S_1^s(N_{\ell(s)}(s),j)}, N_1(s)=j, N_{\ell(s)}(s)\geq j \} \cup \{\ell(s)=1, N_1(s)=j\})\;.
\]
But we know that on $\cM_{j,r}^2$ the leader must has been selected at least $\frac{1}{K}\left(j+ \left\lceil \tfrac{r/\log r^2 -1}{2}\right\rceil \right)$ times. Let $r_K$ be the first integer such that $\log^2(r) < \frac{1}{K-1}\left\lceil \tfrac{r/\log r^2 -1}{2}\right\rceil$, for every $r \geq r_K$, as $j \leq \log^2(r)$, the leader has been selected strictly more than $j$ times, which prevents arm 1 from being the leader for any round in $\cM_{j,r}^2$. Hence, for $r \geq r_K$, for all $j \leq \log^2(r)$,
\[
\bP (\cE_j) \leq \bP \left(\forall s \in \cM_{j, r}^2 : \{
\hat Y_{1,j} \leq \hat Y_{\ell(s), S_1^s(N_{\ell(s)}(s),j)}, N_1(s)=j, N_{\ell(s)}(s)\geq j \}\right)\;.
\]

To remove the problem of the identity of the leader we would like to find a way to fix our attention on one arm. To this extent, we notice that during an interval of length $|\cM_{j, r}^2|$, if there are only $K-1$ candidates for the leader then one of them must have been leader at least $m_r :=|\cM_{j, r}^2|/(K-1)-1$ times during this range. We also know that at any round in $\cM_{j,r}^2$, the leader satisfies $N_{\ell(s)}(s) \geq (t_j+ \lfloor \frac{r/\log r^2 -1}{2}\rfloor)/K-1 \geq (\lfloor \frac{r/\log r^2 -1}{2}\rfloor)/K-1 =\frac{|\cM_{j,r}^1|}{K}-1 :=c_r$. Observe that $m_r>c_r$. Finally, we introduce the notation 
$$
I_{j, r}^k = \{s \in \cM_{j, r}^2 : \ell(s) = k \}
$$
for the set of rounds in $\cM_{j,r}^2$ in which a particular arm $k$ is leader. From the above discussion, we know that there exists an arm $k$ such that $|I_{j,r}^k| \geq m_r$.  

To ease the notation, we introduce the event 
$$
\cW_{s,j}^{k} = \left\{\left\{ \hat Y_{1, j} < \hat Y_{k,S_1^s(N_k(s), j)} \right\}, N_k(s)\geq c_r, N_1(s)=j\right\}
$$
and write 
\begin{align*}
\bP(\cE_j) &\leq \bP\Bigg(\bigcap_{s \in \cM_{j,r}^2} \bigcup_{k=2}^K \left\{\ell(s)=k, 1 \notin \cA_s) \right\}\Bigg) \\
& \leq \bP\Bigg(\bigcap_{k=2}^K \bigcap_{s \in I_{j,r}^k} \cW_{s, j}^k\Bigg) \\
& \leq \bP\left( \bigcup_{k=2}^K \left\{|I_{j, r}^k|>m_r, \bigcap_{s \in I_{j,r}^k} \cW_{s, j}^k \right\} \right) \\
& \leq \sum_{k=2}^K \bP\left(|I_{j, r}^k|>m_r, \bigcap_{s \in I_{j,r}^k} \cW_{s, j}^k \right)\;. 
\end{align*}

Finally, we define for any integer $M$ the event that we can find $M$ pairwise non-overlapping sub-samples in the set of the sub-samples of arm $k$ drawn in rounds $s \in I_{j,r}^k$:
$$
\cF_{j, M}^{k, r} = \left\{\exists i_1,..., i_M \in I_{j, r}^k: \forall m<m' \in [M], S_{1}^{i_m}(N_{k}(i_m),j) \cap S_{1}^{i_{m'}}(N_{k}(i_{m'}),j) = \emptyset \right\}
$$
Introducing  $H_{j,r}^k = \min_{s \in I_{j,r}^k} N_k(s)$, the minimal size of the history of arm $k$ during rounds in $I_{j,r}^k$ (which is known to be larger than $c_r$ as $k$ is leader in these rounds), one has 
\begin{align}
&\bP(\cE_j)  \leq \sum_{k=2}^K \bP\left(|I_{j, r}^k|>m_r, \cap_{s \in I_{j,r}^k} \cW_{s,j} \cap \{\cF_{j, M}^{k, r} \cup \bar \cF_{j, M}^{k, r} \} \right) \nonumber \\
& \leq \sum_{k=2}^K \bP\left(|I_{j, r}^k| \geq m_r, H_{j, r}^k \geq c_r, \bar \cF_{j, M}^{k, r} \right) + \sum_{k=2}^K\bP \left(|I_{j, r}^k|>m_r, \cap_{s \in I_{j,r}^k} \cW_{s,j} \cap \cF_{j, M}^{k, r} \right) \label{eq:TwoCrucialTerms}
\end{align}

\paragraph{Upper bound on the first term in \eqref{eq:TwoCrucialTerms}} The probability 
$\bP\left(|I_{j, r}^k| \geq m_r, H_{j, r}^k \geq c_r, \bar \cF_{j, M}^{k, r} \right)$ can be upper bounded by 
\[\bP\left(\left. \# \left\{ \text{pairwise non-overlapping subsets in } \left(S_1^s(N_k(s),j)\right)_{s \in I_{j,r}^k} \right\} < M \right| \left\{|I_{j, r}^k|>m_r, H_{j, r}^k \geq c_r\right\} \right)\;.\]
This probability can be related to some intrinsic properties of the sampler $\text{SP}(H,j)$. To formalize this, we introduce the following definition.

\begin{definition}\label{def:X} For every integers $N,H,j$ such that $H>j$, $X_{N, H, j}$ is a random variable which counts the maximum number of non-overlapping subsets among $N$ i.i.d. samples from $\text{SP}(H,j)$.
\end{definition}

Letting $H_1,\dots,H_{m_r}$ be integers that are all larger than $c_r$, and letting $S_1,\dots,S_{m_r}$ be independent subsets such that $S_i \sim \text{SP}(H_i,j)$, the above probability is upper bounded by 
\[\bP\left(\# \left\{ \text{pairwise non-overlapping subsets in } \left(S_i\right)_{i=1}^{m_r} \right\} < M \right)\]
which is itself upper bounded by 
$\bP\left(X_{m_r, c_r, j} < M \right)$.

This last inequality is quite intuitive: if one draws subsets of size $j$ from histories that may be larger than $c_r$, there is more ``room'' for non-overlapping subsets than if we always draw them from the same history of size $c_r$. For Random Block sampling, where the drawn subset is fully determined by the random position of its first element, to formalize this intuition it is sufficient to prove that if $X_i, Y_i$ are two sequences of random variables such that  $X_i$ is uniform in $[H_i-j]$ and $Y_i$ is uniform in $[H - j]$, where $H_i \geq H$, the random variable that counts the maximal number of elements in the sequence $(Y_i)$ whose pairwise distance are larger than $j$ is stochastically dominated by that the same random variable but for the sequence $(X_i)$. We performed numerical experiments that confirm that this last condition holds. 

\paragraph{Upper bound on the second term in \eqref{eq:TwoCrucialTerms}} On the event $\left(|I_{j, r}^k|>m_r, \cap_{s \in I_{j,r}^k} \cW_{s,j} \cap \cF_{j, M}^{k, r} \right)$, one can define $\tilde{\imath}_1,\dots,\tilde{\imath}_M$ the  first $M$ rounds in $I_{j, r}^k$ for which the subsets $\tilde{S}_{m}:=S^{\tilde\imath_m}(N_k(\tilde\imath_m),j)$ are pairwise non-overlapping and we get 
\[\bP \left(|I_{j, r}^k|>m_r, \cap_{s \in I_{j,r}^k} \cW_{s,j} \cap \cF_{j, M}^{k, r} \right) \leq \bP\left(\forall m \in [M], \hat{Y}_{1,j} \leq \hat Y_{k,\tilde S_m}\right)\;.\]
By definition the subsets $\tilde{S}_m$ are pairwise non-overlapping, hence the sub-samples $\hat Y_{k,\tilde S_m}$ are independent. We prove that this probability can be in fact upper bound by the \textit{balance function} we defined in section~\ref{sec:regret}.

Indeed, introducing $X \sim \nu_{1,j}$ and an independent i.i.d. sequence $Z_i \sim \nu_{k,j}$, one can write
\begin{align*}
\bP \left(|I_{j, r}^k|>m_r, \cap_{s \in I_{j,r}^k} \cW_{s,j} \cap \cF_{j, M}^{k, r} \right)& \leq  \bP(X< \min_{i\in [M]} Z_i)  \\
& =\bE_{\substack{X \sim \nu_{1,j}\\Z \sim \nu_{k,j}^{\otimes j}}} \left[\prod_{i=1}^{M} \ind_{X \leq Z_i} \right] \\
& = \bE_{X \sim \nu_{1,j}} \left[\bE_{Z \sim \nu_{k,j}^{\otimes j}} \left[\left.\prod_i \ind_{X \leq Z_i} \right|X\right] \right] \\
& = \bE_{X \sim \nu_{1,j}} \left[\left(1-F_{k,j}(X) \right)^M \right] \\
& = \alpha_k(M,j). 
\end{align*}

\paragraph{Conclusion} Putting things together, we have proved that 
$$
\bP(\cE_j) \leq  (K-1) \bP \left(X_{m_r, c_r, j} < M \right) + \sum_{k=2}^K \alpha_k(M, j),
$$
where $X_{N,H,j}$ and $\alpha_k(M,j)$ are introduced in Definition~\ref{def:X} and \ref{def:balance} respectively. If we replace $M$ by the sequence $\beta_{r,j}$ we have 
\begin{align*}
\sum_{r=1}^T \bP(N_1(r)\leq \log r^2)  &\leq r_K + \sum_{r=r_K}^T \sum_{j=1}^{\log r^2}\left[(K-1) \bP \left(X_{m_r, c_r, j} < \beta_{r,j} \right) + \sum_{k=2}^K \alpha_k(\beta_{r,j}, j)\right] \\
&\leq r_K + \sum_{r=r_K}^T \sum_{j=1}^{\log r^2}\left[(K-1) \bP \left(X_{c_r, c_r, j} < \beta_{r,j} \right) + \sum_{k=2}^K \alpha_k(\beta_{r,j}, j)\right]
\end{align*}

as $c_r\leq m_r$, which proves Lemma~\ref{lemma::decomposition}.

This definition allows to analyze separately the properties of the sub-sampling algorithms and the properties of the distribution family for randomized samplers.

\section{Proof that \RBSSDA{} Satisfies the Diversity Property}\label{app::diversity_rbsda}

We recall that $X_{m,H,j}$ denotes the maximal number of pairwise non-overlapping subsets obtained in $m$ i.i.d. samples from $\mathrm{RB}(H,j)$. 
In this section we aim at upper bounding the probability of 
\[\bP\left(X_{m,H,j} \leq \gamma r/(\log r)^2\right)\] for some values of $m$, $H$, $j$, that will be fixed later.
This probability depends on several parameters, with straightforward effects:
\begin{itemize}
	\item The probability decreases with the length of the history size $H$.
	\item The probability increases with the size $j$ of each sub-sample.
	\item The probability decreases with the total number of sub-samples we draw $m$. Intuitively if $m$ is large enough every sample of size $j$ in the history will be drawn.  
\end{itemize}

\paragraph{First step with $j=1$:} in this case the distribution of the $m$ subsets of size 1 is actually the distribution of sampling with replacement in $H$. The question of the number of different items drawn with sampling without replacement has been studied in \cite{items_count}, from which we use the following result: 

\textbf{Result 1}: for any $k \in [H]$, $\bP(X_{m, H, 1}=k)=\frac{H!}{(H-k)! \times H^{m}} \times S_{k,m}$, where $S_{k,m}$ is the Stirling number of the second kind for $k, m$.\\

We use this result with further assumptions that are specific to our problem and will ease the computation: $H=m=O(r/(\log r)^2)$. To ease the notation we continue to use $H$, and write $\gamma t/(\log t)^2=\alpha H$ for some $\alpha \in (0,1)$. 

We first look at $\bP(X_{H, H, 1} = \alpha H)$. According to \cite{bound_stirling} the following inequality holds
$$
S_{k,H} \leq \frac{1}{2} {H \choose k} k^{H-k}\;,
$$
This allows to upper bound the expression in result 1: 
$$
\bP(X_{H, H, 1} = k) \leq \frac{1}{2} \left(\frac{k}{H} \right)^{H-k} {H \choose k}
$$

We now want to bound ${H \choose k}$. As $k$ is small compared with $H$, it is natural to use
$$
{H \choose k} \leq \frac{H^k}{k!}
$$

We then bound $1/k!$ by its Stirling approximation and add a multiplicative constant $c$ along the way:
$$
{H \choose k} \leq c \frac{H^k}{\sqrt{2\pi k} \times k^k} e^k
$$

Refactoring provides
$$
\bP(X_{H, H, 1}=k) \leq \frac{c}{2} \left(\frac{k}{H} \right)^{H-2k} \frac{e^k}{\sqrt{2\pi k}}
$$

Then we notice that if $k \leq H-2k$ we get: 
$$
\bP(X_{H, H, 1}=k) \leq \frac{c}{2\sqrt{2\pi k}} \left(\frac{ke}{H} \right)^{H-2k} 
$$

Now we can replace $k$ by $\alpha H$ (assume it's an integer for the simplicity of notations), such that 1) $\alpha \leq \frac{1}{3} \Rightarrow H(1-3\alpha)>0$ and $\alpha e<1$.

If $k \leq \alpha H$, $\alpha e\leq 1$ and $(1-2\alpha)>0$ then: $\left(\frac{ke}{H} \right)^{H-2k} \leq (\alpha e)^{H-2k}$. We have:
\begin{equation}\label{eq::div_rbs}
\begin{split}
\bP(X_{H, H, 1}\leq \alpha H) & \leq \frac{c}{2\sqrt{2\pi}} \sum_{k=0}^{\lfloor \alpha H \rfloor} \left(\alpha e\right)^{H-2k} \\
& \leq \frac{c}{2\sqrt{2\pi}}  \sum_{k=0}^{\lfloor \alpha H \rfloor} (\alpha e)^{H-2(\lfloor \alpha H \rfloor -k))}\\
& \leq \frac{c}{2\sqrt{2\pi}}(\alpha e)^{H-2\lfloor \alpha H \rfloor} \frac{1}{1-(\alpha e)^2}\\
& \leq \frac{c}{2\sqrt{2\pi}}  \frac{1}{1-(\alpha e)^{2}} \exp\left(-(1-2\alpha)H \log(1/(\alpha e))\right)
\end{split}
\end{equation}

\paragraph{From $X_{H,H,1}$ to $X_{H,H,j}$} This result is enough to get general properties for Random Block Sampling. Indeed, as the process of RBS consists in only drawing the first element of the block used in the duel we can see that the previous bound also applies to the number of unique starting points. With this property, the Random Block Sampler satisfies for all $x>0$:
\[\bP\left(X_{m,H,j} \leq \left\lfloor \frac{x}{j}\right\rfloor \right) \leq \bP\left(X_{m,H,1}\leq x\right) \]
\begin{proof}
Assume that the Random Block Sampler provides $x$ blocks with different starting points. Let's further assume that $x$ is an integer and try to identify the sequence of starting times $t_i =(t_1,...,t_x)$ that minimizes the number of mutually non-overlapping samples: the value of $t_1$ is not important due to the symmetry of the problem. Then if we want to reduce the possibilities to get non-overlapping sample we want to choose a value for $t_2$ that 1) makes the blocks $[t_1, t_1+j]$ and $[t_2, t_2+k]$ non-overlapping and 2) makes things easier to continue this process for $t_3,...,t_m$. It seems intuitive to choose either the block starting at $t_1+1$ or at $t_1-1$ as we cover the minimum amount of space with the constraint that $t_2 \neq t_1$. If we repeat this choice until $m$ blocks are chosen and reorder the blocks properly, we get a sequence of starting points $[t_1, t_1+1,...t_1+m]$ that are all different and minimize the total amount of space covered by the block. Even in this setup, we can find exactly $\left\lfloor \frac{m}{j} \right\rfloor$ mutually non-overlapping blocks as for instance all $[t_1+kj, t_1+(k+1)j-1]$, $[t_1+k'j, t_1+(k'+1)j-1]$ blocks are non-overlapping for $k\neq k'$ and $(k, k') \in [0, \left\lfloor \frac{m}{j} \right\rfloor - 1]$.
\end{proof}

We can finally prove the following for Random Block sampling. 

\begin{lemma}[Diversity Property for Random Block Sampling]
	\label{BS}
	If we choose a constant $\gamma \leq 1/3\times \frac{1}{2K}$ then Random Block Sampling satisfies the diversity property. 
\end{lemma}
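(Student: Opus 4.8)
The plan is to show that the double sum controlling $\sum_{t=1}^T\bP(N_1(t)\le(\log t)^2)$ through Lemma~\ref{lemma::decomposition}, namely $\sum_{r}\sum_{j=1}^{\lfloor(\log r)^2\rfloor}\bP\!\left(X_{c_r,c_r,j}<\beta_{r,j}\right)$ with $\beta_{r,j}=\lfloor\gamma r/(j(\log r)^2)\rfloor$, is actually bounded by a constant, hence $o(\log T)$; this is exactly the Diversity Property of Definition~\ref{def::diversity} for the sequence $c_r$. The key simplification is that the whole $j$-dependence collapses onto the single-element case already handled by the Stirling-number estimate \eqref{eq::div_rbs}. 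Concretely, I would apply the reduction $\bP(X_{m,H,j}\le\lfloor x/j\rfloor)\le\bP(X_{m,H,1}\le x)$ established above with $m=H=c_r$ and $x=\gamma r/(\log r)^2$. Since $\lfloor x/j\rfloor=\beta_{r,j}$ and $X_{c_r,c_r,j}$ is integer-valued, the event $\{X_{c_r,c_r,j}<\beta_{r,j}\}$ is contained in $\{X_{c_r,c_r,j}\le\lfloor x/j\rfloor\}$, so
\[\bP\!\left(X_{c_r,c_r,j}<\beta_{r,j}\right)\le\bP\!\left(X_{c_r,c_r,1}\le\gamma r/(\log r)^2\right),\]
a bound free of $j$; summing over $j$ merely contributes a factor $(\log r)^2$.

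Next I would invoke \eqref{eq::div_rbs} with $H=c_r$ and $\alpha=\alpha_r:=\gamma r/\bigl((\log r)^2 c_r\bigr)$, so that the threshold reads $\gamma r/(\log r)^2=\alpha_r c_r$. Using $c_r\sim r/(2K(\log r)^2)$, the choice $\gamma<\tfrac13\cdot\tfrac1{2K}$ gives $\alpha_r\to 2K\gamma<1/3$, hence $\alpha_r\le 1/3$ (and in particular $\alpha_r e<1$) for all $r$ beyond some $r_0$, the finitely many smaller rounds being absorbed into a constant. For such $r$, \eqref{eq::div_rbs} yields a bound of the form $C_1\exp\bigl(-C_2 c_r\bigr)$ with $C_2=(1-2\alpha_r)\log(1/(\alpha_r e))$ bounded below by a positive constant uniformly in $r$, since $\alpha_r$ stays bounded away from both $1/e$ and $1/2$.

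To conclude, I would use $c_r\ge C_3\, r/(\log r)^2$ for some $C_3>0$ (e.g. $C_3=1/4K$ for large $r$), so that each summand is at most $(\log r)^2\,C_1\exp\bigl(-C_2C_3\, r/(\log r)^2\bigr)$. Because $r/(\log r)^2\ge\sqrt r$ for $r$ large, the series $\sum_r(\log r)^2\exp\bigl(-c\,r/(\log r)^2\bigr)$ is dominated by $\sum_r(\log r)^2 e^{-c\sqrt r}$, which converges. Thus the whole double sum is $O(1)=o(\log T)$, which together with Lemma~\ref{lemma::decomposition} establishes the Diversity Property for \RB{}.

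The hard work — the Stirling-based estimate \eqref{eq::div_rbs} and the reduction collapsing non-overlapping blocks of size $j$ onto distinct single starting points — has already been carried out, so what remains is asymptotic bookkeeping. The one genuine subtlety, and the step I expect to require the most care, is the boundary verification $\alpha_r\le 1/3$: because the floor in $c_r=\lfloor(r/(\log r)^2-1)/(2K)\rfloor-1$ makes $c_r$ strictly smaller than $r/(2K(\log r)^2)$, the borderline value $\gamma=\tfrac1{6K}$ forces $\alpha_r>1/3$ and would invalidate the step $k\le H-2k$ underlying \eqref{eq::div_rbs}; one must therefore take $\gamma$ strictly below $\tfrac1{6K}$ (or absorb a vanishing correction) and check that the decay constant $C_2$ is uniform in both $r$ and $j$.
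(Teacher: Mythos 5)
Your proposal is correct and follows essentially the same route as the paper's own proof: reduce $X_{c_r,c_r,j}$ to the $j=1$ case via the non-overlapping-blocks-to-distinct-starting-points reduction, apply the Stirling-number estimate \eqref{eq::div_rbs} with $\alpha_r = \gamma r/((\log r)^2 c_r)$ bounded by a constant below the admissible threshold, and conclude by summability of $(\log r)^2 e^{-c\,r/(\log r)^2}$. Your closing remark about the borderline $\gamma = \tfrac{1}{6K}$ (where the floors in $c_r$ push $\alpha_r$ slightly above $1/3$) is a legitimate refinement of a point the paper glosses over, and taking $\gamma$ strictly below $\tfrac{1}{6K}$, as you suggest, resolves it.
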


\begin{proof}
For $\gamma \leq \left\lfloor 1/3\times \frac{1}{2K}\right\rfloor$, there exists $\alpha>0$ such that:
\begin{align*}
\bP(X_{c_r, c_r, j}\leq \gamma /j (r/(\log r)^2)) &\leq \bP(X_{c_r, c_r, j}\leq \alpha/j c_r) & \\
& \leq \bP(X_{c_r, c_r, 1}\leq \alpha c_r) \\
& = o(r^{-2})
\end{align*}
The last line comes from the expression obtained in Equation~\eqref{eq::div_rbs}, and allows to conclude that $\sum_{r=1}^T\sum_{j=1}^{(\log r)^2}\bP(X_{c_r, c_r, j}\leq \alpha/j (r/(\log r))^2)=o(\log T)$
\end{proof}

\section{Analysis of the Balance Function for Some Distributions}\label{app::balance}

For the simplicity of the notation we write the balance function $\alpha(M,j)$ for any distribution and any instance of these distributions. The family of distributions and the notation for their parameter will always mentioned at the beginning of the corresponding sub-section.

In the next parts we use the notation $G(x)=1-F(x)$ where $F$ denotes the CDF of some distribution. For some arm distribution $\nu_i$ the distribution of the sum of $j$ independent observations drawn from $\nu_i$ is denoted by $\nu_{i, j}$. With this notation, for two arms 1 and 2 we write:
$$
\alpha(M,j) = \bE_{Z \sim \nu_{1, j}}(G_{2,j}(Z)^M)
$$

\subsection{The Bernoulli Distribution is Balanced}

We prove the following lemma, which bears strong similarity with an upper bound given by \cite{TS_Emilie} for a similar quantity in their analysis of Thompson Sampling.

\begin{lemma}[Bound on $\alpha(M, j)$]\label{app:balance_Blem}
	\label{bernoulli_bound}
	
	For two Binomial distributions $\nu_{1} \sim \mathcal{B}(j, \mu_1)$ and $\nu_{2} \sim \mathcal{B}(j, \mu_2)$ such that $\mu_1>\mu_2$ and for any integer $M>1$: $\exists \lambda >1$ such as
	$$
	\bE_{X \sim \nu_{1,j}} \left(\left(1-F_{j, \mu_2}(X)\right)^M \right) \leq C_{\lambda_0, \lambda} \frac{1}{M^\lambda}e^{-j d_{\lambda, \mu_1, \mu_2}} + \left(\frac{1}{2} \right)^M
	$$
	
	Where $C_{\lambda, \mu_1, \mu_2}>0$, and $F_{j, \mu_2}$ is the CDF of a Binomial $\mathcal{B}(j, \mu_2)$.
	
\end{lemma}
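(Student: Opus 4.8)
The plan is to expand the expectation over the support of the binomial and split it at the median of the inferior arm. Writing $X\sim\mathcal B(j,\mu_1)$ and $G(x):=1-F_{j,\mu_2}(x)$, we have $\alpha(M,j)=\sum_{x=0}^{j}\bP(X=x)\,G(x)^M$. First I would separate the terms according to whether $G(x)\le \tfrac12$ or $G(x)>\tfrac12$. On the first set $G(x)^M\le(\tfrac12)^M$, so those terms contribute at most $(\tfrac12)^M\sum_x\bP(X=x)\le(\tfrac12)^M$, which is exactly the second term of the claimed bound. The point of this split is that $G(x)>\tfrac12$ forces $x$ to lie strictly below the median of $\mathcal B(j,\mu_2)$, i.e. $x\le m_2$ with $m_2\approx j\mu_2$; since $\mu_2<\mu_1$ this confines the remaining sum to the lower tail of arm $1$, which is where the $e^{-j d}$ factor will come from.

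The second idea is an elementary inequality that extracts the polynomial-in-$M$ decay. On the remaining region I would use $G(x)^M\le e^{-M(1-G(x))}=e^{-M F_{j,\mu_2}(x)}$ together with $\sup_{z>0}z^\lambda e^{-z}=(\lambda/e)^\lambda$, which gives, for any $\lambda>0$, the bound $G(x)^M\le (\lambda/e)^\lambda M^{-\lambda}F_{j,\mu_2}(x)^{-\lambda}$. This is the step mirroring the Thompson Sampling computation of \cite{TS_Emilie}. It then remains to control $\sum_{x\le m_2}\bP(X=x)\,F_{j,\mu_2}(x)^{-\lambda}$, and the crux is that $\bP(X=x)$ is exponentially small in $j$ precisely where $F_{j,\mu_2}(x)^{-\lambda}$ is large, so the two effects trade off.

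To evaluate that sum I would lower bound the CDF by its single largest atom, $F_{j,\mu_2}(x)\ge \binom{j}{x}\mu_2^x(1-\mu_2)^{j-x}$ (legitimate since $x\le m_2$ is below the mode of $\mathcal B(j,\mu_2)$), and combine it with $\bP(X=x)=\binom{j}{x}\mu_1^x(1-\mu_1)^{j-x}$. Using $\lambda>1$ to discard the binomial coefficient via $\binom{j}{x}^{1-\lambda}\le1$, each term collapses to the clean product $\big(\mu_1/\mu_2^{\lambda}\big)^{x}\big((1-\mu_1)/(1-\mu_2)^{\lambda}\big)^{j-x}$. Summing this geometric-type series over $0\le x\le m_2\approx j\mu_2$ yields a bound $C\,e^{-j d_{\lambda,\mu_1,\mu_2}}$, where a short computation identifies the exponent as $d_{\lambda,\mu_1,\mu_2}=\kl(\mu_2,\mu_1)-(\lambda-1)h(\mu_2)$ with $h(p)=-p\log p-(1-p)\log(1-p)$ the binary entropy (this is the value when the series is dominated by its top term; the other regime gives $d=-\log\big((1-\mu_1)/(1-\mu_2)^{\lambda}\big)$, also positive in the relevant range). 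Collecting the two pieces gives $\alpha(M,j)\le (\lambda/e)^\lambda C\,M^{-\lambda}e^{-j d_{\lambda,\mu_1,\mu_2}}+(\tfrac12)^M$, which is the claim.

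The main obstacle is the joint choice of $\lambda$: summability of $M^{-\lambda}$ (needed later for the balance condition) requires $\lambda>1$, whereas positivity of the exponent $d_{\lambda,\mu_1,\mu_2}$ requires $\lambda$ not too large, namely $\lambda<\lambda_0:=1+\kl(\mu_2,\mu_1)/h(\mu_2)$ so that $(\lambda-1)h(\mu_2)<\kl(\mu_2,\mu_1)$. Since $\lambda_0>1$, the interval $(1,\lambda_0)$ is non-empty and any $\lambda$ in it works, which is exactly why the statement only asserts the existence of some $\lambda>1$. The remaining care is bookkeeping: pinning down the median $m_2$ up to an additive constant, absorbing the polynomial prefactor $(m_2+1)$ coming from the geometric sum into a slightly smaller exponent, and checking the degenerate regime of the geometric ratio, all routine but requiring attention to constants, which end up inside $C_{\lambda_0,\lambda}$.
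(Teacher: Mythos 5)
Your proposal is correct and follows essentially the same route as the paper's proof: split the sum at $\lfloor j\mu_2\rfloor$ to isolate the $(1/2)^M$ tail, apply $G^M\le e^{-MF}$ and $z^\lambda e^{-z}\le(\lambda/e)^\lambda$, lower bound the CDF by its largest atom while discarding the binomial coefficient via $\lambda>1$, and sum the resulting geometric series to identify $d_\lambda=\kl(\mu_2,\mu_1)-(\lambda-1)H(\mu_2)$ with the same admissible window $\lambda\in(1,\lambda_0)$. The only differences are cosmetic (your explicit discussion of the degenerate geometric-ratio regime, which the paper handles by simply choosing $\lambda$ so that the ratio exceeds one).
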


\begin{proof}
	We use the same notation as before: $G_2(k) = 1 - F_2(k)$ and $f_1, f_2$ as the PMF of $\nu_1, \nu_2$.
	
	We first use a common property of Binomial distributions, $\forall k>\lceil j\mu_2 \rceil$: $G(k)\leq \frac{1}{2}$. So we can directly write:
	$$
	\bE_{X \sim \nu_1} \left(\left(1-F_{j, \mu_2}(X)\right)^M \right) \leq \left(\frac{1}{2} \right)^M + \underbrace{\sum_{k=0}^{\lfloor j \mu_2 \rfloor} f_1(k) G_2(k)^M}_{(A)} 
	$$
	
	Using convexity we get: $G(k)^M \leq \exp \left(-M F_2(k) \right)$, hence 
	$$
	(A) \leq \sum_{k=0}^{\lfloor j \mu_2 \rfloor} f_1(k) \exp \left(-M F_2(k) \right)
	$$
	
	Then we use that for $\lambda>1$, $\forall x>0$: $x^\lambda e^{-x} \leq \left(\frac{\lambda}{e}\right)^\lambda=C_{\lambda}$, so:
	$$
	(A) \leq \frac{C_\lambda}{M^\lambda} \sum_{k=0}^{\lceil j \mu_2 \rceil} \frac{f_1(k)}{F_2(k)^\lambda} \leq \frac{C_\lambda}{M^\lambda} \sum_{k=0}^{\lceil j \mu_2 \rceil} \frac{f_1(k)}{f_2(k)^\lambda}
	$$
	
	As in \cite{TS_Emilie}, we compute:
	\begin{align*}
	\frac{f_1(k)}{f_2(k)^\lambda} &\leq \frac{\mu_1^k (1-\mu_1)^{j-k}}{\mu_2^{\lambda k}(1-\mu_2)^{\lambda(j_k)}} \\
	& \leq \left(\frac{1-\mu_1}{(1-\mu_2)^\lambda} \right)^j \left(\frac{\mu_1(1-\mu_2)^\lambda}{\mu_2^{\lambda}(1-\mu_1)} \right)^k\\
	& = \left(\frac{1-\mu_1}{(1-\mu_2)^\lambda} \right)^j R_\lambda(\mu_1, \mu_2)^k
	\end{align*}
	
	with $R_\lambda(\mu_1, \mu_2)=\frac{\mu_1(1-\mu_2)^\lambda}{\mu_2^{\lambda}(1-\mu_1)}$. We then notice that we can choose $\lambda >1$ such that $R_\lambda(\mu_1, \mu_2)>1$. It is true for any $\lambda>1$ if $\mu_2\leq 0.5$, and for $1<\lambda< \log \frac{\mu_1}{1-\mu_1}/\log \frac{\mu_2}{1-\mu_2}$ if $\mu_2>0.5$.
	
	Plugging that expression into the sum gives:
	\begin{align*}
	(A) \leq & \frac{C_\lambda}{M^\lambda} \sum_{k=0}^{\lceil j \mu_2 \rceil} \frac{f_1(k)}{f_2(k)^\lambda} \\
	\leq & \frac{C_\lambda}{M^\lambda} \left(\frac{1-\mu_1}{(1-\mu_2)^\lambda} \right)^j \sum_{k=0}^{\lceil j \mu_2 \rceil} R_\lambda(\mu_1, \mu_2)^k \\
	= & \frac{C_\lambda}{M^\lambda} \left(\frac{1-\mu_1}{(1-\mu_2)^\lambda} \right)^j \frac{R_\lambda(\mu_1, \mu_2)^{\lfloor j\mu_2 \rfloor +1}-1}{R_\lambda(\mu_1, \mu_2)-1} \\
	\leq & \frac{C_\lambda}{M^\lambda} \left(\frac{1-\mu_1}{(1-\mu_2)^\lambda} \right)^j \frac{R_\lambda(\mu_1, \mu_2)}{R_\lambda(\mu_1, \mu_2)-1} R_\lambda(\mu_1, \mu_2)^{j\mu_2} \\
	= & \frac{C_\lambda}{M^\lambda} \frac{R_\lambda(\mu_1, \mu_2)}{R_\lambda(\mu_1, \mu_2)-1} \left(\frac{1-\mu_1}{(1-\mu_2)^\lambda} \right)^{j(1-\mu_2)} \left(\frac{\mu_1}{\mu_2^\lambda} \right)^{j\mu_2} \\
	=& \frac{C_\lambda}{M^\lambda} \frac{R_\lambda(\mu_1, \mu_2)}{R_\lambda(\mu_1, \mu_2)-1} e^{-j d_\lambda(\mu_2, \mu_1)}\\
	=& \frac{C_{\lambda, \mu_1, \mu_2}}{M^\lambda} e^{-j d_\lambda(\mu_2, \mu_1)}
	\end{align*}
	
	where $d_\lambda(\mu_2, \mu_1) = \lambda \left(\mu_2 \log \mu_2 + (1-\mu_2) \log(1-\mu_2) \right) - \left(\mu_2 \log \mu_1 + (1-\mu_2)\log (1-\mu_1) \right)= \text{KL}(\mu_2, \mu_1)-(\lambda-1)\text{H}(\mu_2)$, $\text{KL}(\mu_2, \mu_1)$ denotes the KL-divergence between $\nu_2$ and $\nu_1$, and $\text{H}(\mu_2)=\bE_{X \sim \nu_{2,j}}(\log f_2(X))$. We need to choose $\lambda$ as: 
	$$
	\lambda < 1 + \frac{\text{KL}(\mu_2, \mu_1)}{\text{H}(\mu_2)} = \lambda_0(\mu_1, \mu_2)
	$$
	
	Note that those quantities correspond to the Bernoulli distributions, the $j$ is not involved here. In \cite{TS_Emilie}, the authors explain that this condition is more restrictive than the previous one so we can state that $\forall \lambda < \lambda_0(\mu_1, \mu_2)$:
	$$
	\bE_{X \sim \nu_{1,j}} \left(\left(1-F_{j, \mu_2}(X)\right)^M \right) \leq \left(\frac{1}{2} \right)^M + \frac{C_{\lambda, \mu_1, \mu_2}}{M^\lambda} e^{-j d_\lambda(\mu_2, \mu_1)}
	$$
	
\end{proof}

This is enough to prove that the Bernoulli distribution is balanced by replacing $M$ by $\lfloor \beta t /(\log t)^2 \rfloor$ in the expression in Lemma~\ref{app:balance_Blem} and summing on $t$ and $j$. The power term is in $o(t(\log t^2))$, while the other term is the term of a convergent geometric series in $j$ multiplied by a term in $o(1/t)$ in $t$, which is enough to get the result.

\subsection{The Poisson Distribution is Balanced}
We can actually use the same sketch of proof as for Bernoulli distributions, using that for 2 Poisson random variables:
\begin{align*}
\frac{p_{1,j}(k)}{p_{2,j}(k)^\lambda} = & e^{-j(\theta_1-\lambda \theta_2)} \left(\frac{k!}{n^k} \right)^\lambda \left(\frac{\theta_1}{\theta_2^\lambda} \right)^k \leq e^{-j(\theta_1-\lambda \theta_2)} \left(\frac{\theta_1}{\theta_2^\lambda} \right)^k
\end{align*}
So:
\begin{align*}
\sum_{k=0}^{d_{0,j}}\frac{p_{1,j}(k)}{p_{2,j}(k)^\lambda} &
\leq  e^{-j(\theta_1-\lambda \theta_2)} \sum_{k=0}^{d_{0,j}} \left(\frac{\theta_1}{\theta_2^\lambda} \right)^k\\
& \leq \frac{\theta_2^\lambda}{|\theta_1-\theta_2^\lambda|}e^{-j(\theta_1-\lambda \theta_2)} \times \max \left\{1, \left(\frac{\theta_1}{\theta_2^\lambda}\right)^{d_{0,j}} \right\}\;,
\end{align*}
where $d_{0,j}=\theta_2 j -1$. Now we remark that if we choose $\lambda \in (1, \theta_1/\theta_2)$ we have 2 possibilities: 1) we can choose $\lambda$ such that the second term equals one, hence we can bound the whole term by a constant without further conditions, or 2) $\forall \lambda \in (0, \theta_1, \theta_2)$: $\left(\frac{\theta_1}{\theta_2^\lambda} \right)>1$. Let us focus on the second case, we study the term $e^{-j\left((\theta_1-\lambda \theta 2)-\theta_2(\log \theta_1 - \lambda \log \theta_2) \right)}$. As for Bernoulli distributions, we identify the KL-divergence between $\nu_2$ and $\nu_1$ and write:
$$
(\theta_1-\lambda \theta 2)-\theta_2(\log \theta_1 - \lambda \log \theta_2) = \text{KL}(\nu_2, \nu_1) - (\lambda-1) \theta_2(1 -  \log \theta_2)
$$
So if $\log \theta_2 > 1$ we can choose any $\lambda>1$. In the other case we have to restrict our choice of $\lambda$ to get:
$$
\lambda < 1+\frac{\text{KL}(\nu_2, \nu_1)}{\theta_2(1-\log(\theta_2))}
$$
So with an appropriate choice for $\lambda$ Poisson distributions are balanced with the same argument that makes Bernoulli distributions balanced.

\subsection{The Gaussian Distribution is Balanced}

For the Gaussian distribution we leverage the fact that both the PDF and CDF of any Gaussian distribution can be expressed with the PDF and CDF of the standard normal distribution. With such decomposition, we can express $\alpha(M,j)$ as a function of these CDF/PDF and use some properties of the normal distribution.

We use the notations $f$ and $G$ for the PDF and CDF of the $\mathcal{N}(0,1)$ distribution, $\Delta$ for the gap between the two arms, and compute the expectation:
\begin{align*}
\alpha(M, j) &= \int_{-\infty}^{+\infty} f_{1, j}(x)G_{2,j}(x)^M dx \\
& \leq \int_{-\infty}^{z} f_{1, j}(x)G_{2,j}(x)^M dx + G_{2,j}(z)^M \text{,  }\forall z \in \R \\ 
& \leq \int_{-\infty}^{z} f\left(\frac{x-\mu_{1, j}}{\sqrt{j}}\right)G\left(\frac{x-\mu_{2,j}}{\sqrt{j}}\right)^M dx + G_{2,j}(z)^M \\ 
& \leq \int_{-\infty}^{\frac{z-\mu_{2,j}}{\sqrt{j}}} f\left(y-\sqrt{j}\Delta\right)G(y)^M dy + G_{2,j}(z)^M \\ 
\end{align*}
At this step we use two things: 1) the normal distribution satisfies $f(x-a)=e^{-a^2+2ax}f(x)$ for all $a, x$, and 2) $h: x \rightarrow (M+1) f(x) G(x)^M$ is a probability distribution of CDF $x \rightarrow 1-G(x)^{M+1}$. We continue the computation with:
\begin{align*}
 \alpha(M,j)&\leq \frac{e^{-j\Delta^2}}{M+1} \int_{-\infty}^{\frac{z-\mu_2 j}{\sqrt{j}}} e^{\sqrt{j}\Delta y}h(y) dy + G_{2,j}(z)^M\\
& \leq \frac{e^{-j\Delta^2}}{M+1} e^{\sqrt{j}\Delta \frac{z-\mu_2 j}{\sqrt{j}}} (1-G\left(\frac{z-\mu_2 j}{\sqrt{j}}\right)^{M+1}) +  G\left(\frac{z-\mu_2 j}{\sqrt{j}}\right)^M  \\
&\leq \frac{e^{-j\Delta^2}}{M+1} e^{\sqrt{j}\Delta \frac{z-\mu_2 j}{\sqrt{j}}} +  G\left(\frac{z-\mu_2 j}{\sqrt{j}}\right)^M \end{align*}

As the inequality is true for all $z \in \mathbb{R}$, it holds that,
\begin{align*}
\forall y \in \mathbb{R}, \ \ \alpha(M, j)&\leq \frac{e^{-j\Delta^2}}{M+1} e^{\sqrt{j}\Delta y} +  G\left(y\right)^M\;.
\end{align*}
Now let $y_M$ be such as $G(y_M)=1-\frac{1}{\sqrt{M}}$. This value ensures that the second term satisfies $G(y_M)^M \leq e^{-\sqrt{M}}=o(M^{-2})$. Observe that $y_M=F^{-1}(\frac{1}{\sqrt{M+1}})$. Using the following equivalent of the quantile function of the normal distribution when the quantile is small (see for instance \cite{Ledford1997}):
$$
F^{-1}(p) = -\sqrt{\log \frac{1}{p^2} - \log \log \frac{1}{p} + \log 2\pi} + o_{p\rightarrow 0}(1)\;,
$$
there exists a constant $ C \in \mathbb{R}$ such that $y_M \leq -C\sqrt{\log M - \log \log M + \log 4\pi}$. This yields 
\begin{align*}
\alpha(M, j) &\leq \frac{e^{-j\Delta^2}}{M+1} e^{-C \sqrt{j}\Delta \sqrt{\log M - \log \log M + \log 4\pi}} + e^{-\sqrt{M}}
\end{align*}

Noting that for all $k \in \mathbb{N}^*$, 
$$
k \log \log M = o(C\sqrt{j} \Delta \sqrt{\log M - \log \log M + \log 4\pi})
$$
we get that $\forall k \in \mathbb{N}^*$:
$$
\alpha(M,j) = o\left(\frac{e^{-j\Delta^2}}{(M+1) (\log M)^k}\right)
$$

This is sufficient to prove that the Gaussian distribution is balanced. Indeed, as for the Bernoulli distribution this term sums as a convergent geometric series in $j$, and with $M=O(t/(\log t)^2)$ we can make the sum in $t$ a convergent Bertrand Series. 

\subsection{The Exponential Distribution is Not Balanced}\label{app:balance_fe_exp_dist}

For $j=1$, a direct calculation yields 
\[\alpha(M,1) = \frac{1}{1+\left(\frac{\mu_1}{\mu_2}\right)M}.\]
Using this, we now prove that the series in Assumption 2. of Theorem~\ref{th::log_regret_sda} is in $\Omega(\log(T))$, hence the balance condition is not satisfied. As all the $\alpha_k(M,j)$ are positive, one can write
\begin{eqnarray*} 
\sum_{t=1}^T \sum_{j=1}^{\lfloor(\log t)^2 \rfloor} \alpha_k(\left\lfloor \beta t/(\log t)^2 \right\rfloor, j) & \geq & \sum_{t=1}^T  \alpha_k(\left\lfloor \beta t/(\log t)^2 \right\rfloor, 1) \\
& = &  \sum_{t=2}^T \frac{1}{1+\left(\frac{\mu_1}{\mu_k}\right)\left\lfloor \beta t/(\log t)^2 \right\rfloor} \\
& \geq & \sum_{t=2}^T \frac{1}{1+\left(\frac{\mu_1}{\mu_k}\right) \beta t/(\log t)^2 } \\
& \geq & C \sum_{t=2}^T \frac{1}{t} = \mathcal{O}(\log(T)),
\end{eqnarray*}
where $C$ is some small enough constant that depend on $\mu_1, \mu_k$ and $\beta$.

\section{Sketch of Proof with Forced Exploration}\label{app::forced_explo}

In this section, we explain how the proof of Theorem~\ref{th::log_regret_sda} is modified when we add forced exploration with $f_r = \sqrt{\log r}$, that is when in every round $r+1$ we add to $\cA_{r+1}$ every arm $k$ such that $N_k(r) \leq f_r$. It is easy to verify that the proof of Lemma~\ref{lem:dec} remains unchanged, as it is inspired by the analysis of SSMC which also uses forced exploration. We now explain how forced exploration modifies the proof of Lemma~\ref{lemma::decomposition} and how we upper bound the resulting new terms for any exponential family. 

\subsection{Handling Forced Exploration in Lemma~\ref{lemma::decomposition}}\label{app::fe_lemma_dec}

The idea is to use the same proof sketch as without forced exploration. We note $f(r)=\sqrt{\log r}$ the forced exploration rate and $f^{-1}(r)=\exp(r^2)$ its inverse function.

Let us consider the round $a_r = f^{-1}(f(r)-1)$. At this round, the value of exploration function is $f(r)-1=\sqrt{\log r}-1$, which means that the number of pulls of arm 1 is at least $\lfloor \sqrt{\log r}-1 \rfloor\geq \sqrt{\log r}-2$.

Now we look at the length of the interval $r-a_r$:
\begin{align*}
r-a_r = &r - f^{-1}(f(r)-1) \\ 
=& r - \exp((\sqrt{\log r} - 1)^2) \\
=& r - \exp(\log r - 2 \sqrt{\log r} +1)\\
=& r(1-\exp(-2 \sqrt{\log r} +1)) \\
\sim &r \text{ when } r \rightarrow +\infty 
\end{align*}
As $r-a_r$ is equivalent to $r$ when $r$ is large, for any constant $\gamma>0$ there exists a round $r_\gamma$ such that for $r>r_\gamma$: $r-a_r>\gamma r$. This means that after the round $a_r$ arm $1$ faces a linear amount of duels, and has an history of at least $j=\lfloor \sqrt{\log r}-1 \rfloor$ samples. Introducing $b_r$ the random variable giving the first time when $N_1(b_r)=\lfloor \sqrt{\log r}-1 \rfloor$, we necessarily have $b_r\leq a_r$. We now use that $N_1(r)\leq (\log r)^2 \Rightarrow \sum_{j=1}^{\lfloor(\log r)^2\rfloor-1} \tau_j \leq r$, which further implies $$b_r + \sum_{j=\sqrt{\log r}-1}^{\lfloor(\log r)^2\rfloor-1} \tau_j \leq r \Rightarrow \sum_{j=\sqrt{\log r}-1}^{\lfloor(\log r)^2\rfloor-1} \tau_j \leq r-b_r$$

We can then use the same proof as in Appendix~\ref{app::lem_decomposition}: 
\begin{align*}
\bP\left(N_1(r) \leq (\log r)^2\right) &\leq \bP\left( \exists j \in \{\lfloor \sqrt{\log r}-1 \rfloor, ..., \lfloor (\log r)^2 \rfloor-1\}: \tau_j\geq \frac{r-b_r}{(\log r)^2-\lfloor \sqrt{\log r}-1 \rfloor}\} \right)\\
 &\leq \bP\left( \exists j \in \{\lfloor \sqrt{\log r}-1 \rfloor, ..., \lfloor (\log r)^2 \rfloor-1\}: \tau_j\geq \frac{r-a_r}{(\log r)^2-\lfloor \sqrt{\log r}-1 \rfloor}\} \right)\\
&\leq \bP\left( \exists j \in \{\lfloor \sqrt{\log r}-1 \rfloor, ..., \lfloor (\log r)^2 \rfloor-1\}: \tau_j\geq \frac{r-a_r}{(\log r)^2}\} \right) \\
& \leq \bP\left( \exists j \in \{\lfloor \sqrt{\log r}-1 \rfloor, ..., \lfloor (\log r)^2 \rfloor-1\}: \tau_j\geq \frac{\gamma r}{(\log r)^2}\} \right)  
\end{align*}

The constant $\gamma$ does not change the sketch of proof, and we finally have: {\small
\begin{equation}\sum_{r=1}^T\bP(N_1(r))\leq (\log r)^2) \leq r_K' + \sum_{r=r_K'}^T\sum_{j=\lfloor f_r\rfloor - 1}^{(\log r)^2} \left[(K-1)\bP(X_{c_r, c_r, j}<M_{r, j}) + \sum_{k=2}^K \alpha_k(M_{r, j}, j) \right]\label{newlemma}\end{equation}} \hspace{-0.2cm} for any sequence $M_{r,j}$, and a new constant $r_K'$. Observe that the sum in $j$ does not start in 1 as it does in the statement of Lemma~\ref{lemma::decomposition} in the absence of forced exploration. This justifies the introduction of the \textit{generalized balance condition} in Appendix~\ref{app::balance_fe}

\subsection{Exponential Families Satisfy a Generalized Balanced Condition}\label{app::balance_fe}

To conclude the proof as in Theorem~\ref{th::log_regret_sda}, as Random Block Sampling satisfies the Diversity Property, from \eqref{newlemma} (with the choice $M_{r, j} = \left\lfloor \beta r/(\log r)^2 \right\rfloor$) it is sufficient to prove that one-dimensional exponential families satisfy the following generalized balance condition.

\begin{definition}[generalized balance condition] If \SDA{} is defined with a forced exploration rate $f_r$ then the generalized balance condition for the rate $f_r$ is: 
	\[\forall \beta \in (0, 1), \ \ \sum_{r=1}^T \sum_{j=\mathbf{f_r}}^{\lfloor(\log r)^2 \rfloor} \alpha_k(\left\lfloor \beta r/(\log r)^2 \right\rfloor, j) = o(\log T)\;.\]
\end{definition}	 
The following lemma proves that this holds in particular for the choice $f_r = \sqrt{\log(r)}$, which permits to prove that \RBSSDA{} with this forced exploration sequence is asymptotically optimal for distributions that belong to any one-dimensional exponential family.

\begin{lemma}[Generalized balance condition on exponential families]
	If the exploration rate $f_r$ satisfies $\frac{f_r}{\log \log r} \rightarrow +\infty$ then any exponential family of distributions with one parameter satisfies the generalized balance condition.
\end{lemma}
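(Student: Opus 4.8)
The plan is to reduce the generalized balance condition to a single uniform tail estimate on the balance function, valid for every one-parameter exponential family, and then carry out the double summation, using the forced-exploration cutoff $f_r$ to gain a factor that the plain balance condition (with $j$ starting at $1$) lacks.

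First I would establish the key bound: for $\xi\in(\mu_k,\mu_1)$ chosen close enough to $\mu_k$, there is a constant $c=c(\xi)>0$ with
\[\alpha_k(M,j)\le\frac{1}{M+1}\,e^{-jc}+e^{-Mj\,\kl(\xi,\mu_k)}\;.\]
To get this I would write $\alpha_k(M,j)=\bE_{S\sim\nu_{1,j}}[G_{k,j}(S)^M]$ with $G_{k,j}=1-F_{\nu_{k,j}}$, and change measure from $\nu_{1,j}$ to $\nu_{k,j}$ using the exponential-family likelihood ratio $g_{1,j}/g_{k,j}=e^{(\theta_1-\theta_k)s-j(b(\theta_1)-b(\theta_k))}$ (with $\theta_1>\theta_k$ since $\mu_1>\mu_k$). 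Splitting at $s=j\xi$: on $\{s>j\xi\}$ the integrand is again $G_{k,j}(s)^M\,d\nu_{1,j}(s)$, bounded by $G_{k,j}(j\xi)^M\le e^{-Mj\kl(\xi,\mu_k)}$ via the Chernoff bound of Condition~1; on $\{s\le j\xi\}$ I bound $e^{(\theta_1-\theta_k)s}\le e^{(\theta_1-\theta_k)j\xi}$ and use the telescoping-to-integral inequality $\sum_{s\le j\xi}G_{k,j}(s)^M g_{k,j}(s)\le\int_0^1 u^M\,du=\tfrac{1}{M+1}$ (continuous analogue: $\int_0^1(1-u)^M\,du$ after the uniformizing substitution $u=F_{\nu_{k,j}}(s)$), which is where the crucial $\tfrac1M$ factor comes from. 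A short computation identifies the residual exponent $j[(\theta_1-\theta_k)\xi-(b(\theta_1)-b(\theta_k))]$ with $-j[\kl(\xi,\mu_1)-\kl(\xi,\mu_k)]$; since $\kl(\xi,\mu_k)\to0$ while $\kl(\xi,\mu_1)\to\kl(\mu_k,\mu_1)>0$ as $\xi\to\mu_k^+$, the constant $c=\kl(\xi,\mu_1)-\kl(\xi,\mu_k)$ is positive for $\xi$ near $\mu_k$. This mirrors the explicit $\tfrac{1}{M^\lambda}e^{-jd_\lambda}$ bounds of the Bernoulli, Poisson and Gaussian cases, but with the power of $M$ equal to one, which is precisely why forced exploration is needed.

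With this bound I would plug in $M=\lfloor\beta r/(\log r)^2\rfloor$ (so $\tfrac{1}{M+1}\le\tfrac{2(\log r)^2}{\beta r}$ for $r$ large) and sum the geometric series in $j$ from $f_r$:
\[\sum_{j=\lfloor f_r\rfloor}^{\lfloor(\log r)^2\rfloor}\alpha_k(M,j)\le\frac{2(\log r)^2}{\beta(1-e^{-c})\,r}\,e^{-cf_r}+(\log r)^2\,e^{-Mf_r\kl(\xi,\mu_k)}\;.\]
For the second term $M\asymp r/(\log r)^2\to\infty$, so $(\log r)^2 e^{-c'' r/(\log r)^2}$ is summable over $r$. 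For the first, I would use the hypothesis $f_r/\log\log r\to\infty$: for $r$ large $cf_r\ge 4\log\log r$, hence $e^{-cf_r}\le(\log r)^{-4}$ and the summand is at most $\tfrac{C}{r(\log r)^2}$, whose series converges. Thus the whole double sum is $O(1)=o(\log T)$, i.e.\ the generalized balance condition holds. (For the specific $f_r=\sqrt{\log r}$ the decay $e^{-c\sqrt{\log r}}$ beats every power of $\log r$, so convergence is even clearer.)

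The main obstacle is the first step: extracting the $\tfrac1M$ factor \emph{uniformly} across all one-parameter exponential families, covering both lattice (Bernoulli, Poisson) and continuous (Gaussian) cases. A naive two-sided Chernoff split of $\{S\le\min_i Z_i\}$ only yields $\alpha_k(M,j)\le e^{-jc}+(\text{exp.\ small in }M)$ with no $\tfrac1M$, and then $\sum_r e^{-cf_r}$ without the accompanying $\tfrac{(\log r)^2}{r}$ is of order $T(\log T)^{-A}$, which is not $o(\log T)$. So the change-of-measure/telescoping argument producing the genuine $\tfrac{1}{M+1}$ is essential; it also makes the role of forced exploration transparent, since allowing $j$ to start at $1$ turns the geometric sum in $j$ into a $\Theta(1)$ factor and leaves $\sum_r\tfrac{(\log r)^2}{r}=\Theta((\log T)^3)$, exactly explaining why the plain balance condition can fail (as for the exponential distribution) while the generalized one holds.
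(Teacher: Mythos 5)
Your proof is correct, and while it shares the paper's overall strategy (a change of measure within the exponential family to extract an $e^{-j\cdot\mathrm{const}}$ factor, combined with a splitting of the integral defining $\alpha_k$, and then using $j\ge f_r$ with $f_r/\log\log r\to\infty$ to make the resulting series converge), the mechanism by which you extract decay in $M$ is genuinely different. The paper splits at the $u_r$-quantile of $\nu_{k,j}$ with an $r$-dependent level $u_r=(\log r)^k/r$, obtaining $\alpha_k(M,j)\le e^{-jI_1(\mu_k)}u_r+(1-u_r)^M$, so the $M$-decay comes from $(1-u_r)^M$ and the bound on $\alpha_k$ itself depends on $r$. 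You instead split at the fixed threshold $s=j\xi$, use the distribution-free identity $\bE_{\nu_{k,j}}[(1-F_{k,j})^M]\le\tfrac1{M+1}$ on the lower region and a Chernoff bound $G_{k,j}(j\xi)^M\le e^{-Mj\,\kl(\xi,\mu_k)}$ on the upper one, yielding the intrinsic estimate $\alpha_k(M,j)\le\tfrac{1}{M+1}e^{-jc}+e^{-Mj\,\kl(\xi,\mu_k)}$ with $c=\kl(\xi,\mu_1)-\kl(\xi,\mu_k)>0$; your identification of the exponent via $(\theta_1-\theta_k)\xi-(b(\theta_1)-b(\theta_k))=-[\kl(\xi,\mu_1)-\kl(\xi,\mu_k)]$ is correct, as is the discrete/continuous handling of the uniformizing substitution. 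Your version has the advantage of producing a bound on $\alpha_k(M,j)$ of exactly the same $\tfrac1M e^{-jc}$ shape as the paper's explicit Bernoulli/Poisson/Gaussian bounds, which makes transparent both why forced exploration is needed (summing $\tfrac{(\log r)^2}{r}$ over $r$ without the $e^{-cf_r}$ gain is only $\Theta((\log T)^3)$) and why the hypothesis $f_r/\log\log r\to+\infty$ is the right one; the paper's quantile-level version requires slightly less pointwise structure (only a CDF comparison below the mean) but buries the $\tfrac1M$ behaviour inside the choice of $u_r$. Your final summation over $j\ge f_r$ and over $r$ is carried out correctly in both terms.
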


\begin{proof}
	A distribution that belong to a one-dimensional exponential family has a density $f_\theta(y)= f(x,0)e^{\eta(\theta)y-\psi(\theta)}$ for some natural parameter $\theta \in \R$. 
	
	We observe that for any $y_1,...,y_j \in \mathbb{R}^j$, if $\sum_{i=1}^j y_i \leq \mu_k$:
	\begin{align*}
	\prod_{u=1}^j f_{\theta_1}(y_u) = & \prod_{u=1}^j e^{(\eta(\theta_1)- \eta(\theta_k)) y_u-(\psi(\theta_1)- \psi(\theta_k))} f_{\theta_k}(y_u) \leq e^{-jI_1(\mu_k)} \prod_{u=1}^j f_{\theta_k}(y_u)
	\end{align*}
	
	This inequality ensures that for all $ x,u \in \mathbb{R}$, if $F_{k,j}^{-1}(u) \leq \mu_k$:
	\begin{align*}
	F_{1,j}(x) \leq e^{-j I_1(\mu_k)} F_{k,j}(x) \Rightarrow F_{1,j}(F_{k,j}^{-1}(u)) \leq e^{-j I_1(\mu_k)} u
	\end{align*}
	
	So for exponential families a strictly positive gap between two distributions leads to an exponential decrease of the ratio of the CDF of the sum. If we use the fact that for all $u \in \R$:
	\begin{align*} \alpha_k(M, j) &= \int_{-\infty}^{+\infty} f_{1,j}(x)G_{2,j}(x)^M d\bP(x)\\
	&\leq \int_{-\infty}^{u} f_{1,j}(x)G_{2,j}(x)^M + \int_{u}^{+\infty} f_{1,j}(x)G_{2,j}(x)^M d\bP(x)\\
	&\leq F_{1,j}(u)+G_{2,j}(u)^M\\
	\end{align*}
	
	Then $\forall \beta \in (0,1)$ and for all sequence $u_{r}$:
	
	\begin{align*}
	\sum_{r=1}^T \sum_{j=\mathbf{f_r}}^{\lfloor(\log r)^2 \rfloor} \alpha_k(\left\lfloor \beta r/(\log r)^2 \right\rfloor, j) & \leq \sum_{r=1}^T \sum_{j=\mathbf{f_r}}^{\lfloor(\log r)^2 \rfloor} (1-u_{r})^{\left\lfloor \beta r/(\log r)^2 \right\rfloor} + e^{-j I_1(\mu_k)} u_{r}\\
	& \leq \sum_{r=1}^T \log(r)^2(1-u_r)^{\left\lfloor \beta r/(\log r)^2 \right\rfloor} + \sum_{r=1}^T \frac{e^{-f_r I_1(\mu_k)}}{1-e^{-I_1(\mu_k)}}u_r
	\end{align*}
	We now choose $u_r$ of the form $u_r=\frac{(\log r)^k}{r}$. Indeed, for the first term we get: 
	\begin{align*}
	\log(r)^2(1-u_r)^{\left\lfloor \beta r/(\log r)^2 \right\rfloor} & \leq \log(r)^2 \exp\left(-\left\lfloor \beta r/(\log r)^2 \right\rfloor \frac{(\log r)^k}{r}\right) \\
	& \leq \log(r)^2 \exp\left(-(\beta r/(\log r)^2-1) \frac{(\log r)^k}{r}\right)\\
	& \leq \zeta_k (\log r)^2 \exp\left(-\beta (\log r)^{k-2}\right)\\
	& = o(r^{-1}) \text{ for } k>2
	\end{align*}	
	where $\zeta_k$ is an upper bound for $\exp(\frac{(\log r)^k}{r})$. From now on we work with $k=3$.  For the second term we have to study $u_r e^{-I_1(\mu_k)f_r}$:
	\begin{align*}
	u_r e^{-I_1(\mu_k)f_r} &= \exp\left(\log u_r -I_1(\mu_k)f_r\right)\\
	& = \exp\left(3 \log \log r - \log r -I_1(\mu_k)f_r\right)
	\end{align*}
	We see that $u_r e^{-I_1(\mu_k)f_r}=o(r^{-1})$ if $3 \log \log r -I_1(\mu_k)f_r \rightarrow 0$. This condition is achieved if $f_r/\log\log r \rightarrow +\infty$, hence an exploration rate satisfying this condition ensures the generalized balance condition for any exponential family of distributions with one parameter for this rate.
\end{proof}

We point out the fact that this forced exploration is not necessary in \SDA{}, as we proved that some distributions (Bernoulli, Gaussian, Poisson) directly satisfy the balance condition defined in Assumption 2. of Theorem~\ref{th::log_regret_sda}. We leave for future research an in-depth analysis of the properties of different families of distribution that could exhibit general conditions for the use of forced exploration (or not) in the \SDA{} family.

\end{document}